\newcommand{\cmp}{q_{\pi^\star}} 
\newcommand{\cOO}[1]{\mathcal{O}\brac{#1}}
\newcommand{\indicator}{\mathbb{I}} 
\newcommand{\sumkK}{\sum_{k = 1}^K}
\newcommand{\EE}[1]{\mathbb{E}\left[ #1\right]}
\newcommand{\set}[1]{\left\{#1\right\}}
\newcommand{\brac}[1]{\left(#1\right)}
\newcommand{\VV}[1]{\mathbb{V} \left[#1\right]}
\newcommand{\KwInitialize}[1]{\State \textbf{Initialize: }#1}
\DeclarePairedDelimiter{\inprod}{\langle}{\rangle}
\DeclarePairedDelimiter{\inner}{\langle}{\rangle}
\DeclarePairedDelimiter{\norm}{\|}{\|}
\DeclarePairedDelimiter{\ceil}{\lceil}{\rceil}
\newcommand{\brb}[1]{\bigl(#1\bigr)}
\newcommand{\Brb}[1]{\Bigl(#1\Bigr)}
\newcommand{\Bbrb}[1]{\Biggl(#1\Biggr)}
\newcommand{\bsb}[1]{\bigl[#1\bigr]}
\newcommand{\Bsb}[1]{\Bigl[#1\Bigr]}
\newcommand{\bcb}[1]{\bigl\{#1\bigr\}}
\newcommand{\Bcb}[1]{\Bigl\{#1\Bigr\}}
\newcommand{\Reals}{\mathbb{R}}
\newcommand{\E}{\mathbb{E}}
\newcommand{\I}{\mathbb{I}}
\newcommand{\pr}{\mathbb{P}}
\newcommand{\cA}{\mathcal{A}}
\newcommand{\cD}{\mathcal{D}}
\newcommand{\cO}{\mathcal{O}}
\newcommand{\cS}{\mathcal{S}}
\newcommand{\cR}{\mathbb{R}}
\DeclareMathOperator*{\argmin}{arg\,min}
\newcommand*{\ber}{\mathrm{Ber}}
\newtheorem{theorem}{Theorem}[section]
\newtheorem{lemma}[theorem]{Lemma}
\newtheorem{remark}[theorem]{Remark}
\title{Stochastic Shortest Path with Sparse Adversarial Costs}
\author{
Emmeran Johnson$^{1}$\thanks{Equal contribution, corresponding author: \texttt{emmeran.johnson17@imperial.ac.uk}} \And Alberto Rumi$^{2,3}$\footnotemark[1] \And Ciara Pike-Burke$^1$\And Patrick Rebeschini$^4$
}
\begin{document}

\maketitle
\vspace{-0.8cm}
\begin{center}
$^{1}$ Department of Mathematics,
Imperial College London \\
$^{2}$ Università degli Studi di Milano \\
$^{3}$ CENTAI Institute \\
$^{4}$ Department of Statistics, University of Oxford \\
\end{center}

\begin{abstract}
  We study the adversarial Stochastic Shortest Path (SSP) problem with sparse costs under full-information feedback. In the known transition setting, existing bounds based on Online Mirror Descent (OMD) with negative-entropy regularization scale with $\sqrt{\log S A}$, where $SA$ is the size of the state-action space. While we show that this is optimal in the worst-case, this bound fails to capture the benefits of sparsity when only a small number $M \ll SA$ of state-action pairs incur cost. In fact, we also show that the negative-entropy is inherently non-adaptive to sparsity: it provably incurs regret scaling with $\sqrt{\log S}$ on sparse problems. Instead, we propose a family of $\ell_r$-norm regularizers ($r \in (1,2)$) that adapts to the sparsity and achieves regret scaling with $\sqrt{\log M}$ instead of $\sqrt{\log SA}$. We show this is optimal via a matching lower bound, highlighting that $M$ captures the effective dimension of the problem instead of $SA$. Finally, in the unknown transition setting the benefits of sparsity are limited: we prove that even on sparse problems, the minimax regret for any learner scales polynomially with $SA$.
\end{abstract}

\section{Introduction}

The Stochastic Shortest Path (SSP) problem is a fundamental model in reinforcement learning \citep{bertsekas1991analysis, tarbouriech2020no}, which describes tasks where an agent interacts with an environment over episodes and must reach a designated goal state within each episode while minimizing accumulated costs. This covers problems such as car navigation while trying to avoid traffic jams, or internet routing. Recently, this classical setting has been extended to the adversarial regime, where costs may vary arbitrarily between episodes \citep{rosenberg2020stochastic, chen2021minimax, chen2021finding, zhao2022dynamic} and the goal is to obtain theoretical guarantees robust to any cost-generation mechanism. 
Under \emph{full-information} feedback where the full cost vector is observed after each episode and known transitions, current algorithms achieve regret bounds that scale as $\cOO{\sqrt{DKT_\star \log SAT_\star}}$, where $D$ is the diameter (the smallest expected hitting time of any policy from any state), $T_\star$ is the expected hitting time of the optimal policy, $S$ is the number of states, $A$ is the number of actions, and $K$ is the number of episodes. These bounds are independent of any cost structure and are shown to be minimax optimal up to logarithmic factors in \cite{chen2021minimax}.

In SSP, the size of the state-action space $SA$ -- which we consider and refer to as the \emph{dimension} of the problem -- appears in the minimax regret as $\sqrt{\log SA}$. While in the worst-case this is unimprovable (we show this in \Cref{thm:failure_neg_ent}), many real-world problems have costs with structural properties that may be leveraged for improved regret. A common property often considered in the statistics and machine learning literature \cite{Wainwright_2019} is sparsity, which can naturally arise for SSP problems. For instance, in the car navigation example, the number of traffic jams is usually much smaller than the number of roads. Motivated by this, we consider \emph{sparse} SSP problems where $M$, the maximum number of state-action pairs with non-zero cost in an episode, can be much smaller than $SA$.

In such scenarios, the regret bounds should capture some dependence on $M$, reflecting an improvement in performance on easier sparse problems as $M \rightarrow 1$ and recovering the standard bounds on worst-case problems as $M \rightarrow SA$.
In fact, for the so-called experts setting\footnote{The experts setting is the single-state full-information feedback $A$-action online learning problem \cite{cesa1997use, FREUND1997119}.}
($S = 1$), the minimax regret scales with ${\sqrt{K M A^{-1} \log A}}$ instead of ${\sqrt{K \log A}}$ in the worst-case \cite{Kwon16sparse}, providing a polynomial improvement in the dimension $A$. Furthermore, this is achieved with Online Mirror Descent (OMD) with negative entropy regularization.
However for SSP where the problem dimension also includes the size of the state space $S$, we show that existing approaches also based on OMD with the same negative entropy regularization \cite{chen2021minimax} fail to exploit sparse costs. We construct a sparse SSP problem where this algorithm suffers $\sqrt{\log S}$ regret (\Cref{thm:failure_neg_ent}), providing no improvement in terms of $S$ compared to the minimax regret for the non-sparse worst-case $M = SA$ problem. 
The failure of existing SSP methods to exploit sparsity, let alone match the polynomial improvements from the experts setting, leads us to ask the following questions:

\textit{Does sparsity improve the minimax-regret in the full-information feedback SSP problem? How much?}

We answer the first question positively for the known-transition case by designing a family of regularizers based on $\ell_r$-norms for $r \in (1,2)$ for which we show a $\mathcal{O}(\sqrt{DKT_\star \log MT_\star})$ regret bound that depends logarithmically on the sparsity level $M$, rather than on the size of the state-action space $SA$, without requiring the knowledge of $M$ in advance. This family of regularizers interpolates between the negative entropy and the squared Euclidean norm (see Section \ref{sec:benefits_of_sparsity}), allowing flexibility for much weaker regularization on sparse points in sparse settings and recovering existing algorithms (and guarantees) in the non-sparse setting.

We show that the above $\sqrt{\log M}$ dependence on $M$ is unimprovable by constructing a matching lower bound. 
Interestingly, this establishes that the benefit of sparsity in SSP is logarithmic in $SA$ instead of polynomial, as in the simpler experts problem, thus answering the second question. It also highlights that $M$ plays the role of effective dimension, replacing the general dimension $SA$ in controlling the scaling of the minimax regret.

While the benefits of sparsity in SSP are only logarithmic, we emphasize that due to the often combinatorial nature of the state-action space, these improvements can be significant.
For instance, in many problems the size of the state-action space grows exponentially in some parameters, while the assigned costs remain linear or even constant.
This occurs in many real-world problems (e.g.\ \cite{path-finding, app9194037}) in which settings exploiting sparsity can lead to polynomial improvements.

Finally, it is natural to ask whether sparsity may provide similar benefits in the unknown transition setting. However, in \Cref{thm:lb:unknown-trans-ssp}, we show a lower bound with polynomial dependence on $SA$ in a sparse SSP instance with unknown transitions. This illustrates that in the unknown transition setting the sparsity level $M$ does not play the same role of effective dimension, and that the general dimension $SA$ is crucial in controlling the scaling of the minimax regret polynomially, motivating our focus on the known transitions setting. In particular, this result shows that sparse problems with combinatorial state-action spaces will remain very challenging.

\textbf{Our results provide a complete characterisation of the benefits of sparsity in removing dimension dependence (i.e.\ $SA$) for adversarial SSP problems under full-information feedback.} 

\subsection{Contributions}

We highlight our main contributions below:
\begin{itemize}[leftmargin=*]
    \item We design a family of $\ell_r$-norm ($r \in (1,2)$) regularizers for OMD that allows interpolation between the negative entropy and squared Euclidean norm, adjusting its geometry to the sparsity of the cost functions (see Section \ref{sec:benefits_of_sparsity}). The regularizer naturally plugs into the standard OMD analysis.
    
    \item We show OMD with the above regularizer achieves sparsity-adaptive regret bounds of order $\cOO{\sqrt{DKT_\star \log MT_\star}}$ (\Cref{thm:full_info_sparse_SSP_UB_base}). We also give a parameter-free version achieving the same bound (\Cref{thm:full_info_sparse_SSP_UB}) that does not require prior knowledge of the sparsity level $M$ nor the expected hitting time of the optimal policy $T_\star$ (the only unknown parameters).
    
    \item We establish a lower bound of order $\Omega\brac{\sqrt{DKT_\star \log M}}$ (\Cref{thm:sparse_full_info_LB}), matching our regret guarantees up to a logarithmic factor of $T_\star$ (already present in prior work \cite{chen2021minimax}) and improving over \cite{chen2021minimax} in the $M = SA$ non-sparse setting by including the $\sqrt{\log SA}$ dependence.

    \item We show that OMD with the negative entropy used in prior work \cite{rosenberg2020stochastic, chen2021minimax} suffers regret at least $\Omega(\sqrt{K \log S})$ even when $M = 3$ (\Cref{thm:failure_neg_ent}). This rules out the negative entropy as a viable regularizer in the sparse setting and provides justification for the use of our regularizer.

    \item We establish that results independent of $SA$ are not achievable in the unknown transitions setting via a lower bound in the sparse ($M = 1$) setting of order $\Omega\brac{D\sqrt{SAK}}$ (\Cref{thm:lb:unknown-trans-ssp}).
\end{itemize}

\paragraph{Technical Contributions:} 
Proving these results requires new technical ideas. For the general sparse lower-bound, we derive a result on the expectation of the maximum of asymmetric zero-mean random walks, generalizing the result for the symmetric case from \cite{orabona2015optimal}. The negative-entropy-specific lower-bound relies on the careful design of an MDP with skewed initial occupancy measures that highlights both the reasons for the failure of the negative entropy as well as the more general difficulty of the stochastic nature of SSP problems.

\subsection{Related works}

\textbf{Regret minimisation for SSP problems under full-information feedback} was initiated by a line of work studying stochastic costs \cite{tarbouriech2020no, rosenberg2020near, NEURIPS2021_367147f1, NEURIPS2021_eeb69a3c, chen2021implicit, pmlr-v162-chen22h, pmlr-v216-jafarnia-jahromi23a}. In the adversarial setting, it was first studied by \cite{rosenberg2020stochastic} in the known transition case. Their bounds were later improved by \cite{chen2021minimax}. There have since been many extensions: \cite{chen2021finding} consider the unknown transition setting, \cite{zhao2022dynamic} establish dynamic regret bounds, \cite{pmlr-v178-chen22a} consider a policy optimisation approach in the unknown transition setting.

\textbf{Regret minimisation for SSP problems under bandit feedback} where only the costs of the visited state-action pairs in an episode are revealed to the learner has also been studied both in the stochastic \cite{pmlr-v178-chen22a} and adversarial settings \cite{chen2021minimax, chen2021finding, pmlr-v178-chen22a}. In the adversarial known transition setting, the minimax regret is of the order $\sqrt{KDT_\star SA}$ (ignoring log terms) \cite{chen2021minimax}. It is an interesting future direction to study the sparse SSP problem with bandit feedback and understand if the regret scales with $\sqrt{M}$ instead of $\sqrt{SA}$, in which case $M$ would play the same role of effective dimension as in the setting we consider.


\textbf{Regret minimisation with sparse costs} was studied in the classical online learning setting \cite{Kwon16sparse} ($S=1$). The minimax regret goes from $\cO(\sqrt{K \log A})$ to $\cO(\sqrt{KMA^{-1}\log A})$ under full-information feedback (experts problem). For rewards instead of costs, it goes from ${\cO}(\sqrt{K \log A})$ to $\cO(\sqrt{K \log M})$, which matches the benefits of sparsity we establish for the SSP problem with costs. Note that we restrict our focus to costs since it is unclear how to interpret rewards within the SSP framework. Under bandit feedback, the sparse minimax regret goes from $\widetilde{\cO}(\sqrt{KA})$ to $\widetilde{\cO}(\sqrt{KM})$ for both rewards and costs \cite{Kwon16sparse, pmlr-v83-bubeck18a}. 
The above minimax regrets can also be achieved by sparse-agnostic methods \cite{Kwon16sparse, NEURIPS2023_9408564a}. Finally, sparsity was also considered in the case of stochastic losses by \cite{kwon2017sparse}.

\section{Preliminaries}\label{sec:preliminaries}

\subsection{Problem setting}

We consider the \emph{Stochastic Shortest Path} (SSP) problem with adversarial costs. The environment is modeled as a Markov Decision Process (MDP) $\mathcal{M} = (\mathcal{S}, \mathcal{A}, P, s_0, g)$ along with a sequence of cost functions $\{c_k\}_{k=1}^K$ chosen by an oblivious adversary over $K$ episodes. $\mathcal{S}$ is the state space with cardinality $S = |\mathcal{S}|$, and $s_0 \in \mathcal{S}$ is the fixed starting state. The goal state $g$ is a special absorbing state not included in $\mathcal{S}$. $\mathcal{A}$ is the action space with cardinality $A = |\mathcal{A}|$ and we assume for simplicity that it is the same in every state. Let $\Gamma = \mathcal{S} \times \mathcal{A}$ denote the set of all state-action pairs. The dynamics in the MDP are given by the \emph{known} transition function $P$, where $P(s' | s, a)$ specifies the probability of moving to state $s' \in \mathcal{S} \cup \{g\}$ after taking action $a$ in state $s$. 

Each episode begins in state $s_0$ and proceeds with the learner selecting actions until the goal state $g$ is reached. When the goal state is reached, the current episode ends and a new one begins. At the start of each episode $k$, the adversary selects a cost function $c_k: \Gamma \to [0, 1]$, assigning a cost to each state-action pair. We denote the sparsity level as $M = \max_k \sum_{(s,a) \in \Gamma} \I \bcb{c_k(s,a) > 0}$ the maximum number of non-zero costs in an episode. We work in the full-information setting where the entire function $c_k$ is revealed to the learner at the end of the episode. The objective is to minimize the total cost over all episodes, which requires a balance of minimizing the accumulated costs while ensuring the goal state is reached efficiently.

We use super-scripts to denote the time-step within an episode and sub-scripts to denote the episode: e.g.\ $(s_k^t, a_k^t)$ refers to the state-action pair at the $t$-th time-step of the $k$-th episode. We sometimes omit the sub-script when referring to an arbitrary episode. We now define some important concepts:
\begin{itemize}[leftmargin=*]
    \item A \textbf{stationary policy} $\pi$ is a mapping such that $\pi(\cdot | s)$ is a probability distribution over the choice of action $a \sim \pi(\cdot|s)$ in state $s$. A policy is called \textbf{proper} if it reaches the goal $g$ in finite time from any initial state in $\mathcal{S}$ with probability one, and improper if not. Let $\Pi_p$ be the set of all stationary proper policies. We assume the existence of at least one proper policy. 
    \item The \textbf{expected hitting time} $T^\pi(s)$ is the expected number of steps required to reach $g$ from state $s$ under $\pi$. Letting $I_\pi(s)$ be the random number of time-steps used to reach the goal state when executing a policy $\pi$ in an episode starting from state $s$, then $T^\pi(s) = \E[I_\pi(s)]$. For any proper policy $\pi$, $I_\pi(s)$ and $T^\pi(s)$ are finite for all $s \in \cS$. 
    \item The \textbf{fast policy} $\pi_f$ is the deterministic policy that minimizes the worst-state expected hitting time, and the \textbf{diameter} $D$ of the MDP is the corresponding expected hitting time:
    \begin{align*}
       \pi_f = \argmin_{\pi \in \Pi_p} \max_{s\in\cS} T^\pi(s), \qquad D = \max_{s \in \mathcal{S}} T^{\pi_f}(s) = \max_{s\in\cS}\min_{\pi \in \Pi_p}  T^\pi(s).
    \end{align*}
    Since the transition function $P$ is known, both the fast policy $\pi_f$ and the diameter $D$ can be computed offline prior to the learning process. We assume $D \geq 1$.
    \item The \textbf{cost-to-go} function $J^\pi_c : \cS \rightarrow [0, \infty)$ is the expected cost suffered during an episode executing policy $\pi$ and starting from state $s$, given a cost function $c$ and a proper policy $\pi$. It is defined as
    \begin{align*}
        J^\pi_c(s) = \E \Bsb{\sum_{t=1}^{I_\pi(s)} c(s^t,a^t) \Big| P, \pi, s^1 = s},
    \end{align*}
    where the expectation is with respect to the randomness in the action sampling and state transitions.
    We use $J^\pi_k$ to denote the cost-to-go from the initial state $s_0$ using the cost function $c_k$ in episode $k$.
    \item The \textbf{regret} $R_K$ is the primary measure of performance by which the learner is evaluated. It is the difference between the  total cost over all episodes of the policies $\pi_1,\dots, \pi_K$ chosen by the learner, and the total cost of the best proper deterministic policy in hindsight, $\pi^\star \in \arg\min_{\pi \in \Pi_p} \sum_{k=1}^K J_k^\pi$:
    \begin{align*}
        R_K = \sum_{k=1}^K \sum_{t=1}^{I_{\pi_k}(s_0)} c_k(s_k^t, a_k^t) - \sum_{k=1}^K J_k^{\pi^\star}.
    \end{align*}
    \item The \textbf{occupancy measure} $q_\pi \in \cR^{\Gamma}_{\geq 0}$ of a proper policy $\pi$ is the expected number of visits to state-action pairs in an episode executing policy $\pi$ starting from $s_0$:
    \begin{align*}
        q_\pi(s,a) = \E \Bsb{\sum_{i=1}^{I_\pi(s_0)} \indicator \bcb{s^i = s, a^i = a} \Big| P, \pi, s^1 = s_0}\,.
    \end{align*}
    The marginal $q_\pi(s) = \sum_{a \in \mathcal{A}} q_\pi(s,a)$ gives the expected number of visits to state $s$. 
    Given a vector $q \in \cR^\Gamma_{\geq 0}$, if it corresponds to a valid occupancy measure, the corresponding policy $\pi_q$ can be recovered via normalization as $\pi_q(a|s) = q(s,a) / \sum_{a'} q(s,a')$ \cite{Zimin13, rosenberg2020stochastic}.
\end{itemize}

\subsection{SSP as online linear optimisation and online mirror descent}

Occupancy measures allow the cost-to-go to be expressed in a linear form:
\begin{align*}
    J^\pi_k = \sum_{(s,a) \in \Gamma} q_\pi(s,a) c_k(s,a) = \langle q_\pi, c_k \rangle.
\end{align*}
If the learner executes a stationary proper policy $\pi_k$ in episode $k$, the expected regret can thus be reformulated as an online linear optimisation problem on the space of occupancy measures:
\begin{align*}
    \E \bsb{R_K} = \sum_{k=1}^K \Bcb{ J_k^{\pi_k} - J^{\pi^\star}_k} = {\sum_{k=1}^K \inner{q_{\pi_k} - q_{\pi^\star}, c_k}}.
\end{align*}
Online linear optimisation is a well studied problem and can be solved using Online Mirror Descent (OMD) (see e.g.\ \cite{orabona2019modern}). In the SSP framework, OMD is applied on the space of occupancy measures corresponding to proper policies with expected hitting time bounded by some $T > 0$ defined as: 
\begin{align*}
    \Delta(T) = \bigg\{q \in \Reals^{\Gamma} \,:\, &\sum_{(s,a) \in \Gamma} q(s,a) \le T, \quad \forall s \in \cS: \,\,\sum_{a \in \cA} q(s,a) - \sum_{(s',a') \in \Gamma} P(s\,|\, s',a')q(s',a') = \mathbb{I}\set{s = s_0}\bigg\}\,.
\end{align*}
The first constraint ensures the expected hitting time is bounded by $T$, while the second is a flow constraint ensuring the vector corresponds to the occupancy measure of a policy. The regret bounds of OMD will hold against any fixed comparator policy as long as $T$ is large enough such that $\Delta(T)$ contains the occupancy measure of the optimal policy, i.e.\ $q_{\pi^{\star}} \in \Delta(T)$ or $T \geq T_\star$ where we denote by $T_\star = T^{\pi^\star}(s_0)$ the expected hitting time of $\pi^\star$.
OMD with a strictly convex differentiable regularizer $\psi$ and step-size $\eta$ selects occupancy measures computed through the update 
\begin{align}\label{eq:mirror_descent}
    q_1 = \argmin_{q \in \Delta(T)} \psi(q), \qquad q_{k+1} = \argmin_{q \in \Delta(T)} \Bcb{ \eta \cdot \inner{q,c_k} + D_\psi(q,q_k)} \,,
\end{align}
where $D_\psi(x, y) = \psi(x) - \psi(y) - \langle \nabla \psi(y), x-y \rangle$ is the Bregman divergence with respect to $\psi$. This update can be computed efficiently for all the regularizers we will discuss (see Appendix \ref{apx:implementation}). As discussed in the previous section, we can easily recover via normalization the corresponding policy $\pi_{q_k}$ that will be executed by the learner.

If the regularizer satisfies for some $\alpha > 0$, any $q \in \Reals^{\Gamma}$ and all $k \geq 1$:
\begin{align}\label{eq:regularizer_condition}
    \nabla\psi(q) \in [\nabla\psi(q_k), \nabla\psi(q_k) - \eta c_k] \implies \nabla^2\psi(q) \succeq \alpha \nabla^2\psi(q_k)\,,
\end{align}
(this is satisfied by many common regularizers), then a standard result (see e.g.\ Theorem 6 in \cite{pmlr-v83-bubeck18a}, Theorem 5.5 in \cite{bubeck2012regret}) gives the following general regret bound for OMD:
\begin{align}\label{eq:OMD_general_reg_bound}
    \sumkK\inner{q_k - \cmp, c_k} \le \underbrace{\frac{\psi(\cmp) - \psi(q_1)}{\eta}}_{\mathrm{Penalty}} + \underbrace{\frac{\eta}{2\alpha}\sumkK \norm{c_k}_{\nabla^2\psi(q_k)^{-1}}^{2}}_{\mathrm{Stability}}
\end{align}
where $\norm{q}_{A}^{2} = \sum_{s,a,s',a'} q(s,a) A((s,a), (s',a')) q(s',a')$ for a matrix $A \in \Reals^{\Gamma \times \Gamma}$.
Various regret bounds can be obtained by instantiating the above with different regularizers. In particular, \cite{chen2021minimax} use the negative-entropy to obtain a $\cOO{\sqrt{DKT_\star \log SAT_\star}}$ regret bound.

\section{Failure of negative entropy regularization}\label{sec:failure_neg_ent}

In the general non-sparse setting, \cite{chen2021minimax} use the negative entropy to achieve a regret of $\mathcal{O}\brb{\sqrt{DKT_\star \log SAT_\star}}$, which in the non-sparse setting has optimal dependence on $SA$ (as we show later in \Cref{thm:full_info_sparse_SSP_UB}). Despite this success, the negative entropy fails to benefit from sparsity in its dependence on $S$, as shown by the result below. As we will see in Section \ref{sec:benefits_of_sparsity}, this establishes the negative entropy as a sub-optimal choice for sparse SSP problems.\\


\begin{restatable}{theorem}{FailureNegEntropy}\label{thm:failure_neg_ent}
    For any $S \geq 6$, there exists an SSP instance with a fixed horizon of $3$, sparsity level $M = 3$, an action space of size $A = 2$ and state space of size $S$ such that the regret of OMD (\ref{eq:mirror_descent}) with negative-entropy regularization and any step-size $\eta>0$ after $K$ episodes is $\EE{R_K} = \Omega\brb{\min\bcb{\sqrt{K \log S}, K}}$.
\end{restatable}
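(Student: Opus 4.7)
The plan is to construct an asymmetric SSP instance on which the neg-entropy's initial occupancy measure $q_1$ is heavily skewed (mass of order $1/\sqrt{S}$ on one decision branch), and then play a sparse cost that forces OMD into a learning phase of length $\Theta(\log S/\eta)$. First, I would define a $3$-step MDP with states $\{s_0, s_1, s_{\mathrm{good}}, s_{\mathrm{bad},1},\ldots,s_{\mathrm{bad},S-3}, g\}$ and actions $\{a_1, a_2\}$: both actions at $s_0$ transition deterministically to $s_1$; at $s_1$, action $a_1$ deterministically goes to $s_{\mathrm{good}}$ while $a_2$ transitions uniformly at random to one of the $S-3$ states $s_{\mathrm{bad},i}$; each of $s_{\mathrm{good}}$ and $s_{\mathrm{bad},i}$ transitions deterministically to $g$ under both actions. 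The diameter $D$ and the optimal hitting time $T_\star$ are both $O(1)$.

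Next I would compute $q_1 = \arg\min_{q\in\Delta(T)}\psi(q)$ for $\psi$ the negative entropy. Parametrizing the whole occupancy by the single free scalar $p := q(s_1, a_1)$ and eliminating the remaining coordinates via the flow constraints, $\psi(q)$ reduces (up to an additive constant) to $2p\log p + 2(1-p)\log(1-p) - (1-p)\log(S-3)$. The first-order condition yields $p_1 = 1/(1+\sqrt{S-3})$, which is the promised skewness: neg-entropy prefers $a_2$ at $s_1$ by a factor of $\sqrt{S-3}$ because spreading mass over the $S-3$ bad successors is entropically rewarded. A direct computation then gives $\psi(q_{\pi^\star}) - \psi(q_1) = \Theta(\log S)$ for any comparator $\pi^\star$ that concentrates on $(s_1, a_1)$.

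I would then let the adversary play the deterministic sparse cost $c_k(s_1, a_2) = 1$ and zero elsewhere ($M = 1 \le 3$), so that $\pi^\star$ always picks $a_1$ at $s_1$ with $J_k^{\pi^\star} = 0$ and hence $\E[R_K] = \sum_{k=1}^K (1-p_k)$ where $p_k := q_k(s_1, a_1)$. Using the symmetry of the Bregman divergence of the neg-entropy, OMD preserves uniform play at the downstream leaves and the update on $p_k$ reduces to a two-arm Hedge with effective step size $\eta/2$ and biased initialization $p_1 = 1/(1+\sqrt{S-3})$. Tracking the log-odds $Y_k := \log(p_k/(1-p_k))$, which evolve deterministically as $Y_{k+1} = Y_k + \eta/2$ from $Y_1 = -\tfrac{1}{2}\log(S-3)$, I would show that $p_k \le 1/2$ and hence $1-p_k \ge 1/2$ for every $k \le \log(S-3)/\eta$, yielding $\E[R_K] \ge \tfrac{1}{2}\min\{K,\log(S-3)/\eta\}$. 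At the critical $\eta \asymp \sqrt{\log S/K}$ this matches the claimed $\Omega(\sqrt{K\log S})$.

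The hard part will be making the lower bound uniform in $\eta$, specifically for $\eta \gg \sqrt{\log S/K}$, where the above deterministic cost drives $p_k \to 1$ within $O(\log S/\eta) \ll \sqrt{K\log S}$ rounds. The plan to close this gap is to augment the cost with randomized perturbations supported on the $(s_{\mathrm{bad},i}, a)$ pairs, exploiting the fact that neg-entropy forces OMD to treat all $S-3$ bad successors symmetrically: the random walk this induces on $Y_k$, together with the uniform branching at $(s_1, a_2)$, should contribute a stability-type $\Omega(\eta K)$ lower bound matching the standard OMD tradeoff at large $\eta$. Stitching the small-$\eta$ and large-$\eta$ arguments into a single instance whose expected regret is $\Omega(\sqrt{K\log S})$ for every $\eta$ is precisely the place where, as the paper remarks, ``the more general difficulty of the stochastic nature of SSP'' shows up.
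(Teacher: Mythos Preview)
Your small-$\eta$ argument is essentially the paper's ``right branch'': the skewed initialization $p_1 \approx 1/\sqrt{S}$ and the two-arm Hedge reduction with effective step $\eta/2$ are exactly what the paper computes, and your bound $\E[R_K] \ge \tfrac{1}{2}\min\{K,\log(S-3)/\eta\}$ matches theirs. The genuine gap is your plan for large $\eta$.

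Perturbing costs on the $(s_{\mathrm{bad},i},a)$ pairs cannot produce an $\Omega(\eta K)$ term. Those states are reachable only through $(s_1,a_2)$, so any cost placed there only raises the effective cost of $a_2$ and never penalizes $a_1$. For large $\eta$ the log-odds $Y_k$ still drift upward at rate at least $\eta/2$ (the drift can only increase), so $p_k \to 1$ in $O(\log S/\eta)$ rounds and thereafter neither the learner nor $\pi^\star$ visits any bad state; the perturbations become invisible and the regret stays $O(\log S/\eta)$. No random walk on $Y_k$ helps here because the comparator is fixed at $a_1$ and the learner is already playing it.

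The paper's fix is structurally different from what you sketch: it does not try to extract the $\Omega(\eta K)$ term from the skewed branch at all. Instead it adds a \emph{second, unskewed} sub-problem in parallel. From $s_0$ the process moves with probability $1/2$ to a ``left'' state $s_0^L$ (a plain 2-arm experts node with $q_1(s_0^L,a_1)=q_1(s_0^L,a_2)=1/4$) and with probability $1/2$ to the ``right'' state $s_0^R$ (your skewed construction). On the left it plays the deterministic alternating costs $c_k(s_0^L,a_1)=\tfrac{1+(-1)^k}{2}$, $c_k(s_0^L,a_2)=\tfrac12$; a direct computation of the OMD iterates there yields regret $\tfrac{K}{16}\min\{\eta/5,\tfrac12\}$, the standard stability penalty for Hedge under a switching adversary. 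Summing the two branches gives $\Omega(\eta K)+\Omega(\min\{\log S/\eta,K\})$ uniformly in $\eta$, hence $\Omega(\min\{\sqrt{K\log S},K\})$. Note also that the paper's construction is fully deterministic (no randomized costs) and achieves exactly $M=3$ via the three pairs $(s_0^L,a_1),(s_0^L,a_2),(s_0^R,a_2)$; your single cost at $(s_1,a_2)$ gives $M=1$, and any attempt to add further nonzero costs on bad states would need to respect the $M\le 3$ budget.
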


This result shows that despite the SSP instance being sparse ($M = 3$), the regret of OMD with negative entropy regularization nevertheless scales as $\sqrt{\log S}$, which is the same dependence on $S$ as in the non-sparse setting. For sparse problems, the negative entropy provides no improvement on the regret with respect to $S$. 
This highlights that existing approaches and regularizers are inadequate to appropriately exploit sparse problems and motivates considering alternate regularizers specifically designed for the geometry of sparse problems, as we do in the next section.


To better understand the failure of negative entropy regularization in sparse settings, we highlight the main intuition behind the lower bound construction and defer the details of the proof to Appendix \ref{app:proof_failure_neg_ent}.

\paragraph{Proof intuition:} The key idea is to reduce SSP to an experts problem with 2 actions and a heavily skewed initial distribution over the actions.
The initial occupancy measure played by OMD in (\ref{eq:mirror_descent}) is $q_1 = \argmin_{q \in \Delta(T)} \psi(q)$. For most regularizers, including the negative entropy, this encourages $q_1$ to be uniform across the state-action space while maintaining the constraints on the flow and expected hitting time. Since we consider a fixed-horizon MDP, only the flow constraint is relevant.

Consider the SSP problem shown in Figure \ref{fig:MDP_neg_ent_fail_sketch} with $N = S-2$. Since ${s_1,...,s_N}$ constitute a large majority of the states (especially for large $N$), $\psi(q)$ is mainly affected by the values of $q$ in these states. 
In order to minimize $\psi(q)$, $q_1$ needs to ensure the expected number of visits to these $N$ states is sufficiently high. 
However since for any $q$ and any $i \geq 1$, $q(s_i) = \frac{1}{N} q(s_0, a_2)$, for $q_1(s_i)$ to be sufficiently large then $q_1(s_0, a_2)$ needs to be much larger (by a factor of $N$). This results in $q_1$ being heavily skewed towards $a_2$ in $s_0$. For the negative entropy, this gives specifically $q_1(s_0, a_1) \approx \frac{1}{\sqrt{N}}$. 

If the costs in all states but $s_0$ are set to $0$, the problem is sparse ($M = 2$) and reduces to a experts problem with 2 actions where the initial probability for the first action, which in our case is $q_1(s_0, a_1)$, scales as $1/\sqrt{N}$. The regret for OMD with the negative entropy in this setting can be shown to scale for any step-size at least as $\Omega(\sqrt{K \log N}) = \Omega(\sqrt{K\log S})$, providing the dependence from the statement of the theorem.
To prove this formally for the SSP reduction, we use the above construction coupled with a non-skewed reduction and careful setting of the costs. We include the details in Appendix \ref{app:proof_failure_neg_ent}.

Finally, we remark that this failure comes from the negative entropy stretching euclidean distance near the boundary of the space in such a way that two nearby points in terms of euclidean distance can be arbitrarily far in terms of negative entropy. This makes it hard for OMD to recover from the initial occupancy measure $q_1(s_0,a_1) \approx 1/\sqrt{N}$ ($\rightarrow 0$ as $N$ increases) unless the step-size is unreasonably large.
This property does not generalize to all regularizers and in fact provides insights for designing a regularizer to appropriately handle sparsity. In particular, the regularizer we consider in the next section does not suffer from the same issue because the stretching of euclidean distance is finite since its gradient does not diverge at the boundary (i.e.\ as $q(s,a) \rightarrow 0$) unlike the negative entropy.

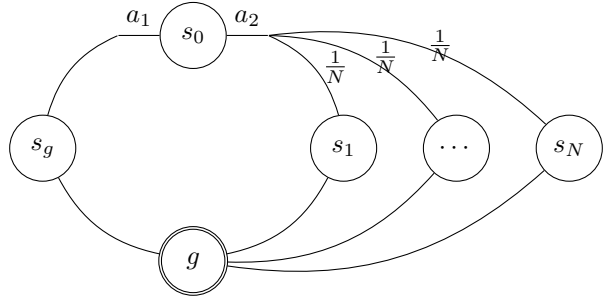
\begin{figure}[tbp]
\centering    
\begin{center}
\begin{tikzpicture}[node distance=2cm and 3cm, on grid, auto]

\node[state] (s0) {$s_0$};

\node[state, below right=1.5cm and 2cm of s0] (s1) {$s_1$};
\node[state, below right=1.5cm and 3.5cm of s0] (s2) {$\cdots$};
\node[state, below right=1.5cm and 5cm of s0] (sN) {$s_N$};

\node[state, below left=1.5cm and 2cm of s0] (sg1) {$s_g$};

\node[state, accepting, below=3cm of s0] (g) {$g$};


\node[draw=none, minimum size=0pt, inner sep=0pt, right=1cm of s0] (split_right) {};
\node[draw=none, minimum size=0pt, inner sep=0pt, left=1cm of s0] (split_left) {};

\path (s0) edge node[above] {$a_1$} (split_left);
\path (split_left) edge[bend right=25] (sg1);

\path (s0) edge node[above] {$a_2$} (split_right);
\path (split_right) edge[bend left=25] node[right] {$\frac{1}{N}$} (s1);
\path (split_right) edge[bend left=25] node[right] {$\frac{1}{N}$} (s2);
\path (split_right) edge[bend left=25] node[right] {$\frac{1}{N}$} (sN);


\path (sg1) edge[bend right=25] (g);
\path (s1) edge[bend left=25] (g);
\path (s2) edge[bend left=25] (g);
\path (sN) edge[bend left=25] (g);

\end{tikzpicture}
\end{center}
\caption{MDP for the reduction to a skewed experts problem with 2 actions: $\cS = \bcb{s_0, s_g, s_1,...,s_N}$ ($N=S-2$), $\cA = \bcb{a_1, a_2}$. The transitions are given by $p(s_g|s_0, a_1) = 1, p(g|s_g, a) = 1$ for all $a \in \cA$, for $i \geq 1$: $p(s_i|s_0,a_2) = 1/N, p(g|s_i, a) = 1$ for all $a \in \cA$.}
\label{fig:MDP_neg_ent_fail_sketch}
\end{figure}

\section{The benefits of sparsity}\label{sec:benefits_of_sparsity}

In this section, we show that it is possible to achieve a regret bound of order $\mathcal{O}\brb{\sqrt{DKT_\star \log(MT_\star)}}$, where $M$ is the maximum number of non-zero entries in the cost. This is our main result and together with the lower bound in \Cref{thm:sparse_full_info_LB} establishes that the sparsity level $M$ acts as a measure of effective dimension instead of the state-action space size $SA$ for SSP with full-information feedback.

In the previous section, we showed and discussed that the negative entropy, the regularizer used in OMD by existing methods, is inadequate to handle sparse SSP problems. 
Motivated by this failure, we consider alternate regularizers. 
However, identifying a suitable regularizer poses two key challenges. Firstly, it must work for SSP and the associated technical complexities compared to other simpler online learning problems. In particular, it needs to match the dependence in terms of the other non-sparsity-related quantities appearing in the regret of the negative entropy (i.e. $D, T_\star, K$). Second, it must explicitly leverage sparsity to improve performance.
We propose the following family of regularizers parameterised by $p > 1$:
\begin{equation}\label{eq:regularizer}
    \psi_p(q) = p \cdot \Brb{-1 + \norm{q}^{1 + 1/p}_{1 + 1/p}} = p \cdot \Brb{-1 + \sum_{s\in \cS} \sum_{a \in \cA} |q(s,a)|^{1+1/p}} \,.
\end{equation}
\begin{figure}
    \centering
    \includegraphics[width=0.5\linewidth]{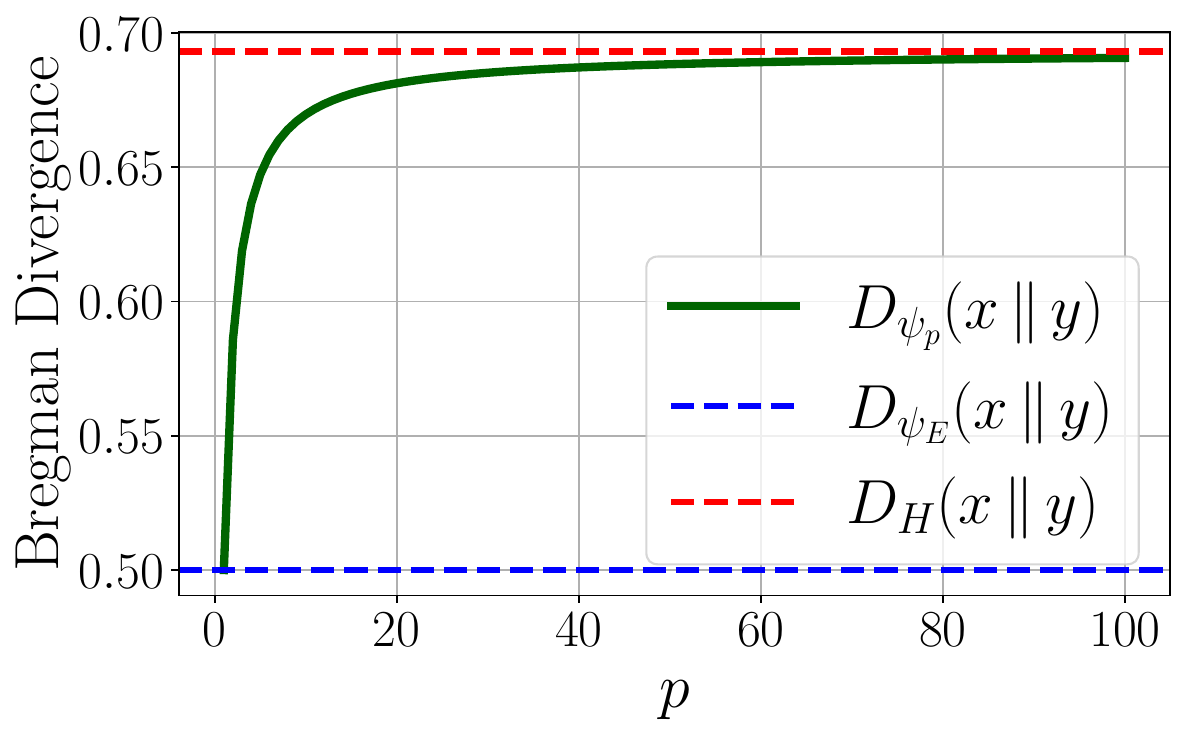}
    \caption{Bregman Divergence between a deterministic distribution $x = [0,1]$ and the uniform distribution $y = [1/2,1/2]$ for our regularizer $\psi_p$, squared Euclidean norm $\psi_E$ and negative entropy $H$ for increasing values of $p$.}
    \label{fig:representation}
\end{figure}
As $p \rightarrow \infty$, the regularizer in \Cref{eq:regularizer} converges to the negative entropy. On the other hand, as $p \rightarrow 1$, the regularizer converges to the squared Euclidean norm that enforces much weaker regularization on sparse points. Therefore, $\psi_p$ allows smooth interpolation between dense and sparse regimes via the tunable parameter $p$ (see \Cref{fig:representation} for a comparison). 
In particular, $\psi_p$ for small $p$ induces an OMD update that is able to easily move to and away from the boundary of the space, making it robust to the skewed initial occupancy measure on the SSP instance which caused the failure of the negative entropy in Section \ref{sec:failure_neg_ent}.
The parameter $p$ also controls a trade-off between the stability and penalty term in (\ref{eq:OMD_general_reg_bound}), which ultimately will enable the removal of the dependence on $SA$.

Versions of this family of regularizers can be found in the convex optimization literature \cite{juditsky:hal-00981863, Nesterov_Nemirovski_2013}. As far as we are aware, its use with OMD is novel. A regularizer involving an $r$-norm with $r \in (1,2]$ has been used but the norm is squared rather than to the power of $r$ (see e.g. Section 6.7 in \cite{orabona2019modern}). Our regularizer is also similar in flavor to the Tsallis-entropy in the sense that it converges to the negative entropy in the limit of its parameter. 

We note that OMD with the above regularizer can be implemented efficiently for any $p$: the projection step of OMD over $\Delta(T)$ can be written as a convex optimization problem as in \cite{rosenberg2020stochastic}, which can be solved efficiently (details in \Cref{apx:implementation}).


We can now turn to our main result, which establishes regret bounds that scale with the sparsity level $M$ for OMD with the regularizer in (\ref{eq:regularizer}) when $M$ is known.\\

\begin{restatable}{theorem}{FullInfoBaseCase}\label{thm:full_info_sparse_SSP_UB_base}
    Consider OMD with $\psi_p$ as regularizer. If $T>e$ is such that $\cmp \in \Delta(T)$, $\eta = \sqrt{\frac{pT^{1 + 1/p}}{KDM^{1/p}}}$, $p = \log(TM)$, then $\EE{R_K} \le \cO\Brb{\sqrt{DKT\log(MT)}}$.
\end{restatable}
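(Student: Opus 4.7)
The approach is to apply the general OMD regret bound (\ref{eq:OMD_general_reg_bound}) to the regularizer $\psi_p$, carefully bound the penalty and stability terms so that sparsity enters the latter, and then combine with a comparison against the fast policy to obtain the $D$ factor in the final rate.

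I would first check the Hessian condition (\ref{eq:regularizer_condition}). For $\psi_p$, $\nabla\psi_p(q)_{(s,a)} = (p+1)q(s,a)^{1/p}$ and $\nabla^2\psi_p(q)_{(s,a),(s,a)} = \tfrac{p+1}{p}q(s,a)^{-(p-1)/p}$. Since $c_k \ge 0$, each point $q$ on the gradient segment satisfies $q(s,a)^{1/p} = q_k(s,a)^{1/p} - \theta c_k(s,a)/(p+1) \le q_k(s,a)^{1/p}$, so $q \le q_k$ coordinate-wise; as the Hessian is coordinate-wise decreasing in $q$, this yields $\nabla^2\psi_p(q) \succeq \nabla^2\psi_p(q_k)$, so (\ref{eq:regularizer_condition}) holds with $\alpha = 1$. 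I would also bound the penalty: since each $q^\star(s,a) \le \sum q^\star \le T$, we get $\|q^\star\|_{1+1/p}^{1+1/p} = \sum q^\star(s,a) \cdot q^\star(s,a)^{1/p} \le T^{1/p}\sum q^\star(s,a) \le T^{1+1/p}$, giving $\psi_p(q^\star) - \psi_p(q_1) \le p(T^{1+1/p}+1)$, so the penalty is $\cO(pT^{1+1/p}/\eta)$.

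The central step is bounding the stability while exploiting both sparsity and the small per-episode expected cost of the fast policy. Using $c_k(s,a)^2 \le c_k(s,a)$ (since $c_k \in [0,1]$) followed by H\"older's inequality with conjugates $p$ and $p/(p-1)$,
\[
\sum_{s,a} c_k(s,a)^2 q_k(s,a)^{(p-1)/p} \le \Bigl(\sum_{s,a} c_k(s,a)\Bigr)^{1/p} \Bigl(\sum_{s,a} c_k(s,a) q_k(s,a)\Bigr)^{(p-1)/p} \le M^{1/p} (J_k^{\pi_k})^{(p-1)/p},
\]
where sparsity gives $\sum_{s,a} c_k(s,a) \le M$ and $\inner{q_k,c_k} = J_k^{\pi_k}$ is the expected cost of $\pi_k$. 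Since $x^{(p-1)/p} \le x+1$ for $x \ge 0$, summing over $k$ bounds the total stability by $M^{1/p}(\sum_k J_k^{\pi_k} + K)$. To turn $\sum_k J_k^{\pi_k}$ into $\cO(KD + R_K)$, I would compare to the fast policy: since $\pi^\star$ is optimal over all proper policies (in particular beats $\pi_f$ on cumulative cost), and $J_k^{\pi_f} \le D \cdot \|c_k\|_\infty \le D$, we get $\sum_k J_k^{\pi_k} = \sum_k J_k^{\pi^\star} + \E[R_K] \le KD + \E[R_K]$.

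Combining yields the implicit inequality
\[
\E[R_K] \le \cO\bigl(pT^{1+1/p}/\eta\bigr) + \cO\bigl(\eta M^{1/p}(KD + \E[R_K])\bigr),
\]
which, provided $\eta M^{1/p}$ is small enough to absorb the $\eta M^{1/p}\E[R_K]$ term into the left-hand side, simplifies to $\E[R_K] \le \cO\bigl(pT^{1+1/p}/\eta + \eta KD M^{1/p}\bigr)$. Substituting $\eta = \sqrt{pT^{1+1/p}/(KDM^{1/p})}$ balances the two terms to give $\E[R_K] \le \cO\bigl(\sqrt{pT^{1+1/p}KDM^{1/p}}\bigr)$; the choice $p = \log(TM)$ makes both $T^{1/p} \le e$ and $M^{1/p} \le e$, producing the claimed $\cO(\sqrt{DKT\log(MT)})$ bound. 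The main obstacle I anticipate is the stability step: obtaining a bound that scales with $J_k^{\pi_k}$ rather than $T$ is what allows the fast-policy comparison to contribute the crucial factor of $D$ instead of $T$, so the H\"older argument coupling sparsity ($\sum c_k \le M$) with the cost identity ($\sum c_k q_k = J_k^{\pi_k}$) is where all the action lies.
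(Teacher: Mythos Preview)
Your proposal is correct and follows essentially the same route as the paper. The only cosmetic difference is that where the paper applies Jensen's inequality to the concave function $x^{1-1/p}$ against the probability measure $c_k/\|c_k\|_1$, you apply H\"older with exponents $p$ and $p/(p-1)$ after splitting $c_k = c_k^{1/p}\cdot c_k^{(p-1)/p}$; these two arguments are equivalent and produce the identical bound $M^{1/p}\inner{c_k,q_k}^{(p-1)/p}$. Your rearrangement via $\sum_k J_k^{\pi_k} = \sum_k J_k^{\pi^\star} + \E[R_K] \le KD + \E[R_K]$ is exactly the paper's move of shifting $\sum_k\inner{q_k,c_k}$ to $\sum_k\inner{q_{\pi^\star},c_k}$ and bounding the latter by $KD$ via the fast policy.
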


We present below the outline of the proof and include the missing details in \Cref{apx:full_info_base_case}.

\begin{proof}
    It can be shown that $\psi_p$ satisfies the condition (\ref{eq:regularizer_condition}) with $\alpha = 1$, allowing us to use the bound in (\ref{eq:OMD_general_reg_bound}) as a starting point. Using that $q_\star \in \Delta(T)$, we can bound the $\mathrm{Penalty}$ term:
    \begin{align}\label{eq:sparse_UB_proof_penaly_term}
        \psi_p(q_\star) - \psi_p(q_1) = p \left( \norm{q_\star}^{1 + 1/p}_{1 + 1/p} - \norm{q_1}^{1 + 1/p}_{1 + 1/p} \right) \leq p \cdot \norm{q_\star}^{1 + 1/p}_{1} \leq p \cdot T^{1+1/p}
    \end{align}
    It can also be shown that $\nabla^2 \psi_p(q)^{-1} = \mathrm{diag}\Brb{{\frac{p}{p+1} q^{1 - 1/p}}}$. Using that $c_k(s,a)^2 \leq c_k(s,a)$, we get
    \begin{align}       
        \norm{c_k}_{\nabla^2\psi_p(q_k)^{-1}}^2 & \leq \frac{p}{p+1} \sum_{s,a} c_k(s,a) q_k(s,a)^{1 - 1/p} \leq \norm{c_k}_1 \sum_{s,a} \frac{c_k(s,a)}{\norm{c_k}_1} q_k(s,a)^{1 - 1/p} \nonumber \\
        & \leq \norm{c_k}_1 \Brb{\sum_{s,a} \frac{c_k(s,a)}{\norm{c_k}_1} q_k(s,a)}^{1 - 1/p} \label{eq:sparse_jensen}\\
        & = \norm{c_k}_1^{1/p} \inprod{c_k, q_k}^{1 - 1/p} \leq  M^{1/p} \max \bcb{1, \inprod{c_k, q_k}} \leq M^{1/p} \brb{1 + \inprod{c_k, q_k}}, \nonumber
    \end{align}
    where the key step (\ref{eq:sparse_jensen}) uses Jensen's inequality on the concave function $x^{1-1/p}$ ($p > 1$) and probability distribution ${c_k} / {\norm{c_k}_1}$. Plugging this into the $\mathrm{Stability}$ term and combining with (\ref{eq:sparse_UB_proof_penaly_term}):
    \begin{align*}
        & \sumkK \inner{q_k - \cmp, c_k} \le  \frac{pT^{1+1/p}}{\eta} + \frac{\eta M^{1/p}K}{2} + \frac{\eta M^{1/p}}{2}\sumkK \langle q_k, c_k \rangle \\
        \implies & \sumkK \inner{q_k - \cmp, c_k} \le \frac{1}{1 -  \frac{\eta M^{1/p}}{2}} \left( \frac{pT^{1+1/p}}{\eta} + \frac{\eta M^{1/p}K}{2} + \frac{\eta M^{1/p}}{2}\sumkK \langle q_\star, c_k \rangle \right) \\
        \implies & \sumkK \inner{q_k - \cmp, c_k} \le \frac{2pT^{1+1/p}}{\eta} + 2 \eta M^{1/p} DK,
    \end{align*}
    where the last step uses that $\sumkK \langle q_\star, c_k \rangle \leq \sumkK \langle q_{\pi_f}, c_k \rangle \leq K \norm{q_{\pi_f}}_1 \norm{c_k}_\infty \leq DK$, $D \geq 1$ and $\eta \leq 4M^{-1/p} \iff 1 - \eta M^{1/p} / 2 \geq 1/2$. Tuning $\eta = \sqrt{\frac{pT^{1 + 1/p}}{KDM^{1/p}}}$ (so $\eta \leq 4M^{-1/p}$ for sufficiently large $K$) and $p = \log(TM)$ gives the result.
\end{proof}

Provided we can suitably select $T \approx T_\star$ (see Section \ref{sec:param_free_ub}), this result establishes that sparsity does lead to an improvement in the minimax regret. In Section \ref{sec:sparse_general_LB}, we show that the dependence on $M$ and $SA$ is optimal, ruling out polynomial improvements from sparsity such as in the experts setting \cite{Kwon16sparse}. This highlights that $M$ acts as the effective dimension of the problem instead of $SA$. In particular, if the sparsity level $M$ is constant, then we obtain a dimension-independent regret of $\mathcal{O}(\sqrt{DKT \log T})$.\\

\begin{remark}
    Although we express the bound in terms of the sparsity level $M$, it can be seen that the analysis above holds more generally if $M$ is instead an upper bound on the $\ell_1$ norm of the costs: $M = \max_k \norm{c_k}_1$. This relaxation allows our result to cover "softly sparse" cost structures and aligns with the notion of first-order bounds commonly studied in the online learning literature \citep{neu2015first,wei2018more,wagenmaker2022first}.
\end{remark}

\begin{remark}
    The above result does not recover the $M/A$ polynomial improvement in the special case of the expert setting. This can be recovered through a regret bound of a slightly different flavor which includes the hitting time of the uniform policy. We include the details and subtleties in Appendix \ref{app:state_level_sparsity} but the upshot is that the necessity to reach the goal state in SSP creates a fundamental difference in the benefits of sparsity compared to the expert setting. 
\end{remark}

\subsection{Sparse-agnostic parameter-free upper bound}\label{sec:param_free_ub}

The procedure in \Cref{thm:full_info_sparse_SSP_UB} assumes knowledge of the sparsity level $M$ to tune the parameter $p$ of our regularizer and uses knowledge of the expected hitting time of the optimal policy $T_\star$ to consider OMD over the space of suitable occupancy measures. We now adapt existing techniques to remove both of these assumptions and derive fully parameter-free guarantees.

For the unknown sparsity level, we use the same approach as in \cite{Kwon16sparse}. We divide the $K$ episodes into batches. Within each batch, we independently run OMD tuning the parameter $p$ of our regularizer with the sparsity level observed up to the current batch, as described in \Cref{alg:sparse-agnostic-omd} in \Cref{apx:full_info_param_free}.

For the unknown expected hitting time of the optimal policy $T_\star$, we can exploit the same meta-algorithm technique as in \cite{chen2021minimax}, using the sparse agnostic algorithms introduced above as base learners. We run $N \approx \log K$ instances of \Cref{alg:sparse-agnostic-omd} where the $j$-th instance sets its parameter $T$ as $b(j) \approx 2^j$. Therefore, there exists a good instance $j_\star$ such that $b(j_\star)$ is close to the unknown $T_\star$. The regret of a scale-invariant meta-algorithm, described for completeness in \Cref{alg:full-paramter-free-omd} in \Cref{apx:full_info_param_free}, closely matches that of this good instance.

Together, these two techniques yield the following parameter-free regret bound (proof in \Cref{apx:full_info_param_free}):\\



\begin{restatable}{theorem}{ParameterFreeUpperBound}
    \label{thm:full_info_sparse_SSP_UB}
    If $K > \max\brb{T_\star, \frac{T_\star}{D} \log(T_\star M)}$ and $T_\star > e$, Algorithm \ref{alg:full-paramter-free-omd} guarantees $\EE{R_K} \le \Tilde{\cO} \Brb{\sqrt{DKT_\star\log(MT_\star)} + T_\star}$, where the notation $\Tilde{\mathcal{O}}$ hides double-logarithmic factors.
\end{restatable}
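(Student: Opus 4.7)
The plan is to build the result by stacking two parameter-free reductions on top of the base guarantee from Theorem \ref{thm:full_info_sparse_SSP_UB_base}: an online batching scheme that removes the dependence on knowing $M$, and an outer meta-algorithm over logarithmically spaced guesses of $T_\star$ that removes the dependence on knowing $T_\star$.

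First I would analyse Algorithm \ref{alg:sparse-agnostic-omd}, the batched sparse-agnostic OMD that assumes a known upper bound $T$ on $T_\star$. The episodes are partitioned into geometrically growing batches, and at the start of each batch $b$ the algorithm resets OMD with $\psi_p$ using $p = \log(T \hat{M}_b)$ where $\hat{M}_b = \max_{k < \text{start of }b} \lVert c_k \rVert_0$ is the largest sparsity observed so far. Within a single batch Theorem \ref{thm:full_info_sparse_SSP_UB_base} applies directly (with $\hat{M}_b$ replaced by the true $M$ in the eventual bound, since $\hat{M}_b \le M$ and $p = \log(T\hat{M}_b)$ only inflates the stability term by a constant factor once $\hat{M}_b$ is a constant factor of $M$). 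Summing over batches, a doubling schedule guarantees that the number of batches is $\cO(\log K)$ and that the loss from early, miscalibrated batches is dominated by the final batch, yielding a regret of order $\tilde{\cO}\brb{\sqrt{DKT \log(MT)}}$ for any $T \ge T_\star$.

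Next I would handle the unknown $T_\star$ via the meta-algorithm in Algorithm \ref{alg:full-paramter-free-omd}. I instantiate $N = \lceil \log_2 K \rceil$ copies of Algorithm \ref{alg:sparse-agnostic-omd}, the $j$-th one using hitting-time bound $b(j) = 2^j$. Because we may assume $T_\star \le K$ (otherwise the stated condition $K > T_\star$ is void), there exists $j_\star$ with $T_\star \le b(j_\star) \le 2T_\star$, so the $j_\star$-th base learner incurs regret $\tilde{\cO}\brb{\sqrt{DK T_\star \log(MT_\star)}}$ by the previous step. The meta-algorithm is the scale-invariant combiner from \cite{chen2021minimax}: at each episode it selects one base learner's occupancy measure to execute, then feeds the observed cost vector to every base learner (full-information feedback makes this possible). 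Using the standard scale-invariant expert-combining guarantee, the meta-algorithm's regret against the $j_\star$-th base learner is $\tilde{\cO}(b(j_\star)\sqrt{K} + b(j_\star)) = \tilde{\cO}(T_\star \sqrt{K} + T_\star)$. Combining the meta-regret with the base-learner regret of the $j_\star$-th instance gives $\tilde{\cO}\brb{\sqrt{DKT_\star \log(MT_\star)} + T_\star \sqrt{K} + T_\star}$, and the hypothesis $K \ge T_\star D^{-1} \log(T_\star M)$ implies $T_\star \sqrt{K} \le \sqrt{DKT_\star \log(T_\star M)}$, so the $T_\star \sqrt{K}$ term is absorbed into the main term.

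The main obstacle I anticipate is justifying cleanly that combining the two layers does not break the sparsity adaptivity: the base learner's regret depends on $M$, but the meta-algorithm only sees aggregate cost, so I must argue that each base learner still observes and updates from the true cost vectors $c_k$ (hence retains its $\log M$ guarantee), while the meta-algorithm's own bound only depends on the per-episode cost magnitudes which are controlled by $b(j) \le 2T_\star$. Once this separation of responsibilities is formalised, the stated $\tilde{\cO}(\sqrt{DKT_\star \log(MT_\star)} + T_\star)$ bound follows, with the $\tilde{\cO}$ absorbing the $\cO(\log K)$ factor from the outer meta-algorithm and the double-log factors from batching over $\hat{M}_b$.
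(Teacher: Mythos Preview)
Your high-level decomposition (inner sparse-agnostic layer, outer hitting-time aggregation) matches the paper, but both layers as you describe them have genuine gaps.

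\textbf{Sparse-agnostic layer.} The batching is not over geometrically growing episode counts. Algorithm~\ref{alg:sparse-agnostic-omd} resets OMD precisely when the observed sparsity first exceeds a double-exponential threshold $m(b)=2^{2^b}$, so there are at most $B=\lceil\log_2\log_2 M\rceil$ segments, and by construction every round in segment $b$ except the very last satisfies $\lVert c_k\rVert_0\le m(b)$. This is what lets the Theorem~\ref{thm:full_info_sparse_SSP_UB_base} analysis go through with $p(b)=\log(Tm(b))$; the single violating round costs at most $T$. Your scheme instead tunes $p$ from the past sparsity $\hat M_b$ while the adversary is free to play costs of sparsity $M\gg\hat M_b$ inside the current batch. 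The stability term then carries $M^{1/p}=\exp\bigl(\log M/\log(T\hat M_b)\bigr)$, which is not a constant when $\hat M_b\ll M$; ``early miscalibrated batches are dominated by the final batch'' does not fix this because the adversary controls when high-sparsity costs appear relative to your fixed episode schedule.

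\textbf{Meta-algorithm layer.} The absorption step is false. Your claim that $K\ge (T_\star/D)\log(T_\star M)$ implies $T_\star\sqrt{K}\le\sqrt{DKT_\star\log(T_\star M)}$ is equivalent to $T_\star\le D\log(T_\star M)$, which the hypothesis does not give (take $D=1$, $T_\star=100$, $M=2$). A generic scale-invariant combiner with losses bounded by $b(j_\star)$ indeed yields only $\tilde\cO(T_\star\sqrt{K})$, which is too weak. The paper uses the specific meta-algorithm of \cite{chen2021minimax} with per-expert learning rates $\eta_j$ and a correction term $a_k(j)=4\eta_j\ell_k(j)^2$, then applies their Lemma~12 together with the key observation
\[
\sum_{k=1}^K\langle q_k^{j_\star},c_k\rangle \;\le\; \sum_{k=1}^K\langle q_{\pi^\star},c_k\rangle + (\text{base-learner regret}) \;\le\; DK + \tilde\cO\bigl(\sqrt{DT_\star K\log(T_\star M)}\bigr),
\]
which tightens the meta-regret to $\tilde\cO\bigl(\sqrt{DT_\star K}\bigr)$. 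Without this loss-of-the-best-expert control you cannot close the argument.
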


The leading term matches the regret bound from \Cref{thm:full_info_sparse_SSP_UB_base}, while the second does not depend on $M$ or $K$. Therefore, running a procedure that does not assume knowledge of $M$ and $T_\star$ comes at no additional cost in terms of the regret bound (up to double-logarithmic factors). We also note that it is common for log-log factors to be ignored in parameter free results with expert-like algorithms \cite{pmlr-v75-cutkosky18a, pmlr-v178-jacobsen22a}.
It is also possible to obtain a bound that holds with high-probability since the high-probability analysis given in \cite{chen2021minimax} can easily be adapted to work with our regularizer. 

\begin{remark}
    The assumption $K \geq \frac{T_\star}{D} \log(T_\star M)$ or $K \geq \frac{T_\star}{D} \log(T_\star SA)$ in the non-sparse setting is actually non-restrictive since it is required for the upper-bound to be meaningful:
    \begin{align*}
        \sqrt{DKT_\star\log(MT_\star)} \leq T_\star K \iff K \geq \frac{T_\star}{D} \log(T_\star M).
    \end{align*}
    In particular, it is likely that there is a gap between the behavior of the minimax regret between the "low-dimensional" setting which we study and a high-dimensional setting where $K \ll \frac{T_\star}{D} \log(T_\star M)$. The high-dimensional problem is yet to be explored, even in the non-adversarial setting and could be an interesting avenue of future research. Indeed, all prior works on SSP have implicitly studied the problem in low-dimension, which comes with an implicit assumption that $K$ is sufficiently large. 
\end{remark}

\subsection{Lower bound}\label{sec:sparse_general_LB}

In this section, we provide a general lower bound for sparse SSP problems.\\

\begin{restatable}{theorem}{LowerBoundFullInfo}\label{thm:sparse_full_info_LB}
    For any $D, T_\star, K, S, A$ with $T^\star \geq D \geq 3 \log S$, $S(A-1) \geq 400$, $K \geq \frac{800 T^\star}{D} \log M$ and $M \geq 101$, there exists an SSP instance with stochastic $M$-sparse costs, $S$ states and $A$ actions such that its diameter is $D$, the expected hitting time of the optimal policy is $T^\star$, and the expected regret with respect to the randomness of the losses for any learner after $K$ episodes is $\E[R_K] \ge \Omega \brb{\sqrt{K T^\star D \log M}}$.
\end{restatable}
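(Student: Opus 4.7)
The plan is to establish this lower bound by constructing a hard SSP instance that embeds a sparse experts problem, and then reducing the regret lower bound to the expected maximum of $M$ independent asymmetric centered random walks. Following the spirit of the minimax construction in \cite{chen2021minimax} but adapted to the sparse-cost regime, I would build an MDP with a deterministic ``warm-up'' corridor of length $\Theta(T_\star - D)$ from $s_0$ to a central branching state $s^\ast$. At $s^\ast$, the learner chooses among $M$ decision actions (padding the remaining $A - M$ actions to be dominated or equivalent), each leading to a ``branch'' state from which a single action returns to $g$ with per-step probability $\Theta(1/D)$, yielding expected hitting time $\Theta(D)$ from $s^\ast$ and diameter $D$ overall. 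Costs are supported only on the $M$ branch actions at $s^\ast$, enforcing $M$-sparsity deterministically; the warm-up and post-decision phases are cost-free. The construction realizes the prescribed $(S, A, D, T_\star, M)$ values up to constants.

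For the stochastic costs, the adversary samples a hidden optimal index $i^\star \in [M]$ uniformly at random and, in each episode, assigns an independent $\mathrm{Bernoulli}(p_i)$ cost to each branch action, with $p_i = p$ for $i \neq i^\star$ and $p_{i^\star} = p - \Delta$ for parameters $p, \Delta$ to be optimized. By Yao's minimax principle, lower-bounding the expected regret under this randomized instance lower-bounds the deterministic minimax regret. Since picking a sub-optimal branch at $s^\ast$ incurs an expected per-episode cost gap of $\Theta(D \Delta)$ once summed over the $\Theta(D)$ expected visits inside the branch, the regret of any learner is lower-bounded by $\Theta(D \Delta)$ times the expected number of episodes in which it fails to play $i^\star$. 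Standard information-theoretic arguments (change-of-measure on the observation distributions) then reduce the problem to lower-bounding
\begin{align*}
\mathbb{E}\left[\max_{i \in [M]} \sum_{k=1}^K \brb{p - X_k^{(i)}}\right],
\end{align*}
where the $X_k^{(i)}$ are independent $\mathrm{Bernoulli}(p)$ random variables and the maximum is over the $M$ arms.

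The hard part will be this last inequality: the classical symmetric Rademacher bound of \cite{orabona2015optimal} must be extended to asymmetric centered Bernoulli sums, which is precisely what the authors flag in their technical contributions. The symmetry-based reflection arguments that handle the Rademacher case fail for skewed Bernoulli variables, so the proof will likely proceed via a careful Berry-Esseen-type CLT approximation, justifying a bound of the form $\Omega\brb{\sqrt{p(1-p) K \log M}}$ for large enough $K$, combined with a truncation argument to control deviations in the sparse regime $p \ll 1$. The condition $K \geq (800 T_\star / D) \log M$ in the theorem statement should translate into the minimum sample size required for the Gaussian approximation to be effective at scale $\sqrt{\log M}$. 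Once the asymmetric random-walk bound is in hand, calibrating $p = \Theta(D/T_\star)$ and $\Delta$ so that the amplification through the $D$-length branch yields the target factor of $\sqrt{DT_\star}$ (rather than $\sqrt{T_\star}$ alone) will deliver the claimed $\Omega(\sqrt{KDT_\star \log M})$ bound; this last calibration is where matching the upper bound of \Cref{thm:full_info_sparse_SSP_UB_base} exactly will require care, as the MDP construction needs to expose the full $\sqrt{DT_\star}$ variance amplification and not just $\sqrt{T_\star}$.
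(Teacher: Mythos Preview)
Your proposal has two genuine gaps that prevent it from yielding the claimed bound.

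\textbf{Wrong adversary for full information.} You set up a hidden-best-arm environment (one arm with mean $p-\Delta$, the rest with mean $p$) and invoke change-of-measure. That is the bandit-feedback technique; under full information the learner sees every cost vector and can identify $i^\star$ after $\cO(1/\Delta^2)$ episodes, so the information-theoretic step collapses. The paper does \emph{not} plant a special arm. It makes all $L\approx M$ relevant state-action pairs symmetric, each with i.i.d.\ $\mathrm{Ber}(D'/2T')$ cost. Then every policy that avoids the penalty state $f$ has the \emph{same} expected cost $D'/2$ per episode, and the regret is exactly the post-hoc advantage of the best realized arm:
\[
\E[R_K]\;\ge\; T'\cdot \E\Bsb{\max_{i,j}\;\sum_{k=1}^K\Brb{\tfrac{D'}{2T'}-c_k(s_i,a_j)}},
\]
which is the asymmetric random-walk quantity directly, with no change-of-measure step at all. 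Your sentence ``change-of-measure reduces the problem to $\E[\max_i \sum_k (p-X_k^{(i)})]$'' conflates the two techniques; the max-of-random-walks expression only arises from the symmetric-arms construction.

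\textbf{MDP geometry does not produce $\sqrt{DT_\star}$.} Your warm-up corridor of length $\Theta(T_\star-D)$ followed by a branch with $\Theta(1/D)$ escape probability gives every policy expected hitting time $T_\star$, so the diameter is $T_\star$, not $D$. Even setting that aside, if costs sit at $s^\ast$ (visited once) the per-episode gap is $\Theta(\Delta)$, and if they sit on the branch states (visited $\Theta(D)$ times) the regret becomes $D\sqrt{Kp(1-p)\log M}$; with your calibration $p=\Theta(D/T_\star)$ this is $\sqrt{K\,D^3/T_\star\,\log M}$, which is strictly smaller than $\sqrt{KDT_\star\log M}$ whenever $D<T_\star$. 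The paper decouples the two time scales differently: the leaf state self-loops with escape probability $1/T'$ (so the optimal policy has hitting time $T'\approx T_\star$ and the cost is amplified by $T'$), while a separate state $f$ provides a fast route to $g$ with escape probability $1/D'$ but deterministic cost $1$ per step. This $f$ is what pins the diameter at $D$; every other action from a leaf deterministically dumps you into $f$, which both enforces sparsity (only $L+1$ nonzero costs) and makes those ``bad'' actions strictly dominated. With the leaf amplification $T'$ and the Bernoulli parameter $D'/2T'$ the variance calculation gives $T'\sqrt{K\cdot \tfrac{D'}{2T'}\log M}=\Theta(\sqrt{KDT_\star\log M})$, which is where the product $DT_\star$ actually comes from.

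A minor additional point: a deterministic corridor of length $\Theta(T_\star-D)$ need not fit into $S$ states (nothing in the theorem bounds $T_\star$ by $S$), whereas the paper only uses a depth-$\Theta(\log S)$ binary tree and obtains the large hitting time purely through the self-loop probability.
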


For general $M$ ($> 100$), the lower bound matches the upper bound established in \Cref{thm:full_info_sparse_SSP_UB_base} in its dependence on $M$, characterizing the minimax regret for general sparse problems (up to a $\log T_\star$ term). 
For $M = SA$, our result gives a $\Omega\brb{\sqrt{KT^\star D \log SA}}$ lower bound improving on the $\Omega\brb{\sqrt{KT^\star D}}$ lower bound of \cite{chen2021minimax}. In particular, this establishes the optimal dependence on the size of the state-action space $SA$ in the minimax regret for the general non-sparse SSP problem.

\paragraph{Proof intuition:} The proof is based on the combination of an SSP instance from \cite{chen2021minimax} and a probabilistic costs construction, which then requires some non-trivial arguments to extend to the sparse SSP problem. We give an overview of the construction and defer the details to Appendix \ref{app:proof_sparse_full_info_LB}.

The MDP construction is essentially a reduction to a non-sparse experts problem with $\cO(M)$ actions. First, there is a reduction to an experts problem with $\cO(SA)$ actions. Then within these, there are $\cO(M)$ good actions, while the remaining are bad. The good actions suffer small costs in expectation and can lead directly to the goal-state.
The bad actions are zero-cost but all lead to the same unique bad state, where only one action leads to the goal-state and suffers high cost. 
This allows a big proportion of the actions to be bad while still guaranteeing sparsity and forcing the learner to only consider the $\cO(M)$ good actions, completing the reduction to the non-sparse experts problem with $\cO(M)$ actions. 

However, we cannot directly apply lower bounds for the experts problem because of subtleties in the reduction and the cost-generating mechanism. We use a similar approach to the experts lower bounds by sampling the costs i.i.d.\ from a Bernoulli distribution, however with a scaled parameter to ensure the reduction above holds. 
The regret in this stochastic environment can then be expressed as the maximum of asymmetric zero-mean i.i.d.\ random walks, capturing how much better the optimal policy can be by choosing the best action after the i.i.d.\ Bernoulli costs have been sampled for all episodes. The result then follows from a technical result on the expectation of this maximum that we derive in Appendix \ref{app:LB_max_exp_RW}. 
We note that the reduction and costs are constructed in such a way that the diameter of the MDP and expected hitting time of the optimal policy are indeed $D$ and $T_\star$.

\section{Unknown transition setting}

In this section, we consider the setting where the transitions are unknown and show through the following lower bound that the benefits of sparsity are limited.\\

\begin{restatable}{theorem}{LowerBoundUnkownTrans}
\label{thm:lb:unknown-trans-ssp}
    For any $D, K, S, A$ with $ S \ge 2$,$A\ge16$, $D \ge 2$ and $K \ge SA$, there exists an SSP instance with $M=1$, $S$ states and $A$ actions such that its diameter is $D$ and the expected regret for any learner without knowledge of the transitions after $K$ episodes is $\E[R_K] \ge \Omega \brb{D\sqrt{SAK}}$.
\end{restatable}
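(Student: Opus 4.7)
The plan is to construct a family of SSP instances with sparse ($M=1$) costs and unknown transitions for which any learner must incur $\Omega(D \sqrt{SAK})$ expected regret. The strategy is to embed a contextual bandit lower bound with $\Theta(S)$ random contexts and $A$ arms per context into the transition dynamics of the MDP, and to amplify each per-round learning error by a factor of $D$ via a single ``trap-state'' cost that respects the sparsity constraint.

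First I would construct a base MDP $\mathcal{M}_0$ with $S$ states and diameter $\Theta(D)$ (for $S$ sufficiently large; small-$S$ cases admit analogous constructions). The state space consists of a start state $s_0$, the goal $g$, a trap state $s_{\mathrm{trap}}$, and $\Theta(S)$ choice states $s_1,\ldots,s_{S-3}$. From $s_0$, every action deterministically routes to a uniformly random choice state, so the learner is given a random context and has no control over it. At each choice state $s_i$, every one of the $A$ actions transitions to $g$ with probability $1/2$ and to the trap with probability $1/2$. From $s_{\mathrm{trap}}$, the unique available action goes to $g$ with probability $1/D$ and self-loops otherwise. The single non-zero cost is placed at $(s_{\mathrm{trap}}, a_{\mathrm{trap}})$ with value $1$ in every episode, respecting $M=1$. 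The worst-state expected hitting time (from the trap) is $D$, so the diameter is $\Theta(D)$.

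Next, parameterize perturbed MDPs by a tuple $\theta = (a^*_1,\ldots,a^*_{S-3}) \in [A]^{S-3}$: in $\mathcal{M}_\theta$, at each choice state $s_i$ the action $a^*_i$ transitions to $g$ with probability $1/2 + \epsilon$ (and to the trap with probability $1/2 - \epsilon$), while all other transitions agree with $\mathcal{M}_0$, for a parameter $\epsilon > 0$ to be tuned. In $\mathcal{M}_\theta$, action $a^*_i$ is uniquely optimal at $s_i$, and each suboptimal play at $s_i$ has expected-regret gap $\Theta(\epsilon D)$, since the extra $\epsilon$ trap-entry probability is multiplied by the expected $D$-step cost accumulated at the trap. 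The lower bound then follows by applying a standard $A$-armed multi-armed bandit lower bound at each choice state: conditional on the $\Theta(K/S)$ random routings to $s_i$, the learner faces an $A$-armed bandit with per-pull reward range $\Theta(D)$, for which the minimax regret is $\Omega(D\sqrt{AK/S})$, attained at $\epsilon = \Theta(\sqrt{SA/K})$. Averaging over a uniform prior on $\theta$ (with independent $a^*_i$ across contexts), the expected total regret is $\Omega(S \cdot D\sqrt{AK/S}) = \Omega(D\sqrt{SAK})$, and by Yao's principle $\max_\theta \mathbb{E}[R_K \mid \theta] = \Omega(D\sqrt{SAK})$, which is the desired bound.

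The main obstacle is achieving the $D$-amplification under the $M=1$ sparsity constraint: since only a single state-action pair carries cost, it must be placed at a state whose expected hitting time is $\Theta(D)$ and whose expected per-episode visit count is monotonically controlled by each upstream transition-level decision. The trap-state construction above accomplishes exactly this, because each choice-state decision directly determines the probability of a trap entry and each trap entry amortizes a $\Theta(D)$-step cost. A secondary subtlety is that the learner cannot pick which context it faces, which requires carrying out the change-of-measure / KL argument on trajectory distributions rather than per-round distributions; this is a standard adaptation of the classical MAB lower bound. Overall, the construction follows the stochastic-cost SSP lower bound template of \cite{tarbouriech2020no}, strengthened via the trap-state $D$-amplification to yield the stronger $\Omega(D\sqrt{SAK})$ bound and specialized to use a single fixed cost location so as to respect the $M=1$ constraint.
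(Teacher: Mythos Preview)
Your proposal is correct and takes essentially the same approach as the paper: both adapt the unknown-transition SSP lower bound of \cite{rosenberg2020near} (not \cite{tarbouriech2020no}) by concentrating all cost at a single state-action pair to achieve $M=1$, embedding a per-state bandit problem in the transitions, and obtaining the $D$-amplification from the expected hitting time through that single cost-bearing state. The only difference is in the mechanics of the $D$-amplification: the paper's cost state $f$ bounces back to the choice state (so each episode has a random $\Theta(D)$ number of bandit rounds at a fixed context, handled via the lemma recovering \cite{rosenberg2020near}'s Lemma~C.1), whereas your trap self-loops with escape probability $1/D$ (so each episode has exactly one bandit round followed by a geometric trap sojourn, giving a cleaner one-shot contextual-bandit reduction); both yield the same $\Omega(D\sqrt{SAK})$ bound.
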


The above result establishes that the minimax regret for the unknown transition setting must scale polynomially with $SA$, regardless of the sparsity. In particular, this highlights the limited benefits of sparsity in removing the dependence on the state-action space size in the unknown transition setting, which is in stark contrast to the known transition setting.

The proof is based on an SSP instance used by \cite{rosenberg2020near} to prove an $\Omega \brb{D\sqrt{SAK}}$ lower bound in the unknown transition non-sparse setting. It turns out that this instance can be adapted such that the cost is only non-zero for a single state-action pair, while keeping the regret lower bound unchanged, giving the above result. We include the details in \Cref{apx:sparse_unknown_trans_LB}.

\section{Conclusion, limitations and future-work}\label{sec:conclusion}

In this work, we studied the SSP problem under sparse adversarial costs and full-information feedback. When the transitions are known, we have shown that existing methods fail to appropriately exploit sparsity. Instead, we designed a family of regularizers to use with Online Mirror Descent that allowed us to characterize the sparse minimax regret, establishing the extent of the benefits of sparsity in this setting. When the transitions are unknown, we showed that even the sparse minimax regret scales polynomially in the size of the state-action space, suggesting fundamental limits in such settings. 

Our results open up many further directions of research. In particular, we established the benefits of sparsity under known transition as limited to logarithmic, however, there could be structural properties of an MDP that could break this logarithmic limit and achieve polynomial benefits. Moreover, we have limited our focus to the adversarial full-information feedback setting, but the study of sparse SSP problems in other settings, such as partial feedback, stochastic environments, or structured decision problems 
remains unexplored.  

\acksection

Emmeran Johnson is funded by EPSRC through the Modern Statistics and Statistical Machine Learning (StatML) CDT (grant no. EP/S023151/1). 
Alberto Rumi was funded by European Lighthouse of AI for Sustainability project (ELIAS).
Patrick Rebeschini was funded by UK Research and Innovation (UKRI) under the UK government’s Horizon Europe funding guarantee (grant no. EP/Y028333/1). 

We would like to thank the reviewers and meta-reviewers for their time and feedback.

\bibliography{biblio}

\begin{thebibliography}{39}
\providecommand{\natexlab}[1]{#1}
\providecommand{\url}[1]{\texttt{#1}}
\expandafter\ifx\csname urlstyle\endcsname\relax
  \providecommand{\doi}[1]{doi: #1}\else
  \providecommand{\doi}{doi: \begingroup \urlstyle{rm}\Url}\fi

\bibitem[Bertsekas and Tsitsiklis(1991)]{bertsekas1991analysis}
Dimitri~P Bertsekas and John~N Tsitsiklis.
\newblock An analysis of stochastic shortest path problems.
\newblock \emph{Mathematics of Operations Research}, 16\penalty0 (3):\penalty0 580--595, 1991.

\bibitem[Boyd(1959)]{boyd1959inequalities}
A.~V. Boyd.
\newblock Inequalities for mills’ ratio.
\newblock \emph{Reports of Statistical Application Research (Union of Japanese Scientists and Engineers)}, 6\penalty0 (1959):\penalty0 44--46, 1959.

\bibitem[Bubeck and Cesa-Bianchi(2012)]{bubeck2012regret}
S{\'e}bastien Bubeck and Nicolo Cesa-Bianchi.
\newblock Regret analysis of stochastic and nonstochastic multi-armed bandit problems.
\newblock \emph{Foundations and Trends in Machine Learning}, 5\penalty0 (1):\penalty0 1--122, 2012.

\bibitem[Bubeck et~al.(2018)Bubeck, Cohen, and Li]{pmlr-v83-bubeck18a}
Sébastien Bubeck, Michael Cohen, and Yuanzhi Li.
\newblock Sparsity, variance and curvature in multi-armed bandits.
\newblock \emph{Proceedings of Algorithmic Learning Theory}, 83:\penalty0 111--127, 2018.

\bibitem[Cesa-Bianchi et~al.(1997)Cesa-Bianchi, Freund, Haussler, Helmbold, Schapire, and Warmuth]{cesa1997use}
Nicolo Cesa-Bianchi, Yoav Freund, David Haussler, David~P Helmbold, Robert~E Schapire, and Manfred~K Warmuth.
\newblock How to use expert advice.
\newblock \emph{Journal of the ACM (JACM)}, 44\penalty0 (3):\penalty0 427--485, 1997.

\bibitem[Chen and Luo(2021)]{chen2021finding}
Liyu Chen and Haipeng Luo.
\newblock Finding the stochastic shortest path with low regret: the adversarial cost and unknown transition case.
\newblock \emph{Proceedings of the 38th International Conference on Machine Learning}, 139:\penalty0 1651--1660, 2021.

\bibitem[Chen et~al.(2021{\natexlab{a}})Chen, Jafarnia-Jahromi, Jain, and Luo]{chen2021implicit}
Liyu Chen, Mehdi Jafarnia-Jahromi, Rahul Jain, and Haipeng Luo.
\newblock Implicit finite-horizon approximation and efficient optimal algorithms for stochastic shortest path.
\newblock \emph{Advances in Neural Information Processing Systems}, 34:\penalty0 10849--10861, 2021{\natexlab{a}}.

\bibitem[Chen et~al.(2021{\natexlab{b}})Chen, Luo, and Wei]{chen2021minimax}
Liyu Chen, Haipeng Luo, and Chen-Yu Wei.
\newblock Minimax regret for stochastic shortest path with adversarial costs and known transition.
\newblock \emph{Proceedings of the 34th Conference on Learning Theory}, 134:\penalty0 1180--1215, 2021{\natexlab{b}}.

\bibitem[Chen et~al.(2022{\natexlab{a}})Chen, Jain, and Luo]{pmlr-v162-chen22h}
Liyu Chen, Rahul Jain, and Haipeng Luo.
\newblock Improved no-regret algorithms for stochastic shortest path with linear {MDP}.
\newblock \emph{Proceedings of the 39th International Conference on Machine Learning}, 162:\penalty0 3204--3245, 2022{\natexlab{a}}.

\bibitem[Chen et~al.(2022{\natexlab{b}})Chen, Luo, and Rosenberg]{pmlr-v178-chen22a}
Liyu Chen, Haipeng Luo, and Aviv Rosenberg.
\newblock Policy optimization for stochastic shortest path.
\newblock \emph{Proceedings of the 35th Conference on Learning Theory}, 178:\penalty0 982--1046, 2022{\natexlab{b}}.

\bibitem[Cohen et~al.(2021)Cohen, Efroni, Mansour, and Rosenberg]{NEURIPS2021_eeb69a3c}
Alon Cohen, Yonathan Efroni, Yishay Mansour, and Aviv Rosenberg.
\newblock Minimax regret for stochastic shortest path.
\newblock \emph{Advances in Neural Information Processing Systems}, 34:\penalty0 28350--28361, 2021.

\bibitem[Cutkosky and Orabona(2018)]{pmlr-v75-cutkosky18a}
Ashok Cutkosky and Francesco Orabona.
\newblock Black-box reductions for parameter-free online learning in banach spaces.
\newblock In \emph{Proceedings of the 31st Conference On Learning Theory}, volume~75, pages 1493--1529, 2018.

\bibitem[Doerr(2018)]{doerr2018elementary}
Benjamin Doerr.
\newblock An elementary analysis of the probability that a binomial random variable exceeds its expectation.
\newblock \emph{Statistics \& Probability Letters}, 139:\penalty0 67--74, 2018.

\bibitem[Freund and Schapire(1997)]{FREUND1997119}
Yoav Freund and Robert~E Schapire.
\newblock A decision-theoretic generalization of on-line learning and an application to boosting.
\newblock \emph{Journal of Computer and System Sciences}, 55\penalty0 (1):\penalty0 119--139, 1997.

\bibitem[Gasull and Utzet(2014)]{gasull2014approximating}
Armengol Gasull and Frederic Utzet.
\newblock Approximating mills ratio.
\newblock \emph{Journal of Mathematical Analysis and Applications}, 420\penalty0 (2):\penalty0 1832--1853, 2014.

\bibitem[Jacobsen and Cutkosky(2022)]{pmlr-v178-jacobsen22a}
Andrew Jacobsen and Ashok Cutkosky.
\newblock Parameter-free mirror descent.
\newblock In \emph{Proceedings of 35th Conference on Learning Theory}, volume 178, pages 4160--4211, 2022.

\bibitem[Jafarnia-Jahromi et~al.(2023)Jafarnia-Jahromi, Chen, Jain, and Luo]{pmlr-v216-jafarnia-jahromi23a}
Mehdi Jafarnia-Jahromi, Liyu Chen, Rahul Jain, and Haipeng Luo.
\newblock Posterior sampling-based online learning for the stochastic shortest path model.
\newblock \emph{Proceedings of the 39th Conference on Uncertainty in Artificial Intelligence}, 216:\penalty0 922--931, 2023.

\bibitem[Juditsky and Nemirovski(2010)]{juditsky:hal-00981863}
Anatoli~B. Juditsky and Arkadii~S. Nemirovski.
\newblock {First Order Methods for Nonsmooth Convex Large-Scale Optimization, I: General Purpose Methods}.
\newblock In \emph{{Optimization for Machine Learning}}, pages 1--28. {MIT Press}, 2010.

\bibitem[Kwon and Perchet(2016)]{Kwon16sparse}
Joon Kwon and Vianney Perchet.
\newblock Gains and losses are fundamentally different in regret minimization: The sparse case.
\newblock \emph{Journal of Machine Learning Research}, 17\penalty0 (227):\penalty0 1--32, 2016.

\bibitem[Kwon et~al.(2017)Kwon, Perchet, and Vernade]{kwon2017sparse}
Joon Kwon, Vianney Perchet, and Claire Vernade.
\newblock Sparse stochastic bandits.
\newblock In \emph{Proceedings of the 30th Conference on Learning Theory}, volume~65, pages 1269--1270, 2017.

\bibitem[McKay(1989)]{mckay1989littlewood}
Brendan~D McKay.
\newblock On littlewood's estimate for the binomial distribution.
\newblock \emph{Advances in Applied Probability}, 21\penalty0 (2):\penalty0 475--478, 1989.

\bibitem[Nesterov and Nemirovski(2013)]{Nesterov_Nemirovski_2013}
Yurii Nesterov and Arkadi Nemirovski.
\newblock On first-order algorithms for l1/nuclear norm minimization.
\newblock \emph{Acta Numerica}, 22:\penalty0 509–575, 2013.

\bibitem[Neu(2015)]{neu2015first}
Gergely Neu.
\newblock First-order regret bounds for combinatorial semi-bandits.
\newblock \emph{Proceedings of the 28th Conference on Learning Theory}, 40:\penalty0 1360--1375, 2015.

\bibitem[Orabona(2019)]{orabona2019modern}
Francesco Orabona.
\newblock A modern introduction to online learning.
\newblock \emph{Preprint arXiv:1912.13213}, 2019.

\bibitem[Orabona and P{\'a}l(2015)]{orabona2015optimal}
Francesco Orabona and D{\'a}vid P{\'a}l.
\newblock Optimal non-asymptotic lower bound on the minimax regret of learning with expert advice.
\newblock \emph{Preprint arXiv:1511.02176}, 2015.

\bibitem[Pelekis and Ramon(2016)]{pelekis2016lowerboundprobabilitybinomial}
Christos Pelekis and Jan Ramon.
\newblock A lower bound on the probability that a binomial random variable is exceeding its mean.
\newblock \emph{Statistics \& Probability Letters}, 119:\penalty0 305--309, 2016.

\bibitem[Robbins(1955)]{robbins1955remark}
Herbert Robbins.
\newblock A remark on stirling's formula.
\newblock \emph{The American mathematical monthly}, 62\penalty0 (1):\penalty0 26--29, 1955.

\bibitem[Rosenberg and Mansour(2021)]{rosenberg2020stochastic}
Aviv Rosenberg and Yishay Mansour.
\newblock Stochastic shortest path with adversarially changing costs.
\newblock \emph{Proceedings of the 30th International Joint Conference on Artificial Intelligence}, 2021.

\bibitem[Rosenberg et~al.(2020)Rosenberg, Cohen, Mansour, and Kaplan]{rosenberg2020near}
Aviv Rosenberg, Alon Cohen, Yishay Mansour, and Haim Kaplan.
\newblock Near-optimal regret bounds for stochastic shortest path.
\newblock \emph{Proceedings of the 37th International Conference on Machine Learning}, 119:\penalty0 8210--8219, 2020.

\bibitem[Shi et~al.(2019)Shi, Steenkiste, and Veloso]{app9194037}
Rongye Shi, Peter Steenkiste, and Manuela~M. Veloso.
\newblock Sc-m*: A multi-agent path planning algorithm with soft-collision constraint on allocation of common resources.
\newblock \emph{Applied Sciences}, 9\penalty0 (19), 2019.

\bibitem[Surynek et~al.(2019)Surynek, Kumar, and Koenig]{path-finding}
Pavel Surynek, T.~K.~Satish Kumar, and Sven Koenig.
\newblock Multi-agent path finding with capacity constraints.
\newblock In \emph{Advances in Artificial Intelligence}, 2019.

\bibitem[Tarbouriech et~al.(2020)Tarbouriech, Garcelon, Valko, Pirotta, and Lazaric]{tarbouriech2020no}
Jean Tarbouriech, Evrard Garcelon, Michal Valko, Matteo Pirotta, and Alessandro Lazaric.
\newblock No-regret exploration in goal-oriented reinforcement learning.
\newblock \emph{Proceedings of the 37th International Conference on Machine Learning}, 119:\penalty0 9428--9437, 2020.

\bibitem[Tarbouriech et~al.(2021)Tarbouriech, Zhou, Du, Pirotta, Valko, and Lazaric]{NEURIPS2021_367147f1}
Jean Tarbouriech, Runlong Zhou, Simon~S Du, Matteo Pirotta, Michal Valko, and Alessandro Lazaric.
\newblock Stochastic shortest path: Minimax, parameter-free and towards horizon-free regret.
\newblock \emph{Advances in Neural Information Processing Systems}, 34:\penalty0 6843--6855, 2021.

\bibitem[Tsuchiya et~al.(2023)Tsuchiya, Ito, and Honda]{NEURIPS2023_9408564a}
Taira Tsuchiya, Shinji Ito, and Junya Honda.
\newblock Stability-penalty-adaptive follow-the-regularized-leader: Sparsity, game-dependency, and best-of-both-worlds.
\newblock \emph{Advances in Neural Information Processing Systems}, 36:\penalty0 47406--47437, 2023.

\bibitem[Wagenmaker et~al.(2022)Wagenmaker, Chen, Simchowitz, Du, and Jamieson]{wagenmaker2022first}
Andrew~J Wagenmaker, Yifang Chen, Max Simchowitz, Simon Du, and Kevin Jamieson.
\newblock First-order regret in reinforcement learning with linear function approximation: A robust estimation approach.
\newblock \emph{Proceedings of the 39th International Conference on Machine Learning}, 162:\penalty0 22384--22429, 2022.

\bibitem[Wainwright(2019)]{Wainwright_2019}
Martin~J. Wainwright.
\newblock \emph{High-Dimensional Statistics: A Non-Asymptotic Viewpoint}.
\newblock Cambridge University Press, 2019.

\bibitem[Wei and Luo(2018)]{wei2018more}
Chen-Yu Wei and Haipeng Luo.
\newblock More adaptive algorithms for adversarial bandits.
\newblock \emph{Proceedings of the 31st Conference On Learning Theory}, 75:\penalty0 1263--1291, 2018.

\bibitem[Zhao et~al.(2022)Zhao, Li, and Zhou]{zhao2022dynamic}
Peng Zhao, Long-Fei Li, and Zhi-Hua Zhou.
\newblock Dynamic regret of online {M}arkov decision processes.
\newblock \emph{Proceedings of the 39th International Conference on Machine Learning}, 162:\penalty0 26865--26894, 2022.

\bibitem[Zimin and Neu(2013)]{Zimin13}
Alexander Zimin and Gergely Neu.
\newblock Online learning in episodic markovian decision processes by relative entropy policy search.
\newblock \emph{Advances in Neural Information Processing Systems}, 26, 2013.

\end{thebibliography}

\appendix

\newpage
\section{Failure of the negative entropy}\label{app:proof_failure_neg_ent}


In this appendix, we prove our lower bound result for OMD with the negative entropy from Section \ref{sec:failure_neg_ent}. We first restate the result.

\FailureNegEntropy*
\begin{proof}
Fix $K$ even, $S \geq 6$, $A=2$ and $N = S-5$. We first describe the SSP instance. Consider the following MDP $\mathcal{M} = (\mathcal{S}, \mathcal{A}, p, s_0, g)$, where $\cS = \bcb{s_0, s_0^L, s_0^R, s_1^R, ..., s_N^R, s_1}$ and $\cA = \bcb{a_1, a_2}$. The transitions and costs (in each episode $k$) are defined as:
\begin{itemize}
    \item $s_0$: $p(s_0^L | s_0, a) = p(s_0^R | s_0, a) = 1/2$ and $c_k(s_0, a) = 0$ for all $a \in \cA$.
    \item $s_0^L$: $p(s_g^L | s_0^L, a) = 1$ for all $a \in \cA$ and $c_k(s_0^L,a_1) = \frac{1+(-1)^k}{2}$, $c_k(s_0^L,a_2) = 1/2$.
    \item $s_0^R$: $p(s_g^R | s_0^R, a_1) = 1$, $p(s_i^R | s_0^R, a_2) = 1/N$ and $c_k(s_0^R,a_1) = 0$, $c_k(s_0^R,a_2) = 1$ .
    \item $s_i^R$: $p(g | s_i^R, a) = 1$ and $c_k(s_i^R,a) = 0$  for all $a \in \cA$.
    \item $s_g^L$: $p(g|s_g^L, a) = 1$ and $c_k(s_g^L,a) = 0$ for all $a \in \cA$.
    \item $s_g^R$: $p(g|s_g^R, a) = 1$ and $c_k(s_g^R,a) = 0$ for all $a \in \cA$.
\end{itemize}
An illustration is given in Figure \ref{fig:MDP_neg_ent_fail}. This SSP instance has a fixed horizon of 3 in the sense that all policies have a hitting time of exactly 3 (the states $s_g^L$ and $s_g^R$ are added to guarantee this). As a result we have that $T_\star = D = 3$. Also note that there are at most 3 state-action pairs that have non-zero cost, therefore the sparsity level $M = 3$.

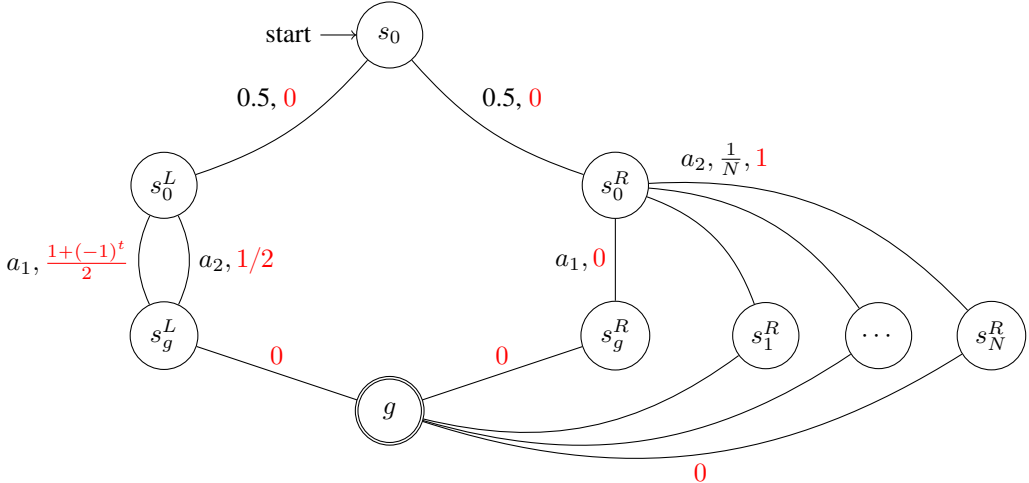
\begin{figure}[htbp]
\centering    
\begin{tikzpicture}[node distance=2cm and 3cm, on grid, auto]

\node[state, initial] (s0c) {$s_0$};
\node[state, below left=of s0c] (s01) {$s_0^L$};
\node[state, below right=of s0c] (s02) {$s_0^R$};

\node[state, below=2cm of s01] (gs1) {$s_g^L$};
\node[state, below=2cm of s02] (gs2) {$s_g^R$};

\node[state, below right=2cm and 2cm of s02] (s12) {$s_1^R$};
\node[state, below right=2cm and 3.5cm of s02] (s22) {$\cdots$};
\node[state, below right=2cm and 5cm of s02] (sN2) {$s_N^R$};

\node[state, accepting, below=5cm of s0c] (sg) {$g$};

\path (s0c) edge[bend left=15] node[pos=0.4, above left] {0.5, \textcolor{red}{$0$}} (s01);
\path (s0c) edge[bend right=15] node[pos=0.4, above right] {0.5, \textcolor{red}{$0$}} (s02);

\path (s02) edge[bend left=25] (s12);
\path (s02) edge[bend left=25] (s22);
\path (s02) edge[bend left=25] node[pos=0.2, above] {$a_2, \frac{1}{N}, \textcolor{red}{1}$} (sN2);

\path (s01) edge[bend left = 25] node[right] {$a_2, \textcolor{red}{1/2}$} (gs1);
\path (s01) edge[bend right = 25] node[left] {$a_1, \textcolor{red}{\frac{1 + (-1)^t}{2}}$} (gs1);
\path (s02) edge node[left] {$a_1, \textcolor{red}{0}$} (gs2);

\path (gs1) edge node[above] {$\textcolor{red}{0}$} (sg);
\path (gs2) edge node[above] {$\textcolor{red}{0}$} (sg);

\path (s12) edge[bend left=25] node[left] {} (sg);
\path (s22) edge[bend left=25] node[right] {} (sg);
\path (sN2) edge[bend left=25] node[below] {\textcolor{red}{$0$}} (sg);

\end{tikzpicture}
\caption{Diagram illustrating MDP construction for the proof of \Cref{thm:failure_neg_ent}. When an action is not specified for an edge, then both actions give the same transition and cost. If an edge has a number in black, it is a transition probability; if it does not then the transition is deterministic. The costs are given in red. The formal description of the MDP is given above.}
\label{fig:MDP_neg_ent_fail}
\end{figure}

From Appendix B.1 in \cite{rosenberg2020stochastic} (we can ignore the optimization over $\lambda$ because we are in a fixed horizon setting), the update of OMD with negative entropy for any $k \geq 0$ can be computed by solving a convex optimization problem:
\begin{align*}
    q_{k+1}(s,a) = q_k(s,a) e^{B_k^{v_{k+1}}(s,a)}, \quad \text{ where } \quad & B_k^v(s,a) = v(s) - \eta c_k(s,a) - \sum_{s' \in \cS} p(s'|s,a) v(s'), \\
    & v_{k+1} = \argmin_v \cD_k(v), \\
    & \cD_k(v) = \sum_{s \in \cS} \sum_{a \in \cA} q_k(s,a) e^{B_k^v(s,a)} - v(s_0),
\end{align*}
with $q_0(s,a) = 1$ and $c_0(s,a) = 0$. This allows us to compute exactly the points played by the algorithm on the SSP instance described above, and in turn compute the regret, from which the result will follow.

\textbf{In the following few pages, we compute the occupancy measures played by OMD with the negative entropy on the SSP instance described earlier for all episodes, using the convex optimization problem above.} We begin by computing expressions for $B_k^v(s,a)$ in each state:
\begin{itemize}
    \item $B^v_k(s_0, a) = v(s_0) - \frac{1}{2} v(s_0^L) - \frac{1}{2} v(s_0^R)$ for all $a \in \cA$
    \item $B^v_k(s_0^L, a) = v(s_0^L) - \eta c_k(s_0^L, a) - v(s_g^L)$ for all $a \in \cA$
    \item $B^v_k(s_g^L) = v(s_g^L)$
    \item $B^v_k(s_0^R, a_1) = v(s_0^R) - v(s_g^R)$
    \item $B^v_k(s_0^R, a_2) = v(s_0^R) - \eta c_k(s_0^R, a_2) - \frac{1}{N} \sum_{i=1}^N v(s_i^R) = v(s_0^R) - \eta c_k(s_0^R, a_2) - v(s_1^R)$ since by symmetry $v(s_i^R) = v(s_1^R)$ for all $i \geq 1$ and any $v$ solving the convex optimization problem specified in the OMD update.
    \item $B^v_k(s_i^R, a) = v(s_i^R) = v(s_1^R)$  for all $a \in \cA$
    \item $B^v_k(s_g^R, a) = v(s_g^R)$ for all $a \in \cA$
\end{itemize}
Plugging these into the optimization problem, we obtain (recall the notation $q(s) = \sum_{a \in \cA} q(s,a)$):
\begin{align*}
    v_{k+1} = \argmin_v \quad & \cD_k(v) = \argmin_v \sum_{s\in\cS} \sum_{a \in \cA} q_k(s,a) e^{B_k^v(s,a)} - v(s_0) \\
    = \argmin_v \quad & q_k(s_0,a_1) e^{v(s_0) - 0.5 v(s_0^L) - 0.5 v(s_0^R)} + q_k(s_0,a_2) e^{v(s_0) - 0.5 v(s_0^L) - 0.5 v(s_0^R)} \\
    & + q_k(s_0^L, a_1) e^{v(s_0^L) - \eta c_k(s_0^L, a_1) - v(s_g^L)}  + q_k(s_0^L, a_2) e^{v(s_0^L) - \eta c_k(s_0^L, a_2) - v(s_g^L)} \\
    & + q_k(s_g^L, a_1) e^{v(s_g^L)} + q_k(s_g^L, a_2) e^{v(s_g^L)} \\
    & + q_k(s_0^R, a_1) e^{v(s_0^R) - v(s_g^R)} + q_k(s_0^R, a_2) e^{v(s_0^R) - \eta c_k(s_0^R, a_2) - v(s_1^R)} \\
    & + \sum_{i=1}^N \Bcb{q_k(s_i^R, a_1) e^{v(s_1^R)} + q_k(s_i^R, a_2) e^{v(s_1^R)}} \\
    & + q_k(s_g^R, a_1) e^{v(s_g^R)} + q_k(s_g^R, a_2) e^{v(s_g^R)} \\
    & - v(s_0) \\
    = \argmin_v \quad & q_k(s_0) e^{v(s_0) - 0.5 v(s_0^L) - 0.5 v(s_0^R)} \\
    & + q_k(s_0^L, a_1) e^{v(s_0^L) - \eta c_k(s_0^L, a_1) - v(s_g^L)}  + q_k(s_0^L, a_2) e^{v(s_0^L) - \eta c_k(s_0^L, a_2) - v(s_g^L)} \\
    & + q_k(s_g^L) e^{v(s_g^L)} \\
    & + q_k(s_0^R, a_1) e^{v(s_0^R) - v(s_g^R)} + q_k(s_0^R, a_2) e^{v(s_0^R) - \eta c_k(s_0^R, a_2) - v(s_1^R)} \\
    & + \sum_{i=1}^N \Bcb{q_k(s_i^R) e^{v(s_1^R)}} \\
    & + q_k(s_g^R) e^{v(s_g^R)} \\
    & - v(s_0).
\end{align*}
This being a convex optimization problem, it can be solved by differentiating and setting to 0:
\begin{align*}
    \frac{\partial \cD_k(v)}{\partial v(s_0)} & = q_k(s_0) e^{v(s_0) - 0.5 v(s_0^L) - 0.5 v(s_0^R)} - 1 = 0 \\
    \frac{\partial \cD_k(v)}{\partial v(s_0^L)} & = -0.5 q_k(s_0) e^{v(s_0) - 0.5 v(s_0^L) - 0.5 v(s_0^R)} + q_k(s_0^L, a_1) e^{v(s_0^L) - \eta c_k(s_0^L, a_1) - v(s_g^L)}  + q_k(s_0^L, a_2) e^{v(s_0^L) - \eta c_k(s_0^L, a_2) - v(s_g^L)} \\
    & = -0.5 + q_k(s_0^L, a_1) e^{v(s_0^L) - \eta c_k(s_0^L, a_1) - v(s_g^L)}  + q_k(s_0^L, a_2) e^{v(s_0^L) - \eta c_k(s_0^L, a_2) - v(s_g^L)} = 0 \\
    \frac{\partial \cD_k(v)}{\partial v(s_g^L)} & = - q_k(s_0^L, a_1) e^{v(s_0^L) - \eta c_k(s_0^L, a_1) - v(s_g^L)}  - q_k(s_0^L, a_2) e^{v(s_0^L) - \eta c_k(s_0^L, a_2) - v(s_g^L)} + q_k(s_g^L) e^{v(s_g^L)} \\
    & = q_k(s_g^L) e^{v(s_g^L)} - 0.5 = 0 \\
    \frac{\partial \cD_k(v)}{\partial v(s_0^R)} & = -0.5 q_k(s_0) e^{v(s_0) - 0.5 v(s_0^L) - 0.5 v(s_0^R)} + q_k(s_0^R, a_1) e^{v(s_0^R) - v(s_g^R)} + q_k(s_0^R, a_2) e^{v(s_0^R) - \eta c_k(s_0^R, a_2) - v(s_1^R)} \\
    & = -0.5 + q_k(s_0^R, a_1) e^{v(s_0^R) - v(s_g^R)} + q_k(s_0^R, a_2) e^{v(s_0^R) - \eta c_k(s_0^R, a_2) - v(s_1^R)} = 0 \\
    \frac{\partial \cD_k(v)}{\partial v(s_1^R)} & = - q_k(s_0^R, a_2) e^{v(s_0^R) - \eta c_k(s_0^R, a_2) - v(s_1^R)} + e^{v(s_1^R)} \sum_{i=1}^N q_k(s_i^R) = 0 \\
    \frac{\partial \cD_k(v)}{\partial v(s_g^R)} & = - q_k(s_0^R, a_1) e^{v(s_0^R) - v(s_g^R)} + q_k(s_g^R) e^{v(s_g^R)} = 0 .
\end{align*}

\textbf{Let's look specifically at the case $k = 0$} ($q_0(s,a) = 1, q_0(s) = 2, c_0(s,a) = 0$ for all $s \in \cS$, $a \in \cA$). For the left part of the MDP we have:
\begin{align*}
    \frac{\partial \cD_k(v)}{\partial v(s_0^L)} = 0 & \implies 2 e^{v(s_0^L) - v(s_g^L)} = 0.5 \implies e^{v(s_0^L)} = 0.25 e^{v(s_g^L)} \\
    \frac{\partial \cD_k(v)}{\partial v(s_g^L)} = 0 & \implies e^{v(s_g^L)} = 0.25 \implies e^{v(s_0^L)} = 0.25^2 \\
    & \implies q_1(s_0^L, a) = e^{B_0^v(s_0^L,a)} = \frac{0.25^2}{0.25} = 0.25, \text{ for all } a \in \cA.
\end{align*}
For the right part of the MDP, we have:
\begin{align*}
    \frac{\partial \cD_k(v)}{\partial v(s_0^R)} = 0 \implies & e^{v(s_0^R) - v(s_g^R)} +  e^{v(s_0^R) - v(s_1^R)} = 0.5 \implies e^{v(s_0^R)} = \frac{0.5}{e^{-v(s_g^R)} + e^{-v(s_1^R)}} \\
    \frac{\partial \cD_k(v)}{\partial v(s_1^R)} = 0 \implies & e^{v(s_0^R) - v(s_1^R)} = 2 N e^{v(s_1^R)} \implies e^{v(s_1^R)} = \frac{1}{\sqrt{2N}} e^{0.5 v(s_0^R)} \\
    \frac{\partial \cD_k(v)}{\partial v(s_g^R)} = 0 \implies & e^{v(s_0^R) - v(s_g^R)} = 2 e^{v(s_g^R)} \implies e^{v(s_g^R)} = \frac{1}{\sqrt{2}}e^{0.5 v(s_0^R)} \\
    \implies & e^{v(s_0^R)} = \frac{0.5}{\sqrt{2} e^{-0.5 v(s_0^R)} + \sqrt{2N} e^{-0.5v(s_0^R)}} \\
    \implies & e^{0.5 v(s_0^R)} = \frac{0.5}{\sqrt{2} + \sqrt{2N}} \\
    \implies & e^{v(s_0^R)} = \frac{0.25}{(\sqrt{2} + \sqrt{2N})^2} \qquad  \\
    \implies & q_1(s_0^R, a_1) = e^{B_0^v(s_0^R,a_1)} = e^{v(s_0^R) - v(s_g^R)} = \sqrt{2} e^{0.5 v(s_0^R)} = \frac{0.5}{1+\sqrt{N}}  \\
    & q_1(s_0^R, a_2) = e^{B_0^v(s_0^R,a_2)} = e^{v(s_0^R) - v(s_1^R)} = \sqrt{2N} e^{0.5 v(s_0^R)} = \frac{0.5 \sqrt{N}}{1+\sqrt{N}}  \\
    & q_1(s_1^R, a) = e^{B_0^v(s_1^R,a)} = e^{v(s_1^R)} = \frac{1}{\sqrt{2N}} e^{0.5 v(s_0^R)} = \frac{0.25}{\sqrt{N}(1+\sqrt{N})} \\
    & q_1(s_g^R, a) = e^{B_0^v(s_g^R,a)} = e^{v(s_g^R)} = \frac{1}{\sqrt{2}} e^{0.5 v(s_0^R)} = \frac{0.25}{1 + \sqrt{N}}.
\end{align*}

\textbf{Let's now look at general $k \geq 1$:} 
Since $q_k$ is an occupancy measure, it satisfies the properties of the dynamics of the MDP (see the definition of $\Delta(T)$ in Section \ref{sec:preliminaries}) and we have that for any $s \in \cS$: $\sum_{a \in \cA} q_k(s,a) = \sum_{s'\in \cS} \sum_{a' \in \cA} p(s|s',a') q_k(s',a') + \I \bcb{s = s_0}$. In particular, this gives
\begin{itemize}
    \item $s = s_0$: $q_k(s_0) = 1$
    \item $s = s_0^L$: $q_k(s_0^L) = 0.5 q_k(s_0) = 0.5$
    \item $s= s_g^L$: $q_k(s_g^L) = q_k(s_0^L) = 0.5$
    \item $s = s_0^R$: $q_k(s_0^R) = 0.5 q_k(s_0) = 0.5$
    \item $s= s_g^R$: $q_k(s_g^R) = q_k(s_0^R, a_1)$
    \item $s = s_1^R$: $q_k(s_1^R) = \frac{1}{N} q_k(s_0^R, a_2)$
\end{itemize}
This leads to the following simplifications in the derivatives of $\cD_k(v)$:
\begin{align*}
    \frac{\partial \cD_k(v)}{\partial v(s_0)} & = e^{v(s_0) - 0.5 v(s_0^L) - 0.5 v(s_0^R)} - 1 = 0 \\
    \frac{\partial \cD_k(v)}{\partial v(s_0^L)} & = -0.5 + q_k(s_0^L, a_1) e^{v(s_0^L) - \eta c_k(s_0^L, a_1) - v(s_g^L)}  + (1/2 - q_k(s_0^L, a_1)) e^{v(s_0^L) - \eta c_k(s_0^L, a_2) - v(s_g^L)} = 0 \\
    \frac{\partial \cD_k(v)}{\partial v(s_g^L)} & = 0.5 e^{v(s_g^L)} - 0.5 = 0 \implies e^{v(s_g^L)} = 1 \\
    \frac{\partial \cD_k(v)}{\partial v(s_0^R)} & = -0.5 + q_k(s_0^R, a_1) e^{v(s_0^R) - v(s_g^R)} + (1/2 - q_k(s_0^R, a_1)) e^{v(s_0^R) - \eta - v(s_1^R)} = 0 \\
    \frac{\partial \cD_k(v)}{\partial v(s_1^R)} & = - q_k(s_0^R, a_2) e^{v(s_0^R) - \eta - v(s_1^R)} + q_k(s_0^R, a_2) e^{v(s_1^R)} = 0 \implies e^{v(s_0^R)} = e^{\eta + 2 v(s_1^R)}\\
    \frac{\partial \cD_k(v)}{\partial v(s_g^R)} & = - q_k(s_0^R, a_1) e^{v(s_0^R) - v(s_g^R)} + q_k(s_0^R, a_1) e^{v(s_g^R)} = 0 \implies e^{v(s_0^R)} = e^{2 v(s_g^R)}.
\end{align*}

\textbf{Left part of the MDP:}
\begin{align*}
    \frac{\partial \cD_k(v)}{\partial v(s_0^L)} = 0 & \implies q_k(s_0^L, a_1) e^{v(s_0^L) - \eta \frac{1+(-1)^k}{2}}  + (0.5 - q_k(s_0^L, a_1)) e^{v(s_0^L) - 0.5 \eta} = 0.5 \\
    & \implies e^{v(s_0^L)} = \frac{0.5}{q_k(s_0^L, a_1) e^{- \eta \frac{1+(-1)^k}{2}}  + (0.5 - q_k(s_0^L, a_1)) e^{- 0.5 \eta}} \\
    k+1=2 & \implies e^{v(s_0^L)} = \frac{0.5}{0.25  + 0.25 e^{- 0.5 \eta}} \\
    & \implies q_2(s_0^L, a_1) = q_1(s_0^L, a_1) e^{B_1^v(s_0^L, a_1)} = 0.25 e^{v(s_0^L) - v(s_g^L)} = \frac{0.5}{1  + e^{- 0.5 \eta}} \\
    k+1=3 & \implies e^{v(s_0^L)} = \frac{0.5}{q_1(s_0^L, a_1) e^{-\eta}  + (0.5-q_1(s_0^L, a_1)) e^{- 0.5 \eta}} \\
    & \implies e^{v(s_0^L)} = \frac{0.5}{ \frac{0.5}{1 + e^{- 0.5 \eta}} e^{-\eta}  + \frac{0.5 e^{-0.5 \eta}}{1 + e^{- 0.5 \eta}} e^{- 0.5 \eta}} = 0.25 \frac{e^\eta}{q_2(s_0^L, a_1)} \\
    & \implies q_3(s_0^L, a_1) = q_2(s_0^L, a_1) e^{v(s_0^L) - \eta - v(s_g^L)} = 0.25 \\
    k \text{ even } & \implies q_k(s_0^L, a_1) = \frac{0.5}{1  + e^{- 0.5 \eta}} \\
    k \text{ odd } & \implies q_k(s_0^L, a_1) = 0.25,
\end{align*}
where the last two lines follow by a straightforward induction. 
Hence the losses suffered by OMD on the left part of the MDP are:
\begin{align}
    \sum_{k=1}^K \Bcb{q_k(s_0^L, a_1) c_k(s_0^L, a_1) + q_k(s_0^L, a_2) c_k(s_0^L, a_2)} & = \sum_{k=1}^K \Bcb{q_k(s_0^L, a_1) \cdot \frac{1+(-1)^k}{2} + 0.5 \cdot  (0.5 - q_k(s_0^L, a_1))} \nonumber \\
    & = \sum_{k=1}^K \Bcb{q_k(s_0^L, a_1) \cdot \frac{(-1)^k}{2} + 0.25} \nonumber \\
    & = 0.25 K + 0.5 \sum_{t=1}^{K/2} \Bcb{q_{2t}(s_0^L, a_1) - q_{2t-1}(s_0^L, a_1)} \nonumber \\
    & = 0.25 K + 0.5 \sum_{t=1}^{K/2} \Bcb{\frac{0.5}{1+e^{-0.5\eta}} - 0.25} \nonumber \\
    & = 0.25 K + 0.5 \frac{K}{2}{\frac{0.5 - 0.25 - 0.25 e^{-0.5 \eta}}{1+e^{-0.5\eta}}} \nonumber \\
    & = 0.25 K + \frac{K}{16} \cdot {\frac{1 - e^{-0.5 \eta}}{1+e^{-0.5\eta}}} \nonumber \\
    & = 0.25 K + \frac{K}{16} \cdot \min \Bcb{\frac{\eta}{5}, \frac{1}{2}}. \label{eq:proof_failure_negent_losses_left_MDP}
\end{align}

\textbf{Right part of the MDP:}
\begin{align*}
    \frac{\partial \cD_k(v)}{\partial v(s_0^R)} = 0 \implies & e^{v(s_0^R)} = \frac{0.5}{q_k(s_0^R, a_1) e^{- v(s_g^R)} + (1/2 - q_k(s_0^R, a_1)) e^{- \eta - v(s_1^R)}} \\
    \frac{\partial \cD_k(v)}{\partial v(s_1^R)} = 0 \implies & {v(s_0^R)} = {\eta + 2 v(s_1^R)}\\
    \frac{\partial \cD_k(v)}{\partial v(s_g^R)} = 0 \implies & v(s_0^R) = v(s_g^R) \\
    \implies & e^{v(s_0^R)} = \frac{0.5}{q_k(s_0^R, a_1) e^{- 0.5 v(s_0^R)} + (1/2 - q_k(s_0^R, a_1)) e^{- \eta - 0.5 v(s_0^R) + 0.5 \eta}} \\
    \implies & e^{0.5 v(s_0^R)} = \frac{0.5}{q_k(s_0^R, a_1) + (1/2 - q_k(s_0^R, a_1)) e^{- 0.5 \eta}} = \frac{0.5}{q_k(s_0^R, a_1) + q_k(s_0^R, a_2) e^{- 0.5 \eta}} \\
    \implies & q_{k+1}(s_0^R, a_1) = q_k(s_0^R,a_1) e^{B_k^v(s_0^R, a_1)} = q_k(s_0^R,a_1) e^{v(s_0^R) - v(s_g^R)} = q_k(s_0^R,a_1) e^{0.5 v(s_0^R)} \\
    \implies & q_{k+1}(s_0^R, a_1) = 0.5 \frac{q_k(s_0^R,a_1)}{q_k(s_0^R, a_1) + (1/2 - q_k(s_0^R, a_1)) e^{- 0.5 \eta}} \\
    \implies & \frac{q_{k+1}(s_0^R, a_1)}{q_{k+1}(s_0^R, a_2)} = \frac{q_{k+1}(s_0^R, a_1)}{(1/2 - q_{k+1}(s_0^R, a_1))} = \frac{q_{k}(s_0^R, a_1)}{(1/2 - q_{k}(s_0^R, a_1)) e^{-0.5 \eta}} = e^{0.5\eta} \frac{q_{k}(s_0^R, a_1)}{q_{k}(s_0^R, a_2)} \\
    \implies & \frac{q_{k+1}(s_0^R, a_1)}{q_{k+1}(s_0^R, a_2)} = e^{0.5 k \eta} \frac{q_{1}(s_0^R, a_1)}{q_{1}(s_0^R, a_2)} = e^{0.5 k \eta} \frac{0.5}{0.5 \sqrt{N}} = \frac{1}{\sqrt{N}} e^{0.5 k \eta} \\
    \implies & q_{k+1}(s_0^R, a_2) = \frac{0.5}{1 + \frac{1}{\sqrt{N}}e^{0.5 k \eta}} = \frac{0.5 \sqrt{N}}{\sqrt{N} + e^{0.5 \eta k}}.
\end{align*}
This also holds for $k = 0$ (as shown above). Hence, the losses suffered by OMD on the right part of the MDP are
\begin{align*}
    \sum_{k=1}^K q_k(s_0^R, a_2) c_k(s_0^R, a_2) & = \sum_{k=1}^K \frac{0.5 \sqrt{N}}{\sqrt{N} + e^{0.5 k \eta}} \\
    & \geq \int_1^{K+1} \frac{0.5 \sqrt{N}}{\sqrt{N} + e^{0.5 \eta x}} dx = 0.5 K - \int_1^{K+1} \frac{0.5 e^{0.5 \eta x}}{\sqrt{N} + e^{0.5 \eta x}} dx \\
    & = 0.5 K - \Bsb{ \frac{1}{\eta }\log (\sqrt{N} + e^{0.5 \eta x})}_1^{K+1} \\
    & = 0.5 K - \frac{1}{\eta }\log (\sqrt{N} + e^{0.5 \eta (K+1)}) + \frac{1}{\eta} \log (\sqrt{N} + e^{0.5 \eta}) \\
    & \geq 0.5 K - \frac{1}{\eta }\log (2e^{0.5 \eta (K+1)}) + \frac{1}{\eta} \log \sqrt{N} \qquad \text{ assuming } \sqrt{N} \leq e^{0.5 \eta (K+1)} \\
    & = 0.5K - 0.5(K+1) - \frac{1}{\eta }\log 2 + \frac{1}{2 \eta} \log N \\
    & = - 0.5 + \frac{1}{2\eta} \log \frac{N}{4}.
\end{align*}
If $\sqrt{N} > e^{0.5 \eta (K+1)}$, then we have
\begin{align*}
    0.5 K - \frac{1}{\eta }\log (\sqrt{N} + e^{0.5 \eta (K+1)}) + \frac{1}{\eta} \log (\sqrt{N} + e^{0.5 \eta}) & = 0.5 K + \frac{1}{\eta }\log  \Brb{\frac{\sqrt{N} + e^{0.5 \eta}}{\sqrt{N} + e^{0.5 \eta (K+1)}}} \\
    & \geq 0.5 K + \frac{1}{\eta }\log  \Brb{\frac{e^{0.5 \eta (K+1)} + e^{0.5 \eta}}{2 e^{0.5 \eta (K+1)}}} \\
    & \geq 0.5 K + \frac{1}{\eta }\log  \Brb{\frac{1 + e^{-0.5 \eta K}}{2}} \\
    & \geq 0.25 K,
\end{align*}
using that $\frac{1+e^{-0.5Kx}}{2} \geq e^{-0.25Kx}$ since $cosh(x) \geq 1$. So we have
\begin{align}\label{eq:proof_failure_negent_losses_right_MDP}
    \sum_{k=1}^K q_k(s_0^R, a_2) c_k(s_0^R, a_2) & \geq \min\Bcb{- 0.5 + \frac{1}{2\eta} \log \frac{N}{4}, 0.25 K}.
\end{align}

Combining the losses from the left part in (\ref{eq:proof_failure_negent_losses_left_MDP}) and from the right part in (\ref{eq:proof_failure_negent_losses_right_MDP}), we have:
\begin{align*}
    \sum_{k=1}^K \langle q_k, c_k \rangle & = \sum_{k=1}^K \Bcb{q_k(s_0^L, a_1) c_k(s_0^L, a_1) + q_k(s_0^L, a_2) c_k(s_0^L, a_2)} + \sum_{k=1}^K \Bcb{q_k(s_0^R, a_2) c_k(s_0^R, a_2)} \\
    & \geq 0.25 K + \frac{K}{16} \cdot \min \Bcb{\frac{\eta}{5}, \frac{1}{2}} + \min\Bcb{- 0.5 + \frac{1}{2\eta} \log \frac{N}{4}, 0.25 K}.
\end{align*}

\textbf{Regret lower-bound:} consider $q_\star$ defined as follows:
\begin{itemize}
    \item $q_\star(s_0, a) = 1/2$ for all $a \in \cA$
    \item $q_\star(s_0^L, a_2) = 1/2$, $q_\star(s_0^L, a_1) = 0$
    \item $q_\star(s_g^L, a) = 1/4$  for all $a \in \cA$
    \item $q_\star(s_0^R, a_1) = 1/2$, $q_\star(s_0^R, a_2) = 0$, $q_\star(s_i^R, a) = 0$
    \item $q_\star(s_g^R, a) = 1/4$  for all $a \in \cA$
\end{itemize}
It is straightforward to check that $q_\star$ satisfies the flow constraints and is an occupancy measure. We obtain
\begin{align*}
    \sum_{k=1}^K & \langle q_\star, c_k \rangle = \sum_{k=1}^K \Bcb{q_\star(s_0^L, a_2) \cdot 0.5} = 0.25K \\
    \implies R_K & \geq \sum_{k=1}^K \langle q_k - q_\star, c_k \rangle \geq \frac{K}{16} \cdot \min \Bcb{\frac{\eta}{5}, \frac{1}{2}} + \min\Bcb{- 0.5 + \frac{1}{2\eta} \log \frac{N}{4}, 0.25 K} \\
    & \geq \min\Bcb{\frac{1}{2} \sqrt{\frac{1}{10} K \log \frac{N}{4}}, \frac{K}{32}} - 0.5.
\end{align*}
Recalling that $N = S-5$, we have $R_K = \Omega \brb{\min\bcb{\sqrt{K\log S}, K}}$ for an MDP where the sparsity level is $M = 3$, concluding the proof.
\end{proof}
\section{Efficient implementation of OMD using our regularizer}\label{apx:implementation}

In this section, we describe how the OMD update with our regularizer from Section \ref{sec:benefits_of_sparsity} defined in (\ref{eq:regularizer}) can be computed efficiently. This closely follows Appendix B.1 of \cite{rosenberg2020stochastic}, who provide a similar description for the negative entropy.

Recall the regularizer $\psi_p(q) = p \sum_{s\in \cS} \sum_{a \in \cA} q(s,a)^{1+1/p} - p$ for $q \in \cR_{\geq0}^\Gamma$. We have
\begin{align*}
    \nabla {\psi_p}(q) = (p+1) \cdot q(s,a)^{1/p}.
\end{align*}
The Bregman divergence is defined as:
\begin{align*}
    D_{\psi_p}(q,q') & = {\psi_p}(q) - {\psi_p}(q') - \langle \nabla {\psi_p}(q'), q - q' \rangle \\
    & = \sum_{s\in \cS} \sum_{a \in \cA} \Bcb{ p \cdot q(s,a)^{1+1/p} - p \cdot q'(s,a)^{1+1/p}}  \\ & \qquad - (p+1) \sum_{s\in \cS} \sum_{a \in \cA} \Bcb{q'(s,a)^{1/p} q(s,a) - q'(s,a)^{1+1/p}} \\
    & = \sum_{s\in \cS} \sum_{a \in \cA} \Bcb{q'(s,a)^{1+1/p} + q(s,a) \cdot \bsb{p \cdot q(s,a)^{1/p} - (p+1) \cdot q'(s,a)^{1/p}}}.
\end{align*}
Recall that OMD with the above regularizer computes the occupancy measures as follows - see (\ref{eq:mirror_descent}):
\begin{align*}
    q_1 = \argmin_{q \in \Delta(T)} \psi_p(q), \qquad q_{k+1} = \argmin_{q \in \Delta(T)} \Bcb{ \eta \cdot \inner{q,c_k} + D_{\psi_p}(q,q_k)} \,.
\end{align*}
As shown in \cite{orabona2019modern} (Theorem 6.15), each of these steps can be split into an unconstrained minimization step, and a projection step. Thus, $q_1$ can be computed as follows
\begin{align*}
    q_1' & = \argmin_{q \in \cR_{\geq 0}^{\Gamma}} {\psi_p}(q) \\
    q_1 & = \argmin_{q \in \Delta(T)} D_{\psi_p}(q, q_1'),
\end{align*}
where $q_1'$ has a closed-from solution $q_1'(s,a) = 1$ for every $s \in \cS$ and $a \in \cA$. Similarly, $q_{k+1}$ is computed as follows for every $k = 1, ..., K-1$:
\begin{align*}
    q_{k+1}' & = \argmin_{q \in \cR_{\geq 0}^{\Gamma}} \Bcb{ \eta \cdot \inner{q,c_k} + D_{\psi_p}(q,q_k)} \\
    q_{k+1} & = \argmin_{q \in \Delta(T)} D_{\psi_p}(q, q_{k+1}'),
\end{align*}
where again $q_{k+1}'$ has a closed-from solution $q_{k+1}'(s,a) = \Bsb{q_k(s,a)^{1/p} - \frac{\eta}{p+1} c_k(s,a)}^p_+$ for every $s \in \cS$ and $a \in \cA$ (follows from straightforwardly differentiating above objective and setting to 0 and accounting for the non-negativity of occupancy measures) - we use notation $a_+ = \max\bcb{0,a}$.

For the projection step, we start by formulating it as a constrained convex optimization problem:
\begin{align*}
    \min_{q \in \cR^\Gamma} D_{\psi_p}(q, q_{k+1}') \quad \text{ s.t.\ } \quad & \sum_{a \in \cA} q(s,a) - \sum_{s' \in \cS} \sum_{a' \in \cA} P(s'|s,a)q(s',a') = \mathbb{I} \bcb{s = s_0} \qquad \forall s \in \cS \\
    & \sum_{s \in \cS} \sum_{a \in \cA} q(s,a) \leq T \\
    & q(s,a) \geq 0 \qquad \forall (s,a) \in \cS \times \cA.
\end{align*}
The problem can be solved by considering the Lagrangian with Lagrange multipliers $\lambda$ and $\bcb{v(s)}_{s \in \cS}$:
\begin{align*}
    \mathcal{L} (q, \lambda, v) & = D_{\psi_p}(q,q_{k+1}') + \lambda \Brb{\sum_{s\in \cS} \sum_{a \in \cA} q(s,a) - T} + \sum_s v(s) \Brb{\sum_{s',a'} P(s|s',a')q(s',a') + \mathbb{I}\bcb{s = s_0} - \sum_{a} q(s,a)} \\
    & = D_{\psi_p}(q,q_{k+1}') +\sum_{s\in \cS} \sum_{a \in \cA} q(s,a) \Brb{\lambda + \sum_{s' \in \cS} P(s'|s,a)v(s') - v(s)} + v(s_0) - \lambda T,
\end{align*}
Differentiating the Lagrangian with respect to any $q(s,a)$ and setting to $0$, we get
\begin{align*}
    \frac{\partial \mathcal{L}(q, \lambda, v)}{\partial q(s,a)} & = \nabla {\psi_p}(q) (s,a) - \nabla {\psi_p}(q_{k+1}') (s,a) + \lambda + \sum_{s' \in \cS} P(s'|s,a) v(s') - v(s) \\
    & = (p+1) q(s,a)^{1/p} - (p+1) q_{k+1}'(s,a)^{1/p} + \lambda + \sum_{s' \in \cS} P(s'|s,a) v(s') - v(s) = 0. \\
    \implies q_{k+1}(s,a) & = \Bsb{q_{k+1}'(s,a)^{1/p} - \frac{\lambda + \sum_{s' \in \cS} P(s'|s,a) v(s') - v(s)}{p+1}}^p_+.
\end{align*}
This formula is also valid for $k=0$ by setting $c_0(s,a) = 0$ and $q_0(s,a) = 1$ for every $s \in \cS$ and $a \in \cA$.

To compute the value of $\lambda$ and $v$ at the optimum, we write the dual problem $\mathcal{D}(\lambda, v) = \min_q \mathcal{L}(q, \lambda, v)$ by substituting $q_{k+1}$ back into $\mathcal{L}$: 
\begin{align*}
    \mathcal{D}(\lambda, v) & = D_{\psi_p}(q_{k+1}, q_{k+1}') + \sum_{s\in \cS} \sum_{a \in \cA} q_{k+1}(s,a)  \Brb{\lambda + \sum_{s' \in \cS} P(s'|s,a) v(s') - v(s)} + v(s_0) - \lambda T.
\end{align*}
Recall that $q_{k+1}'(s,a) = \Bsb{q_k(s,a)^{1/p} - \frac{\eta}{p+1} c_k(s,a)}^p_+$, so (ignoring terms independent of $\lambda, v$, e.g.\ $q_{k+1}'(s,a)$):
\begin{align*}
    D_{\psi_p}(q_{k+1}, q_{k+1}') & = \sum_{s\in \cS} \sum_{a \in \cA} \Bcb{q_{k+1}'(s,a)^{1+1/p} + q_{k+1}(s,a) \cdot \bsb{p q_{k+1}(s,a)^{1/p} - (p+1)q_{k+1}'(s,a)^{1/p}}} \\
    & \propto \sum_{s\in \cS} \sum_{a \in \cA} \Bcb{q_{k+1}(s,a) \cdot \Bsb{p \Brb{q_{k+1}'(s,a)^{1/p} - \frac{\lambda + \sum_{s' \in \cS} P(s'|s,a) v(s') - v(s)}{p+1}}_+ \\ & \qquad \qquad - (p+1) q_{k+1}'(s,a)^{1/p}}} \\
    & = \sum_{s\in \cS} \sum_{a \in \cA} \Bcb{q_{k+1}(s,a) \cdot \Bsb{p \Brb{q_{k+1}'(s,a)^{1/p} - \frac{\lambda + \sum_{s' \in \cS} P(s'|s,a) v(s') - v(s)}{p+1}} \\ & \qquad \qquad - (p+1) q_{k+1}'(s,a)^{1/p}}} \quad \text{ since if $q_{k+1}(s,a) = 0$, then the whole term is } 0 \\
    & = \sum_{s\in \cS} \sum_{a \in \cA} \Bcb{q_{k+1}(s,a) \cdot \Bsb{- q_{k+1}'(s,a)^{1/p} - \frac{p}{p+1} \brb{\lambda + \sum_{s' \in \cS} P(s'|s,a) v(s') - v(s)}}} \\
    \implies \mathcal{D}(\lambda, v) & \propto \sum_{s\in \cS} \sum_{a \in \cA} \Bcb{q_{k+1}(s,a) \cdot \Bsb{- q_{k+1}'(s,a)^{1/p} - \frac{p}{p+1} \brb{\lambda + \sum_{s' \in \cS} P(s'|s,a) v(s') - v(s)} \\ & \qquad \qquad + \brb{\lambda + \sum_{s' \in \cS} P(s'|s,a) v(s') - v(s)}}} + v(s_0) - \lambda T \\
    & = \sum_{s\in \cS} \sum_{a \in \cA} \Bcb{q_{k+1}(s,a) \cdot \Bsb{- q_{k+1}'(s,a)^{1/p} +\frac{\lambda + \sum_{s' \in \cS} P(s'|s,a) v(s') - v(s)}{p+1}}} + v(s_0) - \lambda T \\
    & = - \sum_{s\in \cS} \sum_{a \in \cA} q_{k+1}(s,a)^{1+1/p} + v(s_0) - \lambda T \\
    & = - \sum_{s\in \cS} \sum_{a \in \cA} \Bsb{q_{k+1}'(s,a)^{1/p} - \frac{\lambda + \sum_{s' \in \cS} P(s'|s,a) v(s') - v(s)}{p+1}}^{1+p}_+ + v(s_0) - \lambda T.
\end{align*}
Maximizing the dual gives $\lambda$ and $v$ or equivalently, we can minimize the negation of the dual:
\begin{align*}
    \lambda_{k+1}, v_{k+1} = \argmin_{\lambda \geq 0, v} \sum_{s\in \cS} \sum_{a \in \cA} \Bsb{q_{k+1}'(s,a)^{1/p} - \frac{\lambda + \sum_{s' \in \cS} P(s'|s,a) v(s') - v(s)}{p+1}}^{1+p}_+ - v(s_0) + \lambda T.
\end{align*}
This is a convex optimization problem subject only to non-negativity constraints, and can be efficiently solved using iterative methods ( e.g.\ gradient descent).
\section{The benefits of sparsity - upper bounds}

We restate the main theorem proved in \Cref{sec:benefits_of_sparsity}.
\FullInfoBaseCase*
\label{apx:full_info_base_case}

We include the missing details from the proof given in \Cref{sec:benefits_of_sparsity}. First, recall from (\ref{eq:regularizer}) that
\begin{align*}
    & \psi_p(q)  = p \cdot \Brb{-1 + \norm{q}^{1 + 1/p}_{1 + 1/p}} = p \cdot \Brb{-1 + \sum_{s\in \cS} \sum_{a \in \cA} |q(s,a)|^{1+1/p}} \\
    \implies & \frac{\partial \psi_p(q)}{\partial q(s,a)} = (p+1) q(s,a)^{1/p} \\
    \implies & \frac{\partial^2 \psi_p(q)}{\partial q(s,a)^2} = \Brb{1 + \frac{1}{p}} q(s,a)^{1/p-1} \\
    \implies & \nabla \psi_p(q) = (p+1) q^{1/p}, \qquad \nabla^2 \psi_p(q) = \mathrm{diag}\Brb{{\frac{p+1}{p} q^{1/p - 1}}}
\end{align*}
We implicitly assumed here that $\psi_p$ is defined on $\cR_{\geq 0}^\Gamma$. The missing details are:
\begin{itemize}
    \item $\psi_p$ satisfies the condition (\ref{eq:regularizer_condition}) with $\alpha = 1$:     
    \begin{align*}
        \nabla \psi_p(q) \in \Bsb{\nabla \psi_p(q_k), \nabla \psi_p(q_k) - \eta c_k} & \implies q^{1/p}(s,a) \leq q_k^{1/p}(s,a) \\
         & \implies \frac{1}{q(s,a)} \geq \frac{1}{q_k(s,a)} \\
         & \implies \frac{1}{q^{1-1/p}(s,a)} \geq \frac{1}{q_k^{1-1/p}(s,a)} \\
         & \implies \nabla^2 \psi_p(q) \succeq \nabla^2 \psi_p(q_k).
    \end{align*}
    \item $\nabla^2 \psi_p(q)^{-1} = \mathrm{diag}\Brb{{\frac{p}{p+1} q^{1 - 1/p}}}$: follows directly from the expression for $\nabla^2 \psi_p(q)$ above.
\end{itemize}

We now turn to the description of the parameter-free algorithm and the proof of its corresponding regret bound (\Cref{thm:full_info_sparse_SSP_UB}).

\subsection{Sparse-agnostic bound}

For the unknown sparsity level, we use the same approach as in \cite{Kwon16sparse}, dividing the episode horizon into segments, where each segment will run OMD from scratch with an increasing sparsity level guess. Crucially, there will be at most $O(\log \log M)$ such segments.

Define
\begin{itemize}
    \item $M = \max_{k \in [K]} \norm{c_k}_0$, the true sparsity level across the horizon. 
    \item $B = \lceil \log_2\log_2 M \rceil$, the maximum number of segments.
    \item $m(b) = 2^{2^b}$, the assumed sparsity level during the $b$-th segment (or interval $I(b)$ below). The reason to use a double exponential is that this sparse-agnostic procedure brings an extra $B$ factor to the regret bound: if we use $2^b$ then $B = O(\log M)$ which harms the regret bound.
    \item for $1 \le b \le B$, $\tau(b) = \min\set{1\le k \le K \,|\, \norm{c_k}_0 > m(b)}$, the first episode in which the sparsity level of the loss vector exceeds $m(b)$. We also define $\tau(0) = 0$ and $\tau(B) = K$.
\end{itemize}

Using this notation, we can partition the horizon $[K]$ as intervals $(I(b))_{b \in [B]}$ according to the episodes $\tau(b)$ where the thresholds $m(b)$ are first exceeded. For $1 \leq b \leq B$:
\[
I(b) = \begin{cases}
        [\tau(b-1) + 1, \tau(b)] & \text{if } \tau(b-1) < \tau(b)\\
        \emptyset & \text{if } \tau(b-1) = \tau(b)
    \end{cases}
\]
Let $b_k = \min\set{b \ge 1 \,|\, \tau(b) \ge k}$ be the index of the only interval to which episode $k$ belongs. 

Now we define the OMD parameters used in interval $I(b)$, in which we essentially use the parameters from Theorem \ref{thm:full_info_sparse_SSP_UB_base} assuming the sparsity level is $m(b)$:
\begin{itemize}
    \item the parameter of our regularizer is $p(b) = \log(m(b)T)$.
    \item the step-size is $\eta(b) = \sqrt{\frac{p(b)T^{1+1/p(b)}}{DK m(b)^{1/p(b)}}}$.
\end{itemize}

Recall that our regularizer with parameter $p$ is given by $\psi_{p}(q) = p\brac{-1 + \norm{q}_{1 + 1/p}^{1 + 1/p}}$.
At episode $k$, we use the parameter $p(b_k)$ defined above, i.e. using the index value $b_k$ of the interval $I(b_k)$ to which episode $k$ belongs to. The OMD update is then defined by:
\[
q_k = \nabla \psi^\star_{p(b_k)}\brac{\eta(b_k)\sum_{k' <k, \, k' \in I(b_k)} c_{k'}}, \,\, k = 1,\ldots,K\,.
\]
The full procedure is given in Algorithm \ref{alg:sparse-agnostic-omd}. The following lemma shows the cost of being sparse-agnostic is an additive $T$ term and a double-logarithmic factor in the sparsity level $M$.

\begin{algorithm}[t]
  \caption{Sparse-Agnostic Mirror Descent}
  \label{alg:sparse-agnostic-omd}
  \begin{algorithmic}
    \State \textbf{Input}: $T,K,D$
    \KwInitialize{$p \leftarrow \log 2^2T,\,\, \eta \leftarrow \sqrt{T^{1+1/p}/(pDK2^{2/p})},\,\, b \leftarrow 1$, $q_1 \leftarrow \argmin_{q \in \Delta(T)} \psi_p(q)$}
\For{$k=1,\ldots,K$}
    \State Play $q_k$ and Observe $c_k$
    \If{$\norm{c_k}_0 \le 2^{2^b}$}
    \State $q_{k + 1} = \argmin_{q \in \Delta_T} \inner{q,c_k} + D_{\psi_p}(q,q_k)$
    \Else
    \State $b \leftarrow \ceil{\log_2\log_2\norm{c_k}_0}$
    \State $p \leftarrow \log2^{2^b}T$
    \State $\eta \leftarrow \sqrt{pT^{1+1/p}/(DK2^{2/p})}$
    \State $q_{k+1} \leftarrow \argmin_{q \in \Delta(T)} \psi_p(q)$
    \EndIf
\EndFor
  \end{algorithmic}
\end{algorithm}

\begin{lemma}\label{lemma:sparse_agnostic}
    Consider running Algorithm \ref{alg:sparse-agnostic-omd}. If $T>e$ is such that $\cmp \in \Delta(T)$, then $\EE{R_K} \le \cO\Brb{TB + B \sqrt{DKT\log(MT)}}$.
\end{lemma}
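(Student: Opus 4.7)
The plan is to decompose the regret according to the intervals $I(b)$ defined in the algorithm and apply Theorem \ref{thm:full_info_sparse_SSP_UB_base} within each interval, paying only an additive $T$ per interval for the single ``boundary'' episode on which the sparsity threshold is exceeded. Concretely, I would write
\[
R_K \;=\; \sum_{b=1}^{B}\Bigl( \underbrace{\sum_{k \in I(b) \setminus \{\tau(b)\}} \langle q_k - \cmp, c_k\rangle}_{\text{well-sparse within interval}} \;+\; \underbrace{\langle q_{\tau(b)} - \cmp, c_{\tau(b)}\rangle}_{\text{boundary episode}} \Bigr).
\]
The boundary term is bounded by $\langle q_{\tau(b)}, c_{\tau(b)}\rangle \le \|q_{\tau(b)}\|_1 \|c_{\tau(b)}\|_\infty \le T$ since $q_{\tau(b)} \in \Delta(T)$ and costs lie in $[0,1]$. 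So these contribute at most $BT$ in total.

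For the remaining episodes in $I(b)$, by the definition of $\tau(b)$ every cost $c_k$ satisfies $\|c_k\|_0 \le m(b)$. On this sub-sequence the algorithm is exactly OMD with regularizer $\psi_{p(b)}$, step-size $\eta(b)$, and restarted initialisation $q_{\tau(b-1)+1} = \argmin_{q\in\Delta(T)}\psi_{p(b)}(q)$. I would therefore re-run the proof of Theorem \ref{thm:full_info_sparse_SSP_UB_base} verbatim on the interval, noting that: (i) the penalty term still satisfies $\psi_{p(b)}(\cmp) - \psi_{p(b)}(q_{\tau(b-1)+1}) \le p(b)T^{1+1/p(b)}$ by (\ref{eq:sparse_UB_proof_penaly_term}) at the fresh start; (ii) the stability bound goes through using $\|c_k\|_1 \le m(b)$ in the Jensen step; and (iii) replacing $K$ by $|I(b)|\le K$ only strengthens the stability term. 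With $\eta(b)=\sqrt{p(b)T^{1+1/p(b)}/(DK\,m(b)^{1/p(b)})}$ and $p(b)=\log(m(b)T)$, the usual calculation yields a per-interval bound of order $\sqrt{DKT\log(m(b)T)}$.

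Finally I would sum over $b=1,\dots,B$. Since $m(b)=2^{2^b}\le 2^{2^B}$ and $B=\lceil \log_2\log_2 M\rceil$, we have $m(b)\le M^2$, hence $\log(m(b)T)=O(\log(MT))$ uniformly in $b$. Combining gives
\[
\mathbb{E}[R_K] \;\le\; BT + \sum_{b=1}^{B} O\!\left(\sqrt{DKT\log(m(b)T)}\right) \;=\; O\!\left(BT + B\sqrt{DKT\log(MT)}\right),
\]
as claimed. The main obstacle is the bookkeeping at the boundaries of the intervals: isolating $\tau(b)$ so that the within-interval costs are genuinely $m(b)$-sparse is what allows a clean reuse of the Jensen step (\ref{eq:sparse_jensen}); without this split the stability argument would fail precisely on the episode that triggers the parameter refresh. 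The doubly-exponential schedule $m(b)=2^{2^b}$ is what keeps $B=O(\log\log M)$ and prevents the additive $BT$ and the $B\sqrt{\cdot}$ inflation from dominating.
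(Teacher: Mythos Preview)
Your proposal is correct and follows essentially the same approach as the paper: split the horizon into the intervals $I(b)$, pay at most $T$ for the boundary episode $\tau(b)$, re-run the analysis of Theorem~\ref{thm:full_info_sparse_SSP_UB_base} on the remaining episodes (where $\|c_k\|_0\le m(b)$) with the interval's parameters $p(b),\eta(b)$, then use $m(b)\le M^2$ to uniformize $\log(m(b)T)=O(\log(MT))$ before summing over $b$. The only cosmetic difference is that the paper keeps the term $\sum_{k\in I(b)}(\langle c_k,q_{\pi^\star}\rangle+1)$ explicit until after summing over $b$ (so that it telescopes into a single $\sum_{k=1}^K$), whereas you bound each interval directly by $O(\sqrt{DKT\log(m(b)T)})$; both give the same order.
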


\begin{proof}
Fix (interval) $b \in [B]$. On the time interval $I(b)$, we run OMD with regularizer $\psi_{p(b)}$, learning rate $\eta(b)$ and we consider the (expected) interval regret $R(b) = \sum_{k \in I(b)} \inner{q_{\pi_k}, c_k} -  \min_{\pi \in \Pi_p} \sum_{k \in I(b)} \inner{q_{\pi^\star}, c_k}$. Crucially we know that up to the last time step of the interval, we have a bound $m(b)$ on the sparsity for all rounds but the last. 

Since $J^\pi_k \leq T$ for any $k$, we just consider the regret on the rounds not including $\tau(b)$ for which we suffer a regret of at most $T$:
\begin{align*}
    R(b) & = \sum_{k \in I(b)} \inner{q_{\pi_k} - q_{\pi^\star}, c_k} 
    \leq T + \sum_{\substack{k \in I(b) \\ k < \tau(b)}} \inner{q_{\pi_k} - q_{\pi^\star}, c_k}.
\end{align*}

For the other rounds we follow similar steps as in the proof of Theorem \ref{thm:full_info_sparse_SSP_UB_base}:
\begin{align*}
    \sum_{\substack{k \in I(b) \\ k < \tau(b)}} \inner{q_{\pi_k} - q_{\pi^\star}, c_k} 
    & \leq \frac{ p(b)T^{1 + 1/p(b)}}{\eta(b)} + \frac{\eta(b)}{2} \sum_{\substack{k \in I(b) \\ k < \tau(b)}}\norm{c_k}_1^{1/p(b)}(\inner{c_k,q_k}+ 1) \\
    & \leq \frac{ p(b)T^{1 + 1/p(b)}}{\eta(b)} + \frac{\eta(b) m(b)^{1/p(b)}}{2} \sum_{\substack{k \in I(b) \\ k < \tau(b)}}(\inner{c_k,q_k}+ 1) \\
    \implies \sum_{\substack{k \in I(b) \\ k < \tau(b)}} \inner{q_{\pi_k} - q_{\pi^\star}, c_k} 
    & \leq \frac{1}{1-\eta(b)m(b)^{1/p(b)}} \Bsb{\frac{ p(b)T^{1 + 1/p(b)}}{\eta(b)} + \frac{\eta(b) m(b)^{1/p(b)}}{2} \sum_{\substack{k \in I(b) \\ k < \tau(b)}}(\inner{c_k,q_{\pi^\star}}+ 1)} \\
    & \leq  \frac{ 2 p(b)T^{1 + 1/p(b)}}{\eta(b)} + \eta(b) m(b)^{1/p(b)} \sum_{\substack{k \in I(b) \\ k < \tau(b)}}(\inner{c_k,q_{\pi^\star}}+ 1) \\
    & \leq  \frac{ 2 p(b)T^{1 + 1/p(b)}}{\eta(b)} + \eta(b) m(b)^{1/p(b)} \sum_{\substack{k \in I(b)}}(\inner{c_k,q_{\pi^\star}}+ 1),
\end{align*}
where we used that $\frac{1}{1-\eta(b)m(b)^{1/p(b)}} \leq 2$ which is the case if $K > 8eT\log(MT)$ using how we defined $\eta(b)$. Using $\eta(b) = \sqrt{\frac{p(b)T^{1+1/p(b)}}{DK m(b)^{1/p(b)}}}$, we have
\begin{align*}
    R(b) 
    & \leq T + \sqrt{p(b) T^{1+1/p(b)} D K m(b)^{1/p(b)}} \cdot \Brb{2 + \frac{\sum_{\substack{k \in I(b)}}(\inner{c_k,q_{\pi^\star}}+ 1)}{DK}} \\
    & \leq T + \sqrt{T D K e \log\Brb{m(b)T}} \cdot \Brb{2 + \frac{\sum_{\substack{k \in I(b)}}(\inner{c_k,q_{\pi^\star}}+ 1)}{DK}} \\
    & \leq T + \sqrt{2 T D K e \log(MT)} \cdot \Brb{2 + \frac{\sum_{\substack{k \in I(b)}}(\inner{c_k,q_{\pi^\star}}+ 1)}{DK}},
\end{align*}
where we used that $p(b) = \log\Brb{m(b)T}$ and that
\begin{align*}
    b \leq B \leq 1 + \log_2 \log_2 M & \implies m(b) = 2^{2^b} \leq 2^{2^{1 + \log_2 \log_2 M}} = 2^{2 \cdot \log_2 M} = M^2 \\
    & \implies \log \Brb{m(b)T} = \log \Brb{M^2 T} \leq 2\log(MT).
\end{align*}

Then
\begin{align*}
    E[R_K] & \leq \sum_{b=1}^B R(b) \\
    & \leq TB + \sqrt{2 T D K e \log(MT)} \cdot \Brb{2B +  \sum_{b=1}^B \frac{\sum_{\substack{k \in I(b)}}(\inner{c_k,q_{\pi^\star}}+ 1)}{DK}} \\
    & = TB + \sqrt{2 T D K e \log(MT)} \cdot \Brb{2B + \frac{1}{D} + \frac{1}{DK}\sum_{k=1}^K \inner{c_k,q_{\pi^\star}}} \\
    & \leq TB + 4 B \sqrt{2 T D K e \log(MT)},
\end{align*}
where the last step uses that $\sumkK \langle q_{\pi^\star}, c_k \rangle \leq \sumkK \langle q_{\pi_f}, c_k \rangle \leq K \norm{q_{\pi_f}}_1 \norm{c_k}_\infty \leq DK$ and $D \geq 1$.
\end{proof}

\subsection{Fully parameter-free bound}

We now turn our attention to the unknown hitting time of the optimal policy $T_\star$, where we can exploit the same technique presented in \cite{chen2021minimax}.

We run $N \approx \log K$ instances of \Cref{alg:sparse-agnostic-omd} where the $j$-th instance will set its parameter $T$ as $b(j)$ which is roughly $2^j$, so that there always exists an instance $j_\star$ such that $b(j_\star)$ is close to the unknown $T_\star$.
Specifically, we run a scale invariant meta algorithm with a correction term as in \cite{chen2021minimax} to obtain the desired bound (details in \Cref{alg:full-paramter-free-omd}).

\begin{algorithm}[t]
  \caption{Fully Parameter-Free Online Mirror Descent for Sparse SSPs}
  \label{alg:full-paramter-free-omd}
  \begin{algorithmic}
    \State \textbf{Define} $j_0=\ceil{\log_2 T^{\pi_f}(s_0)} - 1, b(j) = 2^{j_0 + j},  N = \ceil{\log_2 K} - j_0, \eta_j = \brb{\sqrt{D b(j)  K\log(b(j) M)}}^{-1/2}$
    \State \textbf{Define} $\psi_p(p) = \sum_{j = 1}^N\frac{1}{\eta_j}p(j)\log p(j)$ 
    \KwInitialize{$p_1\in \Delta_N$, such that $p_1(j) = \frac{\eta_j}{\eta_1N}, \forall j \neq 1$}
    \KwInitialize{$N$ instances of \Cref{alg:sparse-agnostic-omd}, with $j-$th instance $T = b(j)$}
\For{$k=1,\ldots,K$}
    \State Obtain occupancy measures $q_k^{j}$ for $j \in [N]$
    \State Sample $j_k \sim p_k$, execute policy induced by $q_k^{j_k}$
    \State Receive $c_k$ and send it to all instances.
    \State Compute $\ell_k(j) = \inner{q_k^j,c_k}$, $a_k(j) = 4\eta_j\ell_k^2(j)$
    \State Update $p_{k+1} = \argmin_{p \in \Delta_N} \inner{p, \ell_k + a_k} + D_{\psi_p}(p,p_k)$
\EndFor
  \end{algorithmic}
\end{algorithm}

\ParameterFreeUpperBound*
\label{apx:full_info_param_free}
\begin{proof}

We closely follow the steps of the proof of Theorem 2 in \cite{chen2021minimax}. We have
\begin{itemize}
    \item $b(1) \geq T^{\pi_f}(s_0)$ so for all instances $\Delta(b(j))$ is non-empty and the instance is well-defined.
    \item Let $j^\star$ be the index of the instance with smallest $b(j^\star)$ that is larger than $T^\star$, i.e.\ $\frac{b(j^\star)}{2} \leq T_\star \leq b(j^\star)$. This instance exists since $b(N) \geq K > T_\star$.
\end{itemize}

We start by decomposing the regret into the regret of the meta algorithm \textit{w.r.t.} finding $j_\star$ and the regret of the $j_\star$ instance \textit{w.r.t.} the best policy:

\begin{align*}
    \EE{R_K} &= {\sumkK \sum_{j=1}^N p_k(j)\inner{q_k^j,c_k} - \sumkK \inner{q_{\pi^\star}, c_k}}\\
    &= {\sumkK \sum_{j=1}^N p_k(j)\inner{q_k^j,c_k} - \inner{q_k^{j_\star}, c_k}}+{\sumkK\inner{q_k^{j_\star} - q_{\pi^\star}, c_k}}\\
    &= \underbrace{{\sumkK \inner{p_k - e_{j_\star}, \ell_k}}}_{\text{Meta-Regret}}+\underbrace{{\sumkK\inner{q_k^{j_\star} - q_{\pi^\star}, c_k}}}_{j_\star-~\text{Regret}}
\end{align*}

where we consider $p_k, \ell_k$ as $N$-dimensional vectors and $e_{j_\star}$ as the basis vector with the $j_\star$ coordinate equal to $1$. 

By \Cref{lemma:sparse_agnostic} the $j_\star-$Regret is bounded by $\mathcal{O}\brac{Bb(j_\star) + B\sqrt{Db(j_\star)K\log(b(j_\star)M)}} = \mathcal{O}\brac{BT_\star + B\sqrt{D T_\star K\log(T_\star M)}}$.

This also allows to say that:
\begin{align*}
    {\sumkK \inner{q_k^{j_\star}, c_k}} &\le {\sumkK \inner{\cmp, c_k}} + \mathcal{O}\brac{BT_\star + B\sqrt{DT_\star K\log(T_\star M)}}\\
    &\le {\sumkK \inner{q_{\pi_f}, c_k}} + \mathcal{O}\brac{BT_\star + B\sqrt{DT_\star K\log(T_\star M)}}\\
    &\le DK + \mathcal{O}\brac{BT_\star + B\sqrt{DT_\star K\log(T_\star M)}}\,,
\end{align*}
which we will make use of just below.

For the meta algorithm regret, we can use Lemma 12 of \cite{chen2021minimax} which guarantees that:
\begin{align*}
    \E \Bsb{\sumkK \inner{p_k - e_{j_\star}, \ell_k}}
    & = \mathcal{O} \Bbrb{ \frac{2 + \log\brac{N\sqrt{\frac{b(j_\star)}{b(1)}}}}{\eta_{j_\star}} + 4\eta_{j_\star}b(j_\star){\sumkK \inner{q_k^{j_\star}, c_k}}} \\
    & = \mathcal{O} \Bbrb{ \frac{2 + \log\brac{N\sqrt{\frac{b(j_\star)}{b(1)}}}}{\eta_{j_\star}} + 4\eta_{j_\star}b(j_\star) \Brb{DK + BT_\star + B\sqrt{DT_\star K\log(T_\star M)}}} \\
    & = \Tilde{\mathcal{O}} \Bbrb{ \frac{\log T_\star}{\eta_{j_\star}} + \eta_{j_\star} T_\star \Brb{DK + \sqrt{DT_\star K\log(T_\star M)}}} \\
    & = \Tilde{\mathcal{O}} \Bbrb{ \frac{\log T_\star}{\eta_{j_\star}} + \eta_{j_\star} DT_\star K} \\
    & = \Tilde{\mathcal{O}} \Brb{ \sqrt{DT_\star K \log T_\star}},
\end{align*}
where we used
\begin{itemize}
    \item the notation $\Tilde{\mathcal{O}}$ to ignore \textit{double}-logarithmic factors.
    \item $K \geq \frac{T_\star}{D} \log \brb{T_\star M}$.
    \item $\eta_{j_\star} = \sqrt{\frac{\log T_\star}{DT_\star K}}$.
    \item $D \geq 1$.
\end{itemize}
Combining everything gives the result.
\end{proof}
\section{State level sparsity}\label{app:state_level_sparsity}

In this section, we consider a different notion of state-level sparsity:
\begin{align*}
    M' = \max_k \max_{s \in \cS} \sum_{a \in \cA} \I \bcb{c_k(s,a) > 0} \leq A.
\end{align*}

From Theorem 4 in \cite{zhao2022dynamic}, we have a cumulative loss bound for a version of OMD with negative entropy regularisation:
\begin{align*}
    \EE{R_K} \le \Tilde{\cO}\Brb{\sqrt{T_\star \sum_{k=1}^K J_k^{\pi_\star}}},
\end{align*}
from which we can exploit the state-level sparsity:
\begin{align*}
    \sum_{k=1}^K J^{\pi^\star}_k & = \min_{\pi} \sum_{k=1}^K \E \Bsb{\sum_{t=1}^{I_\pi(s_0)} c_k(s^t, a^t) | P, \pi, s^1 = s_0 } \\
    & \leq \sum_{k=1}^K \E \Bsb{\sum_{t=1}^{I_{\pi_u}(s_0)} c_k(s^t, a^t) | P, \pi_u, s^1 = s_0 } \\
    & = \sum_{k=1}^K \E \Bsb{\sum_{t=1}^{I_{\pi_u}(s_0)} \frac{1}{A} \sum_{a \in \cA} c_k(s^t, a) | P, \pi_u, s^1 = s_0 } \\
    & \leq \frac{KM'}{A} \E \bsb{I_{\pi_u}(s_0) | P, \pi_u, s^1 = s_0} \\
    & = \frac{KM'}{A} T^u(s_0),
\end{align*}
where $\pi_u$ is the uniform policy ($\pi_u(a|s) = 1/A$) and $T^u$ is its corresponding hitting-time. This gives the following regret bound:
\begin{align*}
    \EE{R_K} \le \Tilde{\cO}\Brb{\sqrt{KT_\star \frac{M'}{A} T^u(s_0)}}.
\end{align*}
We can actually relate this result back to our original notion of sparsity since we know that $M' \leq M$. If $M \leq A$, then we can non-trivially bound $M'$ by $M$ and achieve a regret of $\Tilde{\cO}\Brb{\sqrt{KT_\star \frac{M}{A} T^u(s_0)}}$.

This result highlights that it is possible to achieve polynomial improvements from sparsity if we consider state-level sparsity $M'$ or $M < A$. However, this comes at the cost of a $T^u(s_0)$ factor. In the worst case, this additional factor will cancel the polynomial improvement. It could be an interesting avenue of future research to understand specific structural properties of the MDP that may lead to real polynomial improvements.

In the experts setting ($S=1$), we have $M = M'$ and $T^u(s_0) = 1$ and this bound recovers the $\frac{M}{A}$ improvement of the expert setting. This provides some further insights into the performance of OMD with negative entropy regularisation and that in particular issues arise when there is at least 1 state with non-sparse costs even though most other states may have sparse costs.

\section{The benefits of sparsity - lower bound under sparsity}\label{app:proof_sparse_full_info_LB}

In this appendix, we prove our sparse lower bound result from Section \ref{sec:sparse_general_LB}. We first restate the result.

\LowerBoundFullInfo*
\label{apx:sparse_full_info_LB}
\begin{proof}

Fix $B = \lceil \frac{\log S/2}{\log 2}\rceil - 2$. Fix $N = 2^{B+1} \geq 2^{\frac{\log S/2}{\log 2} - 1} = \frac{S}{4}$. Fix $L = \min\Bcb{M-1, N \cdot (A-1)} \geq \frac{M}{8}$. Fix $D' = D - B-2$, $T' = T_\star - B-1$, with $T_\star \geq D$.

 We first describe the SSP instance with stochastic costs. Consider the following MDP $\mathcal{M} = (\mathcal{S}, \mathcal{A}, p, s_0, g)$ illustrated in Figure \ref{fig:MDP_sparse_LB} and that we formally define below.
 
\begin{figure}[h]
    \centering
    \begin{tikzpicture}[->, >=stealth, node distance=2cm]
        \node[draw, circle] (s0) at (0, 0) {$s_0$};
        \node[draw, circle] (s1) at (-2, -2) {...};
        \node[draw, circle] (s2) at (2, -2) {...};
        \node[draw, circle] (s3) at (-3, -4) {$s_1$};
        \node[draw, circle] (s4) at (-1, -4) {$s_2$};
        \node[draw, circle] (s5) at (1, -4) {$s_{N-1}$};
        \node[draw, circle] (s6) at (3, -4) {$s_N$};
        \node[draw, circle] (f) at (-2, -6) {$f$};
        \node[draw, circle] (g) at (2, -6) {$g$};

        \draw (s0) to node[above left] {$a_1$} (s1);
        \draw (s0) to node[above right] {$a_2$} (s2);
        \draw (s1) to node[above left] {$a_1$} (s3);
        \draw (s1) to node[above right] {$a_2$} (s4);
        \draw (s2) to node[above left] {$a_1$} (s5);
        \draw (s2) to node[above right] {$a_2$} (s6);

        \draw (s3) to[bend right=10] node[below left] {} (f);
        \draw (s4) to[bend right=10] node[below left] {} (f);
        \draw (s5) to[bend left=10] node[below right] {} (f);
        \draw (s6) to[bend left=10] node[below right] {} (f);

        \draw (s3) to[bend right=30] node[above left] {} (g);
        \draw (s4) to[bend right=30] node[above left] {} (g);
        \draw (s5) to[bend right=30] node[above left] {} (g);
        \draw (s6) to[bend right=30] node[above left] {} (g);

        \draw (f) to[bend right=30] node[below] {} (g);
        \draw (f) edge[loop below] node[below] {} (f);

        \draw (s3) edge[loop left] node[left] {} (s3);
        \draw (s4) edge[loop left] node[left] {} (s4);
        \draw (s5) edge[loop right] node[right] {} (s5);
        \draw (s6) edge[loop right] node[right] {} (s6);
    \end{tikzpicture}
    \caption{Diagram illustrating MDP construction for the proof of \Cref{thm:sparse_full_info_LB}. Details are given below.}\label{fig:MDP_sparse_LB}
\end{figure}
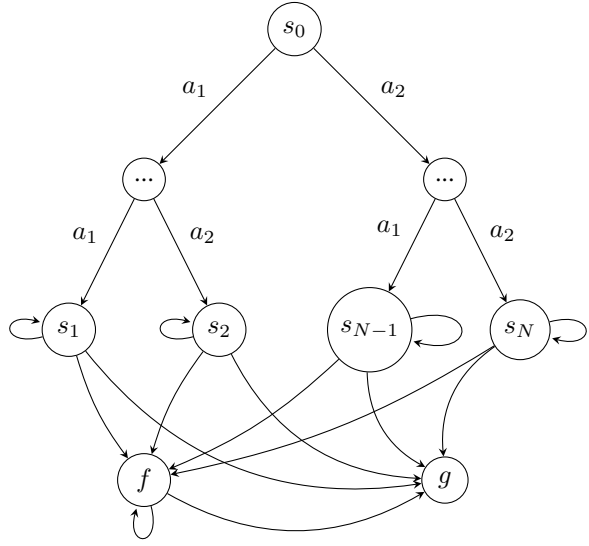

The first part of the states are represented by a binary tree of depth $B+2$ and allow us to formerly consider the $N$ states at the bottom of the tree that matter, while avoiding an assumption on the existence of a state with $A \approx S$ actions as was done in prior work \cite{chen2021minimax}. Each non-leaf node corresponds to a state with two actions transitioning (deterministically) to the left or right child respectively. The total number of nodes in the tree is
\begin{align*}
    \sum_{i=0}^{B+1} 2^i = 2^{B+2} - 1 \leq 2^{ \frac{\log S/2}{\log 2} + 1} - 1  = 2 \frac{S}{2} - 1 = S-1.
\end{align*}
The total number of leaf nodes is $N = 2^{B+1} \geq \frac{S}{4}$. Denote the set of states corresponding to the leaf nodes by $\cS_\ell = \bcb{s_1, ..., s_N}$. The root node is $s_0$. There is also one additional state denoted by $f$ (recall that the number of states in the tree is $\leq S-1$).

We consider the same action set across each state: $\cA = \bcb{a_1, ..., a_{A-1}, a_f}$. In the states of the binary tree where we have only described two actions, we can consider the other actions to remain in the same state deterministically with $0$ cost.

The transitions and costs are defined as follows:
\begin{itemize}
    \item For all states and actions in the tree that are not leaves, the transitions are specified above. The costs are all $0$.
    \item For $s_i \in S_\ell$: , and $a_j \in \cA$
    \begin{itemize}
        \item if $j = f$, then $p(f|s_i,a_f) = 1$ and $c_k(s_i,a_f) = 0$.
        \item if $j \in \bcb{1, ..., A-1}$ and $j + (A-1) \cdot (i-1) \leq L$, then $p(g|s_i, a_j) = \frac{1}{T'}$, $p(s_i|s_i,a_j) = 1 - \frac{1}{T'}$ and the cost is an independent sample from a Bernoulli distribution at each episode $k$: $c_k(s_i, a_j) \sim \ber\Brb{\frac{D'}{2T'}}$. 
        \item if $j \in \bcb{1, ..., A-1}$ and $j + (A-1) \cdot (i-1) > L$, then $a_j$ is the same as $a_f$, i.e.\ $p(f|s_i,a_j) = 1$ and $c_k(s_i,a_j) = 0$.
    \end{itemize}
    \item For $f$,
    \begin{itemize}
        \item $p(g|f,a_f) = \frac{1}{D'}$, $p(f|f,a_f) = 1 - \frac{1}{D'}$ and $c_k(f,a_f) = 1$.
        \item for all $a_j \in \cA \setminus \bcb{a_f}$, $p(f|f,a_j) = 1$ and $c_k(f,a_j) = 0$.
    \end{itemize}
\end{itemize}
Denote the above distribution for $c_k$ by $\cD$. In each episode there are at most $L + 1 \leq M$ non-zero costs, ensuring the condition on sparsity is respected.

For $i \in \bcb{1, ..., N}$, let $\cA_i$ correspond to the actions in state $s_i \in \cS_L$ which can transition directly to $g$ and $\cA \setminus \cA_i$ corresponds to the actions which deterministically transition to $f$ (e.g. if $(A-1)\cdot i \leq L$, then $\cA_i = \bcb{a_1,...,a_{A-1}}$).
For any proper policy $\pi$ independent of the stochastically generated costs in episode $k$, we have
\begin{align*}
    \E_{c_k \sim \cD} \bsb{J^\pi_k(s_i)} & = \E_{c_k \sim \cD} \Bsb{\sum_{a \in \cA_i} \pi(a|s_i) \Brb{c_k(s_i, a) + \Brb{1 - \frac{1}{T'}} J^\pi_k(s_i)} +  J^\pi_k(f) \sum_{a \notin \cA_i} \pi(a|s_i) } \\
    &  = \sum_{a \in \cA_i} \pi(a|s_i) \Brb{\E_{c_k \sim \cD} \bsb{c_k(s_i, a)} + \Brb{1 - \frac{1}{T'}} \E_{c_k \sim \cD} \bsb{J^\pi_k(s_i)}} + D' \cdot \sum_{a \notin \cA_i}^A \pi(a|s_i)  \\
    &  = \sum_{a \in \cA_i} \pi(a|s_i) \Brb{\frac{D'}{2T'} + \Brb{1 - \frac{1}{T'}} \E_{c_k \sim \cD} \bsb{J^\pi_k(s_i)}} + D' \cdot \Brb{1 - \sum_{a \in \cA_i} \pi(a|s_i)} \\
    \implies \E_{c_k \sim \cD} \bsb{J^\pi_k(s_i)} & \Brb{1 - \Brb{1 - \frac{1}{T'}} \sum_{a \in \cA_i} \pi(a|s_i)  } = \frac{D'}{2T'} \sum_{a \in \cA_i} \pi(a|s_i) + D' \cdot \Brb{1 - \sum_{a \in \cA_i} \pi(a|s_i)} \\
    \implies \E_{c_k \sim \cD} \bsb{J^\pi_k(s_i)} & = \frac{ \frac{D'}{2T'} \sum_{a \in \cA_i} \pi(a|s_i) + D' \cdot \Brb{1 - \sum_{a \in \cA_i} \pi(a|s_i)}}{1 - \Brb{1 - \frac{1}{T'}} \sum_{a \in \cA_i} \pi(a|s_i)  } \\
    & = \frac{D'}{2} \cdot \frac{ \frac{1}{T'} \sum_{a \in \cA_i} \pi(a|s_i) + 2 \Brb{1 - \sum_{a \in \cA_i} \pi(a|s_i)}}{ \frac{1}{T'} \sum_{a \in \cA_i} \pi(a|s_i) + \Brb{1 - \sum_{a \in \cA_i} \pi(a|s_i)}} \\
    & \geq \frac{D'}{2}.
\end{align*}
The optimal policy $\pi^\star$ is the policy that takes actions in the binary tree to reach state $s_{i^\star}$ and then $\pi^\star(a_{j^\star}|s_{i^\star}) = 1$ for $i^\star, j^\star = \argmin_{i,j: j + (A-1)\cdot i \leq L} \sum_{k=1}^K c_k(s_i, a_j)$. We have $J^{\pi^\star}_k(s_0) = J^{\pi^\star}_k(s_{i^\star})$ and for any $k \geq 1$ 
\begin{align*}
    J^{\pi^\star}_k(s_{i^\star}) & = c_k(s_{i^\star}, a_{j^\star}) + \Brb{1 - \frac{1}{T'}}  J^{\pi^\star}_k(s_{i^\star}) \\
    \implies J^{\pi^\star}_k(s_0) & = T' c_k(s_{i^\star}, a_{j^\star}) \\
    \implies \sum_{k=1}^K J^{\pi^\star}_k(s_0) & = T'  \sum_{k=1}^K c_k(s_{i^\star}, a_{j^\star}) = T' \min_{i,j: j + (A-1)\cdot i \leq L} \sum_{k=1}^K c_k(s_i, a_j).
\end{align*}
Hence,
\begin{align*}
    \E_{c_1, ..., c_K \overset{\text{iid}}{\sim} \cD} \bsb{R_K} & \geq \frac{D'}{2} \cdot K -  T' \cdot \E_{c_1, ..., c_K \overset{\text{iid}}{\sim} \cD} \Bsb{ \min_{i,j: j + (A-1)\cdot i \leq L} \sum_{k=1}^K c_k(s_i, a_j)} \\
    & = T' \cdot \Brb{\frac{D'}{2T'} \cdot K - \E_{c_1, ..., c_K \overset{\text{iid}}{\sim} \cD} \Bsb{ \min_{i,j: j + (A-1)\cdot i \leq L} \sum_{k=1}^K c_k(s_i, a_j)}} \\
    & = T' \cdot \E_{c_1, ..., c_K \overset{\text{iid}}{\sim} \cD} \Bsb{ \max_{i,j: j + (A-1)\cdot i \leq L} \sum_{k=1}^K   \Brb{\frac{D'}{2T'} - c_k(s_i, a_j)}}
\end{align*}
We now apply Theorem \ref{thm:max_zero_mean_RW} with $p = 1 - \frac{D'}{2T'} \geq \frac{1}{2}$, $d = L \geq 100$ (since $S(A-1) \geq 400$ and $M \geq 101$) and $n = K \geq \frac{800T_\star}{D} \log M \geq \frac{400T'}{D'} \log M \geq 200 \frac{p}{1-p}\log d$. We obtain: 
\begin{align*}
    \sup_{c_1, ..., c_K} \E \bsb{R_K} & \geq \E_{c_1, ..., c_K \overset{\text{iid}}{\sim} \cD} \bsb{R_K} \geq 0.02 T' \sqrt{K \Brb{1 - \frac{D'}{2T'}} \cdot \frac{D'}{2T'} \cdot \log L} - 1.5 T' \\ & = \Omega \Brb{\sqrt{K T_\star D \log M}},
\end{align*}
since $L \geq M/8$.
Note that since $T_\star \geq D$, the hitting-time of the fast-policy is $D' + B + 2 = D$ and the hitting time of the optimal is $T' + B+1 = T_\star$, as required. This concludes the proof.
\end{proof}
\section{Lower bound under unknown transitions}
\LowerBoundUnkownTrans*
\label{apx:sparse_unknown_trans_LB}
\begin{proof}
\begin{figure}[h]
    \centering
    \begin{tikzpicture}[->, >=stealth, node distance=2cm]
        \node[draw, circle] (s0) at (0, 0) {$s_0$};
        \node[draw, circle] (f) at (1, 3) {$f$};
        \node[draw, circle] (g) at (5, 0) {$g$};
        \node at (2.5,0) {$\ldots$};

        \draw (s0) to[out=20, in=160] node[above] {$a_\star$} (g);
        \draw (s0) to[out=340, in=200] node[below] {$a_n$} (g);

        \draw (s0) to[out=30, in = 280] node[above right] {$a_\star$} (f);
        \draw (s0) to[out=90, in = 200] node[above right] {$a_n$} (f);
        \draw (f) to[out=160, in = 180] node[above left] {+1} (s0);

        \end{tikzpicture}
    \caption{base case}\label{fig:lb-unknown-trans}
\end{figure}
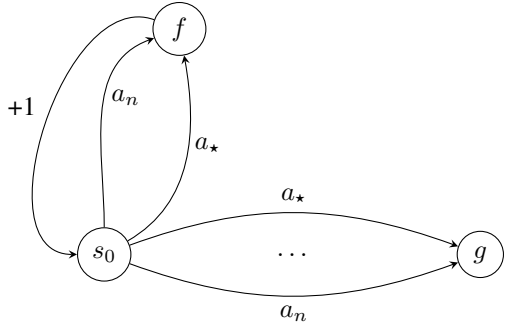

The idea is to inject sparsity into the lower bound construction of \cite{rosenberg2020near} and to see if sparsity helps. Imagine the simple SSP in \Cref{fig:lb-unknown-trans}, where at state $s_0$ there are $A$ available actions, all with zero cost, while in the state $f$ there is only one deterministic action with unit cost going back to $s_0$. Among them, there exists an action $a_\star$ such that the transition probabilities are given by: $P(g \mid s_0, a) = \frac{1}{D} - \epsilon \mathbb{I}(a \neq a_\star),$ and consequently, $P(f \mid s_0, a) = 1 - \frac{1}{D} + \epsilon \mathbb{I}(a \neq a_\star)$. The cost is therefore only suffered when the selected action transitions to the $f$ state. This will therefore not increase the hitting time of any proper deterministic policy while still inducing the desired sparsity.

Clearly, the optimal policy plays $a_\star$ at every time step to reach the goal as fast as possible and therefore $J^{\pi_\star}(s_0) = D - 1$.

Now, denote with $N_k$ the number of steps that the learner spends in $s_0$ in episode $k$ and $N_k^\star$ the number of steps that the learner picks action $a_\star$ in episode $k$. Note that $N_k$ is also the total cost that the learner suffers during episode $k$ minus one (since the last transition will not be paid). Thanks to our construction we can still prove Lemma C.1 in \cite{rosenberg2020near} as follows:
\begin{lemma}
    \[\EE{N_k} - 1 - J^{\pi_\star} = \epsilon\EE{N_k - N_k^\star}\]
\end{lemma}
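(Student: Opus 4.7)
The plan is to view each episode as a sequence of Bernoulli trials at $s_0$ and use a Wald-type (optional stopping) identity. Every time the learner is at $s_0$, it picks some action $a_i$ (possibly depending on history), which induces an instantaneous probability $Y_i := \frac{1}{D} - \epsilon\,\mathbb{I}(a_i \neq a_\star)$ of the episode ending (i.e.\ transitioning to $g$). On the complementary event the learner transitions to $f$ and deterministically pays a unit cost before returning to $s_0$. So the total accumulated cost in episode $k$ equals the number of returns through $f$, which is exactly $N_k - 1$, matching the remark preceding the lemma.

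First, I would set up the filtration $\mathcal{F}_i$ representing history up to (and including) the choice of $a_i$ at the $i$-th visit to $s_0$. Let $X_i$ be the indicator that the episode ends on visit $i$. Then on the event $\{N_k \geq i\}$ we have $\mathbb{E}[X_i \mid \mathcal{F}_i] = Y_i$, so $\{X_i - Y_i\}$ forms a bounded martingale difference sequence when indexed by $i \wedge N_k$. The stopping time $N_k$ has finite expectation (since each visit ends the episode with probability at least $\frac{1}{D} - \epsilon > 0$, making $N_k$ stochastically dominated by a geometric random variable), so optional stopping applies and yields
\[
\mathbb{E}\Bigl[\sum_{i=1}^{N_k}(X_i - Y_i)\Bigr] = 0.
\]
Since $\sum_{i=1}^{N_k} X_i = 1$ deterministically (the episode ends exactly once), we obtain $\mathbb{E}\bigl[\sum_{i=1}^{N_k} Y_i\bigr] = 1$.

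Second, I would unpack the right-hand side. Because $\sum_{i=1}^{N_k}\mathbb{I}(a_i \neq a_\star) = N_k - N_k^\star$,
\[
1 = \mathbb{E}\Bigl[\tfrac{N_k}{D} - \epsilon(N_k - N_k^\star)\Bigr],
\]
which rearranges directly into a linear relation between $\mathbb{E}[N_k]$ and $\mathbb{E}[N_k - N_k^\star]$. Combining with the easy calculation that $J^{\pi^\star} = D - 1$ (the optimal policy always plays $a_\star$, making $N_k$ geometric with parameter $1/D$, and total cost $= N_k - 1$), the claimed identity follows after matching the normalisation convention used for $\epsilon$ in the statement (as written, the identity corresponds to $\epsilon$ being defined relative to $1/D$ rather than absolutely; either choice produces the same qualitative conclusion).

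The only step that requires care is the optional-stopping argument: one must check that $N_k$ has finite expectation uniformly in the learner's policy and that the martingale differences are bounded. Both hold because each visit to $s_0$ terminates with probability at least $1/D - \epsilon$ independently of history, so $N_k$ is stochastically dominated by a $\mathrm{Geom}(1/D - \epsilon)$ random variable. Everything else is an elementary rearrangement, and this lemma will then plug into the usual lower-bound argument (summing over $k$, bounding $\mathbb{E}[N_k - N_k^\star]$ from below in terms of information-theoretic quantities over the choice of $a_\star$) to produce the $\Omega(D\sqrt{SAK})$ regret bound of Theorem~\ref{thm:lb:unknown-trans-ssp}.
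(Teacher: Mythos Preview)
Your Wald/optional-stopping argument is correct. The paper instead writes $\mathbb{E}[N_k]=\sum_{t\ge 1}P[s_t=s_0]$ and decomposes each term for $t\ge 2$ according to whether the action at the previous visit was $a_\star$ or not, obtaining a one-step recursion that rearranges to the claim. Your route is equivalent in spirit---the paper's conditioning on the previous action is exactly computing the predictable compensator $Y_i$---but is cleaner in that it makes the stopping-time integrability check explicit and avoids the index bookkeeping. You also correctly identify the parametrization issue: the lemma as stated (without the extra factor of $D$) requires $P(g\mid s_0,a)=(1-\epsilon\,\mathbb{I}(a\ne a_\star))/D$ rather than $1/D-\epsilon\,\mathbb{I}(a\ne a_\star)$, and indeed the paper's own proof line $\bigl(1-(1-\epsilon)/D\bigr)$ as well as its later multi-state construction use the former convention, so the textual description of the transitions appears to carry a typo.
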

\begin{proof}
\begin{align*}
\EE{N_k} &= \sum_{t=1}^\infty P[s_t = s_0]\\
&= 1 + \sum_{t=2}^\infty P[s_t = s_0]\\
&= 1 + \sum_{t=2}^\infty P[s_t = s_0\,|\,s_{t+1} = s_0, a_{t-1} = a_\star]P[s_{t+1} = s_0, a_{t-1} = a_\star]\\
& + \sum_{t=2}^\infty P[s_t = s_0\,|\,s_{t+1} = s_0, a_{t-1} \neq a_\star]P[s_{t+1} = s_0, a_{t-1} \neq a_\star]\\
&= 1 + \brac{1 - \frac{1}{D}}\sum_{t=2}^\infty P[s_{t+1} = s_0, a_{t-1} = a_\star] + \brac{1-\frac{1 - \epsilon}{D}}\sum_{t=2}^\infty P[s_{t+1} = s_0, a_{t-1} \neq a_\star]
\end{align*}
Rearranging gives:
\begin{align*}
    \EE{N_k} - D = \epsilon\EE{N_k - N_k^\star}
\end{align*}
Adding and subtracting 1 gives the desired result.
\end{proof}

Hence:
\begin{align*}
    \EE{R_K} &= \sumkK \sum_{i = 1}^{I_k} c_k(s^i_k,a^i_k) - \sumkK J^{\pi_\star}(s_0) = \EE{N_k} - 1 - J^{\pi_\star} = \epsilon[N - N^\star]
\end{align*}

where $N = \sumkK N_k$ and $N^\star = \sumkK N_k^\star$.
Since we recovered Lemma C.1 in \cite{rosenberg2020near} as the starting block of the proof, following the derivation we can lower bound $N$ in expectation and upper bound the expected value of $N^\star$ to retrieve 
\begin{lemma}[Theorem C.4 in \cite{rosenberg2020near}]\label{lemma:c4near-optimal}
    Suppose that $D\ge 2$, $\epsilon \in \brac{0,1/8}$ and $A > 16$, for the problem described above we have:
    \[
        \EE{R_K} \ge \epsilon KD\brac{\frac{1}{8} - 2\epsilon\sqrt{\frac{2K}{A}}}
    \]
\end{lemma}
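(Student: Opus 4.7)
The plan is to follow the argument of \cite[Theorem C.4]{rosenberg2020near} essentially verbatim, observing that the sparse modification introduced in this appendix only affects the cost structure (consolidating all cost onto the single self-loop at $f$) and leaves the transition dynamics from $s_0$ unchanged. Since the identity $\E[R_K]=\epsilon\,\E[N-N^\star]$ established in the preceding lemma has exactly the same form as in the non-sparse original, the information-theoretic argument transfers directly, and it suffices to lower bound $\E[N]$ and upper bound $\E[N^\star]$.

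For the first bound, I would observe that since any action from $s_0$ transitions to $g$ with probability at most $1/D$ (attained only by $a_\star$), the per-episode count $N_k$ stochastically dominates $\mathrm{Geom}(1/D)$, giving $\E[N]\ge KD$ after summing over the $K$ episodes. For the second bound, I would introduce a reference measure $\pr_0$ in which every action from $s_0$ transitions to $g$ with probability $1/D-\epsilon$, and let $N_a$ denote the total number of times action $a$ is chosen from $s_0$. Under $\pr_0$ the system is fully action-symmetric, so $\sum_a \E_0[N_a]=\E_0[N]$ is of order $KD$ and hence $\E_0[N_a]\lesssim KD/A$ on average. A Pinsker-type change-of-measure between $\pr_0$ and the environment $\pr_a$ with $a_\star=a$, combined with the chain rule for KL over independent draws and the estimate $\mathrm{KL}(\ber(1/D-\epsilon),\ber(1/D))=O(\epsilon^2 D)$, then yields $\E_a[N_a]-\E_0[N_a]\le O\!\brac{KD\sqrt{\E_0[N_a]\,\epsilon^2 D}}$. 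Averaging over $a$ uniformly in $[A]$ and applying Cauchy--Schwarz to $\sum_a\sqrt{\E_0[N_a]}\le \sqrt{A\sum_a\E_0[N_a]}$ produces $\frac{1}{A}\sum_a\E_a[N^\star]\le \frac{KD}{A}+O\!\brac{\epsilon KD\sqrt{2K/A}}$. Since the worst-case $a_\star$ dominates the average, plugging into the decomposition and using $A\ge 16$ to absorb the $1/A$ term into the constant delivers the claimed $\epsilon KD\brac{\frac{1}{8}-2\epsilon\sqrt{2K/A}}$ bound.

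The hard part will be making the change-of-measure step fully rigorous given random, unbounded episode lengths: the $N_a$ are not a priori bounded, so the standard ``bounded-range times total variation'' lemma does not directly apply. I would resolve this either by truncating episodes at a cut-off of order $KD$ and controlling the tail via Markov's inequality on the geometric sums, or by invoking the KL identity for stopping times; both preserve the shape of the bound because the dynamics are geometric. A secondary subtlety will be tracking the absolute constants in the Bernoulli KL carefully enough to recover the explicit $\frac{1}{8}-2\epsilon\sqrt{2K/A}$ form rather than a loose $O(\cdot)$ factor.
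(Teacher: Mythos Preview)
Your proposal is correct and takes essentially the same approach as the paper: the paper does not reprove this lemma in detail but simply observes that, once the decomposition $\E[R_K]=\epsilon\,\E[N-N^\star]$ (the adapted Lemma~C.1) is established for the sparse construction, the remaining steps from \cite{rosenberg2020near}---lower bounding $\E[N]$ and upper bounding $\E[N^\star]$ via the information-theoretic change-of-measure argument you outline---carry over unchanged. Your sketch is in fact more explicit than the paper's own treatment, which defers entirely to the cited reference.
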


Now consider the following MDP. Let $\cS$ be the set of states disregarding $g$ and $s_0$. The initial state $s_0$ has only one action which leads uniformly at random into one of the states $s \in \cS$, where each one has its own optimal action $a^\star_s$. Then the transition distributions are defined $P(g \,|\, a^\star_s, s) = 1/D, P(s \,|\, a^\star_s, s) = 1-1/D,$ and $P (g \,|\, a, s) = (1- \epsilon)/D$,  $P (s \,|\, a, s) = 1 - (1 - \epsilon)/D$ for any other action $a \in \cA\setminus \{a^\star_s\}$.  Note that for each state, the learner is faced with a simple problem as the one described above. Therefore, we can apply \Cref{lemma:c4near-optimal} for each state separately and lower bound the learner’s expected regret the sum of the regrets suffered at each state, which would depend on the number of times each state $s \in \cS$ is visited from the initial state. Since reaching each state has uniform probability, there are many states (constant fraction) that are chosen $\Theta(K/S)$ times. Summing the regret bounds and choosing $\epsilon$, gives the desired bound.

Denote by $K_s$ the number of episodes the state $s \in \cS$ is visited:
\begin{align*}
    \EE{R_K} \ge \sum_{s \in \cS}\EE{\epsilon K_s D\brac{\frac{1}{8} - 2\epsilon\sqrt{\frac{2K_s}{A}}}} = \frac{\epsilon KD}{8} - 2\epsilon^2D\sqrt{\frac{2}{A}}\sum_{s \in \cS}\EE{K_s^{3/2}} 
\end{align*}

Then:
\begin{align*}
    \sum_{s \in \cS}\EE{K_s^{3/2}} &\le \sum_{s \in \cS}\sqrt{\EE{K_s}} \sqrt{\EE{K_s^{2}}} = \sum_{s \in \cS}\sqrt{\EE{K_s}} \sqrt{\EE{K_s^{2}} + \VV{K_s}}
    = \sum_{s \in \cS}\sqrt{\frac{K}{S}}\sqrt{\frac{K^2}{S^2} + \frac{K(S-1)}{S^2}} \le K\sqrt{\frac{2K}{S}}
\end{align*}

Leading to:
\begin{align*}
    \EE{R_K} \ge \frac{\epsilon KD}{8} - 2\epsilon^2DK\sqrt{\frac{2K}{SA}} \ge \frac{1}{1024}D\sqrt{SAK}
\end{align*}
for $\epsilon = 1/64\sqrt{SA/K}$ $K \ge SA$, concluding the proof.
\end{proof}
\section{Lower bound on the maximum of asymmetric zero-mean random walks}\label{app:LB_max_exp_RW}

We extend the lower bound of \cite{orabona2015optimal} to asymmetric zero-mean random-walks. We consider $p \geq 1/2$ because it simplifies the proof below (lower-bounding $\psi$ by 1 and upper-bounding $C$ in proof below) and is what we need in the proof of Section \ref{thm:sparse_full_info_LB} in Appendix \ref{app:proof_sparse_full_info_LB} (we use $p = 1 - D/2T^\star$).
    
\begin{theorem}\label{thm:max_zero_mean_RW}
    Fix $p \in [\frac{1}{2}, 1-\frac{1}{n}]$. Consider random walks $Z_i^{(n)} = \sum_{t=1}^n X^i_t$, where
    \begin{align*}
        X^i_t = \begin{cases}
            - p, & \quad \text{ w.p.\ } 1-p \\
            1-p, & \quad \text{ w.p.\ } p.
        \end{cases}
    \end{align*}
    If $n \geq 200 \frac{p}{1-p} \log d$ (also ensures that $p \leq 1 - \frac{1}{n}$) and $d \geq 100$. Then,
    \begin{align*}
        \E\bsb{\max_{1\leq i \leq d} Z_i^{n}} \geq 0.02 \sqrt{np(1-p) \log d} - 1.5.
    \end{align*}
\end{theorem}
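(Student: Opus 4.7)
The plan is to follow the high-level strategy of the symmetric case from Orabona--P\'al, but to replace the symmetric random-walk tail estimate with an asymmetric binomial moderate-deviation lower bound. Writing $Z_i^{(n)} = N_i - np$ where $N_i \sim \mathrm{Bin}(n,p)$ are i.i.d., I would choose a threshold $t = \alpha\sqrt{np(1-p)\log d}$ with an absolute constant $\alpha$ small enough that $\alpha^2/2 < 1$ (e.g.\ $\alpha$ on the order of $0.05$). The standard truncation identity
\begin{equation*}
\E\!\left[\max_i Z_i^{(n)}\right] \;\geq\; t\cdot P\!\left(\max_i Z_i^{(n)} \geq t\right) \;-\; (np)\cdot P\!\left(\max_i Z_i^{(n)} < t\right)
\end{equation*}
reduces everything to showing that $\delta := P(\max_i Z_i^{(n)} < t)$ is super-polynomially small in $d$, so that $t(1-\delta)$ is close to $t$ and the error $np\cdot\delta$ can be absorbed into the additive $-1.5$ constant.

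The key ingredient is a sharp lower bound on the single-variable binomial upper tail of the form
\begin{equation*}
P\!\left(N_1 - np \geq t\right) \;\geq\; \frac{c_0}{\sqrt{np(1-p)}}\exp\!\left(-\frac{t^2}{2np(1-p)} - R\right),
\end{equation*}
where the remainder $R$ is the skewness correction and is of order $t^3/(np(1-p))^2$ (summed over a window of width $\sqrt{np(1-p)}$). I would prove this by a direct Stirling-style argument at the binomial mode: approximate $\binom{n}{k}p^k(1-p)^{n-k}$ for $k$ in a short window above $\lceil np \rceil$, expand the log-ratio to that term as a quadratic plus cubic remainder, and sum over the window to extract a Gaussian-like lower bound. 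The hypothesis $n \geq 200\tfrac{p}{1-p}\log d$ is used precisely here: it forces $t \ll np(1-p)$ and hence $R = O(1)$, uniformly in $p \in [1/2, 1 - 1/n]$, so the tail estimate does not degrade as $p \to 1$. Plugging in $t = \alpha\sqrt{np(1-p)\log d}$ then gives $P(Z_1^{(n)} \geq t) \geq c_1 d^{-\alpha^2/2}/\sqrt{\log d}$.

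By independence and $(1-x)^d \leq e^{-dx}$, the failure probability satisfies $\delta \leq \exp\!\left(-c_1 d^{1-\alpha^2/2}/\sqrt{\log d}\right)$, which under $d \geq 100$ and $\alpha^2/2 < 1$ is small enough that $np\cdot\delta$ is bounded by a small absolute constant even when $n$ is large (it decays super-polynomially in $d$, while $np \leq n$ grows only polynomially under any reasonable scaling implicit in the regret application). Tracking the numerical constants carefully and tuning $\alpha$ to balance the prefactor against the loss from the $(1-\delta)$ factor produces the stated leading coefficient $0.02$ and the additive $-1.5$. The main obstacle I anticipate is the binomial tail lower bound itself: Slud's inequality in its classical form requires $p \leq 1/2$ and Berry--Esseen gives only a two-sided approximation whose error dominates the tail probability we are trying to lower-bound. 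Hence the Stirling route, while conceptually routine, must be executed with explicit small numerical constants and with careful verification that the cubic correction $R$ is truly $O(1)$ under the stated condition on $n$ for all $p \in [1/2, 1 - 1/n]$.
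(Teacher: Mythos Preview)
Your overall strategy matches the paper's closely: both approaches write $Z_i^{(n)}=N_i-np$ with $N_i\sim\mathrm{Bin}(n,p)$, lower-bound the binomial upper tail via a Stirling-type expansion controlled by the condition $n\ge 200\frac{p}{1-p}\log d$, and then use independence and $(1-x)^d\le e^{-dx}$ to show the good event $A=\{\max_i Z_i^{(n)}\ge t\}$ has probability at least $1-g(d)$ for a function $g$ depending only on $d$.

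However, your handling of the complementary event $A^C$ has a genuine gap. In your truncation
\[
\E\Bigl[\max_i Z_i^{(n)}\Bigr]\ \ge\ t\cdot\pr(A)\ -\ np\cdot\pr(A^C),
\]
the term $np\cdot\delta$ with $\delta=\pr(A^C)$ is \emph{not} bounded by an absolute constant. Your $\delta$ depends only on $d$ (it is of the form $\exp(-c_1 d^{1-\alpha^2/2}/\sqrt{\log d})$), so for each fixed $d\ge 100$ it is a fixed positive number; but the theorem is stated for \emph{all} $n\ge 200\frac{p}{1-p}\log d$ with no upper bound on $n$. Hence $np\cdot\delta$ grows linearly in $n$ and cannot be absorbed into the additive $-1.5$. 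The parenthetical remark that ``$np\le n$ grows only polynomially under any reasonable scaling implicit in the regret application'' is exactly where the argument breaks: the theorem does not assume any such scaling.

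The paper avoids this by replacing the crude lower bound $\max_i Z_i^{(n)}\ge -np$ on $A^C$ with a much sharper conditional bound. It observes that on $A^C$ one has $\max_i Z_i^{(n)}\ge Z_1^{(n)}$, and by independence $\E[Z_1^{(n)}\mid A^C]=\E[Z_1^{(n)}\mid Z_1^{(n)}<t']\ge\E[Z_1^{(n)}\mid Z_1^{(n)}\le 0]$. It then invokes a result on the conditional binomial mean (Pelekis together with a bound of Doerr) to obtain
\[
\E\bigl[Z_1^{(n)}\mid Z_1^{(n)}\le 0\bigr]\ \ge\ -\tfrac{10}{9}\sqrt{np(1-p)}.
\]
This is of the \emph{same order} as the target $\sqrt{np(1-p)\log d}$, so the negative contribution becomes $-\tfrac{10}{9}\sqrt{np(1-p)}\cdot g(d)$, which is dominated by the positive term once one checks $g(d)/\sqrt{\log d}$ is small enough for $d\ge 100$. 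If you replace your $-np$ by this conditional-mean estimate, the rest of your outline goes through essentially as in the paper.
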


\begin{proof}
    We follow the same lines as \cite{orabona2015optimal} who show a special case of the result for $p = 1/2$. We generalize it to $p > 1/2$.

    Consider $Z^{(n)} = \sum_{t=1}^n X_t$, a random-walk of length $n$, then $B_n = Z^{(n)} + pn \sim B(n,p)$, Binomial distribution with parameters $n$ and $p$.

    \subsection{1st part of the proof:}

    The 1st part of the proof is all about providing a lower bound on $\pr \brb{B_n \geq pn + t - 1}$ in (\ref{eq:cor6_orabona2015optimal}) for any $t \in [1, np + 1]$.

    \begin{lemma}[Generalized version of Lemma 4 of \cite{orabona2015optimal}, Theorem 2 of \cite{mckay1989littlewood}]\label{lemma:lemma4_orabona2015optimal}
        Let $n, k$ be integers satisfying $n \geq 1$ and $pn \leq k \leq n$. Define $x = \frac{k - pn}{\sqrt{p(1-p) n}}$. Then $B_n \sim B(n,p)$ satisfies
        \begin{align*}
            \pr \brb{B_n \geq k} \geq \sqrt{n} \binom{n-1}{k-1} p^{k-1/2} (1-p)^{n-k + 1/2} \cdot \frac{1 - \Phi(x)}{\phi(x)},
        \end{align*}
        where $\phi(x)$ is the PDF of a standard Normal and $\Phi(x)$ is the CDF. The proof can be found in \cite{mckay1989littlewood}.
    \end{lemma}

    Denote $D(p,q) = p \log\frac{p}{q} + (1-p) \log \frac{1-p}{1-q}$ as the KL-divergence between two Bernoullis.

    \begin{lemma}[Generalized version of Theorem 5 of \cite{orabona2015optimal}]\label{lemma:Thm5_orabona2015optimal}
        Let $n,k$ be integers satisfying $n \geq 1$, and $np \leq k \leq n$. Define $x = \frac{k - pn}{\sqrt{p(1-p) n}}$. Then $B_n \sim B(n,p)$ satisfies
        \begin{align*}
            \pr \brb{B_n \geq k} \geq \frac{\exp \Brb{- n D\brb{\frac{k}{n}, p}}}{e^{1/6} \sqrt{2\pi}} \cdot \frac{1-\Phi(x)}{\phi(x)}.
        \end{align*}
    \end{lemma}    

    \begin{proof}
        For $k = n$, we verify the statement of the theorem directly. The left hand side is $\pr \brb{B_n \geq n} = p^n$. The right hand side is smaller because $\exp \Bcb{- n D\Brb{1,p}} = p^n$ and for $x = \sqrt{n \frac{1-p}{p}} > 0$, we have $\frac{1-\Phi(x)}{\phi(x)} \leq \sqrt{2}$ (see e.g. Section 3.3 in \cite{gasull2014approximating}).
        
        For $np \leq k < n$, we first bound the binomial coefficient $\binom{n}{k}$. Stirling’s formula for the factorial \cite{robbins1955remark} gives for any $n \geq 1$,
        \begin{align*}
            \sqrt{2\pi n} \Brb{\frac{n}{e}}^n < n! < e^{1/12} \sqrt{2\pi n} \Brb{\frac{n}{e}}^n.
        \end{align*}
        Since $0 < np \leq k \leq n-1$, we can use this approximation for $k, n$ and $n-k$ and obtain
        \begin{align*}
            \binom{n}{k} & = \frac{n!}{k! (n-k)!} \\
            & > \frac{n^n e^{-n} \sqrt{2 \pi n} }{(e^{1/12}k^k e^{-k} \sqrt{2 \pi k} ) \cdot (e^{1/12} (n-k)^{n-k} e^{-(n-k)} \sqrt{2 \pi (n-k)} )} \\
            & = \frac{1}{e^{1/6} \sqrt{2\pi}}  \Brb{\frac{n}{k}}^k \Brb{\frac{n}{n-k}}^{n-k}\sqrt{\frac{n}{k(n-k)}}  \\
            & = \frac{1}{e^{1/6} \sqrt{2\pi}} \frac{1}{p^k (1-p)^{n-k}} \exp \Bcb{- n D\Brb{\frac{k}{n},p}} \sqrt{\frac{n}{k(n-k)}},
        \end{align*}
        since
        \begin{align*}
            & D\Brb{\frac{k}{n},p} = \frac{k}{n} \log \Brb{\frac{k}{np}} + \Brb{1 - \frac{k}{n}} \log \Brb{\frac{1-k/n}{1-p}} = -\frac{k}{n} \log \Brb{\frac{np}{k}} - \frac{n-k}{n} \log \Brb{\frac{n(1-p)}{n-k}} \\
            \implies & \exp \Bcb{- n D\Brb{\frac{k}{n},p}} = \Brb{\frac{np}{k}}^{k} \cdot \Brb{\frac{n(1-p)}{n-k}}^{n-k} = p^k (1-p)^{n-k} \Brb{\frac{n}{k}}^k \Brb{\frac{n}{n-k}}^{n-k}.
        \end{align*}

        Since $k \geq 1$, we can write the binomial coefficient as $\binom{n-1}{k-1} = \frac{k}{n} \binom{n}{k}$. 
        By Lemma \ref{lemma:lemma4_orabona2015optimal}, we have
        \begin{align*}
            \pr \brb{B_n \geq k} & \geq \sqrt{n} \binom{n-1}{k-1} p^{k-1/2} (1-p)^{n-k + 1/2} \cdot \frac{1 - \Phi(x)}{\phi(x)} \\
            & = \sqrt{n} \frac{k}{n} \binom{n}{k} p^{k-1/2} (1-p)^{n-k + 1/2} \cdot \frac{1 - \Phi(x)}{\phi(x)} \\
            & \geq \frac{1}{e^{1/6} \sqrt{2\pi}} \frac{k}{\sqrt{n}} \sqrt{\frac{n}{k(n-k)}}  \frac{p^{k-1/2} (1-p)^{n-k + 1/2}}{p^k (1-p)^{n-k}} \exp \Bcb{- n D\Brb{\frac{k}{n},p}} \cdot \frac{1 - \Phi(x)}{\phi(x)} \\
            & = \frac{1}{e^{1/6} \sqrt{2\pi}} \sqrt{\frac{k}{n-k}} \cdot \sqrt{\frac{1-p}{p}} \exp \Bcb{- n D\Brb{\frac{k}{n},p}} \cdot \frac{1 - \Phi(x)}{\phi(x)}.
        \end{align*}
        The result follows from $\sqrt{\frac{k}{n-k}} \geq \sqrt{\frac{np}{n - np}} = \sqrt{\frac{p}{1-p}}$ for $np \leq k \leq n-1$.
    \end{proof}

    For $k = pn + xn$, the 2nd-order Taylor approximation of $u(x) = D\brb{\frac{k}{n}, p} = D\brb{p+x, p}$ around $0$ is $\frac{x^2}{2p(1-p)}$. We define $\psi: \bsb{-p, 1-p} \rightarrow \cR$ as the ratio of the divergence and the approximation:
    \begin{align*}
        \psi(x) = D(p+x, p) \cdot \frac{2p(1-p)}{x^2}.
    \end{align*}
    In particular, we have that $1 \leq \psi(x) \leq \frac{p(1-p)}{(x+p)(1-p-x)}$ for $x \in [0,1-p]$. This can be shown using Taylor's theorem on $u(x)$: for some $z \in [0,x]$,
    \begin{align}
        & D(p+x,p) = \frac{u^{(2)}(z)}{2}x^2 = \Brb{\frac{1}{z+p} + \frac{1}{1-p-z}} \frac{x^2}{2} = \frac{x^2}{2 (z+p)(1-p-z)} \nonumber \\
        \implies & \frac{x^2}{2p(1-p)} \leq D(p+x, x) \leq \frac{x^2}{2 (x+p)(1-p-x)}, \label{eq:2nd_order_taylor_bound}
    \end{align}
    since $\frac{1}{(x+p)(1-p - x)}$ is increasing on $[0,1-p)$.

    Let $t \in [1, np + 1]$ be a real number. By Lemma \ref{lemma:Thm5_orabona2015optimal} and Lemma 1 in \cite{orabona2015optimal} (also Mill’s ratio for standard Gaussian \cite{boyd1959inequalities}), we have
        \begin{align}
            \pr \brb{B_n \geq pn + t - 1} & = \pr \brb{B_n \geq \lceil pn + t - 1 \rceil} \nonumber \\
            & \geq \frac{\exp \Brb{- n D\brb{\frac{\lceil pn + t - 1 \rceil}{n}, p}}}{e^{1/6} \sqrt{2\pi}} \cdot \frac{\pi}{\pi \frac{\lceil pn + t - 1 \rceil - np}{\sqrt{p(1-p) n}} + \sqrt{2\pi}} \nonumber \\
            & \geq \frac{\exp \Brb{- n D\brb{\frac{pn + t}{n}, p}}}{e^{1/6} \sqrt{2\pi}} \cdot \frac{\pi}{\pi \frac{pn + t - np}{\sqrt{p(1-p) n}} + \sqrt{2\pi}} \nonumber \\
            & = \frac{\exp \Brb{- n D\brb{p + \frac{t}{n}, p}}}{e^{1/6} \sqrt{2\pi}} \cdot \frac{\pi}{\pi \frac{t}{\sqrt{p(1-p) n}} + \sqrt{2\pi}} \nonumber \\
            & = e^{-1/6} \exp \Brb{- \frac{1}{2p(1-p)} \psi\brb{\frac{t}{n}} \cdot \frac{t^2}{n} } \cdot \frac{1}{\frac{\sqrt{2\pi}t}{\sqrt{np(1-p)}} + 2} \label{eq:cor6_orabona2015optimal}.
        \end{align}

    \subsection{2nd part of the proof:}

    We can now turn to the actual proof of the result. Define the event $A$ equal to the case that at least one of the $Z_i^{(n)}$ is greater or equal to $C \sqrt{n p (1-p) \log d} - 1$. We will show this event / threshold controls the expectation of the maximum.
    First, we define $C$ and provide some upper and lower bounds for it. Denote by $f(d) = \sqrt{2 - \frac{2 \log \log d }{\log d}}$, then
    \begin{align}\label{eq:max_RW_result_def_of_C}
        C = C(d,n) = \frac{1}{\sqrt{\psi\Brb{\sqrt{\frac{2p(1-p)\log d}{n}}}}} \sqrt{2 - \frac{2 \log \log d}{\log d}} = \frac{1}{\sqrt{\psi\Brb{\sqrt{\frac{2p(1-p)\log d}{n}}}}} f(d).
    \end{align}
    We bound the two factors separately:
    \begin{itemize}
        \item $z = \sqrt{\frac{2p(1-p)\log d}{n}} \in [0, \frac{1}{10}(1-p)]$ for $n \geq 200 \frac{p}{1-p} \log d$ and so 
        \begin{align}\label{eq:max_RW_bound_psi}
            1 \leq \psi(z) \leq \frac{p(1-p)}{(z+p)(1-p-z)} \leq \frac{1-p}{(1-p-\frac{1-p}{10})} = \frac{10}{9}.
        \end{align}
        \item The function $f(d)$ is as in \cite{orabona2015optimal}: decreasing on $(1, e^e]$, increasing on $[e^e, +\infty)$, and $\lim_{d \rightarrow \infty} f(d) = \sqrt{2}$. 
        Therefore for all $d \in [5, \infty)$,
        \begin{align*}
            1.12 \leq f(e^e) \leq f(d) \leq \max\bcb{f(5), \sqrt{2}} = \sqrt{2}
        \end{align*}
    \end{itemize}
    This gives for $n \geq 200\frac{p}{1-p} \log d$,
    \begin{align}\label{eq:bounds_on_C}
        1 \leq \frac{1.12}{\sqrt{10/9}} \leq C(d,n) \leq \sqrt{2}
    \end{align}
    Since $p \geq 1/2$, if $n \geq 200 \frac{p}{1-p} \log d$, then $n > \frac{200}{p(1-p) \log d}$ (if $d \geq 8$) and $n \geq 200 \frac{1-p}{p} \log d$. The above implies:
    \begin{align}\label{eq:eq12_orabona_RWproof}
        1 < C \sqrt{np(1-p) \log d} \leq np \leq np + 1.
    \end{align}

    Finally, we bound the quantity of interest:
    \begin{align}
        \E\Bsb{\max_{1\leq i \leq d} Z_i^{n}} & = \E\Bsb{\max_{1\leq i \leq d} Z_i^{n} | A} \cdot \pr(A) + \E\Bsb{\max_{1\leq i \leq d} Z_i^{n} | A^C} \cdot \brb{1-\pr(A)} \nonumber  \\
        & \geq \E\Bsb{\max_{1\leq i \leq d} Z_i^{n} | A} \cdot \pr(A) + \E\Bsb{Z_1^{(n)} | A^C} \cdot \brb{1-\pr(A)} \nonumber  \\
        & = \E\Bsb{\max_{1\leq i \leq d} Z_i^{n} | A} \cdot \pr(A) + \E\Bsb{Z_1^{(n)} |Z_1^{(n)} < C \sqrt{n p (1-p) \log d} - 1} \cdot \brb{1-\pr(A)} \nonumber  \\
        & \geq \E\Bsb{\max_{1\leq i \leq d} Z_i^{n} | A} \cdot \pr(A) + \E\Bsb{Z_1^{(n)} |Z_1^{(n)} \leq 0} \cdot \brb{1-\pr(A)}  \quad \text{ by (\ref{eq:eq12_orabona_RWproof})} \nonumber \\
        & \geq \brb{C \sqrt{n p (1-p) \log d} - 1} \cdot \pr(A) + \E\Bsb{Z_1^{(n)} |Z_1^{(n)} \leq 0} \cdot \brb{1-\pr(A)}. \label{eq:max_exp_RW_bound_max_1}
    \end{align}

    \textbf{First, we lower bound} $\E\Bsb{Z_1^{(n)} |Z_1^{(n)} \leq 0}$. Let $\beta = \frac{1}{1 - \sqrt{\frac{2n}{\pi \lfloor n(1-p) \rfloor(n - \lfloor n(1-p) \rfloor)}}}$. For $n \geq \frac{200}{p(1-p)} \log d$, we have $n \geq \frac{205}{\pi p (1-p)} \geq \frac{200 + \pi p}{\pi (1-p) p}$ and $\beta \leq \frac{10}{9}$.

    Then Lemma 2.2 in \cite{pelekis2016lowerboundprobabilitybinomial} combined with Lemma 8 in \cite{doerr2018elementary} give that for $Y_n \sim B(n, 1-p)$:
    \begin{align*}
        \E\bsb{Y_n | Y_n \geq n(1-p)} < n(1-p) + \beta \sqrt{np(1-p)} < n(1-p) + \frac{10}{9} \sqrt{np(1-p)}.
    \end{align*}
    Since $B_n = Z^{(n)} + pn \sim B(n,p)$ can be written as $n - Y_n$, we have:
    \begin{align}
        \E\Bsb{Z_1^{(n)} |Z_1^{(n)} \leq 0} & = \E\Bsb{B_n |B_n \leq np} - np \nonumber \\
        & = \E\Bsb{n - Y_n |n - Y_n \leq np} - np \nonumber \\
        & = n - \E\Bsb{Y_n |Y_n \geq n(1-p)} - np \nonumber \\
        & \geq n - n(1-p) - \frac{10}{9}\sqrt{np(1-p)} - np \nonumber \\
        & = - \frac{10}{9}\sqrt{np(1-p)}.\label{eq:max_exp_RW_bound_max_2}
    \end{align}

    \textbf{Next, we lower-bound} $\pr(A)$:
    \begin{align*}
        \pr(A) & = 1 - \pr(A^C) \\
        & = 1 - \Brb{\pr\Bsb{Z_1^{(n)} < C \sqrt{np(1-p) \log d} - 1 }}^d \\
        & = 1 - \Brb{\pr\Bsb{B_n < np + C \sqrt{np(1-p) \log d} - 1 }}^d \\
        & = 1 - \Brb{1 - \pr\Bsb{B_n \geq np + C \sqrt{np(1-p) \log d} - 1 }}^d \\
        & \geq 1 - \exp \Brb{- d \cdot  \pr\Bsb{B_n \geq np + C \sqrt{np(1-p) \log d} - 1 }} \quad \text{ since } 1-x \leq e^{-x} \\
        & \geq 1 - \exp \Bbrb{- d \cdot \frac{e^{-1/6} \exp\Brb{- \frac{1}{2p(1-p)} \psi\brb{\frac{C \sqrt{np(1-p) \log d}}{n}} \cdot \frac{C^2 np(1-p) \log d}{n} }}{\frac{\sqrt{2\pi} C \sqrt{np(1-p) \log d}}{\sqrt{np(1-p)}} + 2} } \quad \text{ using (\ref{eq:cor6_orabona2015optimal}) and (\ref{eq:eq12_orabona_RWproof})}  \\
        & = 1 - \exp \Bbrb{- d \cdot \frac{e^{-1/6} \exp\Brb{- \frac{1}{2} \psi\brb{C \sqrt{p(1-p) \log d / n}} \cdot C^2 \log d }}{\sqrt{2\pi} C \sqrt{\log d} + 2} } \\
        & = 1 - \exp \Bbrb{- \frac{e^{-1/6} d^{ 1 - \frac{C^2}{2} \psi\brb{C \sqrt{p(1-p) \log d / n}} }}{\sqrt{2\pi} C \sqrt{\log d} + 2} } \\
        & \geq 1 - \exp \Bbrb{- \frac{e^{-1/6} d^{ 1 - \frac{C^2}{2} \psi\brb{\sqrt{\frac{2p(1-p)\log d}{n}}} }}{2 \sqrt{\pi \log d} + 2} } \text{ by (\ref{eq:bounds_on_C}) }.
    \end{align*}
    We now use that $d^{ 1 - \frac{C^2}{2} \psi\brb{\sqrt{2 p(1-p) \log d / n}} } = \log d$ by the definition of $C$ in (\ref{eq:max_RW_result_def_of_C}). Hence, we obtain:
    \begin{align}\label{eq:max_exp_RW_bound_max_3}
        \pr(A) & \geq 1 - \exp \Bbrb{- \frac{e^{-1/6} \log d}{2 \sqrt{\pi \log d} + 2} } = 1 - g(d), \quad \text{ for } g(d) = \exp \Bbrb{- \frac{e^{-1/6} \log d}{2 \sqrt{\pi \log d} + 2}}.
    \end{align}
    \textbf{Putting everything together:} we plug (\ref{eq:max_exp_RW_bound_max_2}) and (\ref{eq:max_exp_RW_bound_max_3}) into (\ref{eq:max_exp_RW_bound_max_1}) to get
    \begin{align*}
        \E\Bsb{\max_{1\leq i \leq d} Z_i^{n}} & \geq \brb{C \sqrt{n p (1-p) \log d} - 1} \cdot \brb{ 1 - g(d)} - \frac{10}{9} g(d) \sqrt{np(1-p)} \\
        & =  \frac{f(d) \cdot(1 - g(d))}{\sqrt{\psi\Brb{\sqrt{\frac{2p(1-p)\log d}{n}}}}} \brb{\sqrt{n p (1-p) \log d} - 2} - \frac{10}{9} g(d) \sqrt{np(1-p)} \quad \text{ using (\ref{eq:max_RW_result_def_of_C})} \\
        & \geq \frac{f(d) \brb{1 - g(d)}}{\sqrt{10/9}} \brb{\sqrt{n p (1-p) \log d} - 2} - \frac{10}{9} g(d) \sqrt{np(1-p)}  \quad \text{ using (\ref{eq:max_RW_bound_psi})} \\
        & = \sqrt{np(1-p) \log d} \cdot \Bbrb{\frac{f(d) \brb{1 - g(d)}}{\sqrt{10/9}} - \frac{10}{9} \cdot \frac{g(d)}{\sqrt{\log(d)}}} - 2  \frac{f(d) \brb{1 - g(d)}}{\sqrt{10/9}}  \\
        & \geq 0.02 \sqrt{np(1-p) \log d} - 1.5,
    \end{align*}
    for $d \geq 100$ (we also used that $\sqrt{np(1-p) \log d} > 2$ in the 2nd inequality). This gives the result.
\end{proof}

\end{document}